\DeclareRobustCommand\onedot{\futurelet\@let@token\@onedot}
\def\@onedot{\ifx\@let@token.\else.\null\fi\xspace}
\def\eg{\emph{e.g}\onedot} 
\def\ie{\emph{i.e}\onedot}
\def\eqref#1{(\ref{#1})}
\def\1{\bm{1}}
\def\vc{{\bm{c}}}
\def\vi{{\bm{i}}}
\def\vo{{\bm{o}}}
\DeclareMathAlphabet{\mathsfit}{\encodingdefault}{\sfdefault}{m}{sl}
\SetMathAlphabet{\mathsfit}{bold}{\encodingdefault}{\sfdefault}{bx}{n}
\theoremstyle{plain}
\newtheorem{theorem}{Theorem}[section]
\newtheorem{proposition}[theorem]{Proposition}
\newtheorem{lemma}[theorem]{Lemma}
\theoremstyle{definition}
\newtheorem{definition}[theorem]{Definition}
\newtheorem{assumption}[theorem]{Assumption}
\theoremstyle{remark}
\newtheorem{remark}[theorem]{Remark}
\title{SE(3)-bi-equivariant Transformers \\ for Point Cloud Assembly}
\author{%
Ziming Wang, Rebecka Jörnsten \\
  Department of Mathematical Sciences\\
  University of Gothenburg and Chalmers University of Technology\\
  \texttt{zimingwa@chalmers.se, jornsten@chalmers.se} \\
}
\begin{document}

\maketitle

\begin{abstract}
  Given a pair of point clouds,
  the goal of assembly is to recover a rigid transformation that aligns one point cloud to the other.
  This task is challenging because the point clouds may be non-overlapped,
  and they may have arbitrary initial positions. 
  To address these difficulties,
  we propose a method, 
  called $SE(3)$-bi-equivariant transformer (BITR), 
  based on the $SE(3)$-bi-equivariance prior of the task:
  it guarantees that when the inputs are rigidly perturbed,
  the output will transform accordingly.
  Due to its equivariance property,
  BITR can not only handle non-overlapped PCs,
  but also  guarantee robustness against initial positions.
  Specifically,
  BITR first extracts features of the inputs using a novel $SE(3) \times SE(3)$-transformer,
  and then projects the learned feature to group $SE(3)$ as the output.
  Moreover,
  we theoretically show that swap and scale equivariances can be incorporated into BITR,
  thus it further guarantees stable performance under scaling and swapping the inputs.
    We experimentally show the effectiveness of BITR in practical tasks.
\end{abstract}

\section{Introduction}
Point cloud (PC) assembly is a fundamental machine learning task with a wide range of applications such as biology~\cite{gainza2023novo}, archeology~\citep{Villegas-Suarez_2023_ICCV}, 
robotics~\citep{ryu2022equivariant, pan2023tax} and computer vision~\citep{qin2022geometric}.
As shown in Fig.~\ref{Assembly-examples},
given a pair of 3-D PCs representing two shapes,
\ie, a source and a reference PC,
the goal of assembly is to find a rigid transformation, 
so that the transformed source PC is aligned to the reference PC.
This task is challenging because the input PCs may have random initial positions that are far from the optimum,
and may be non-overlapped,
\eg, due to occlusion or erosion of the object.

  \begin{figure}[bh]
    \centering
    \subfigure[Assembly of overlapped PCs]{
        \begin{minipage}[b]{0.4\linewidth}
            \centering
            \includegraphics[width=0.9\linewidth]{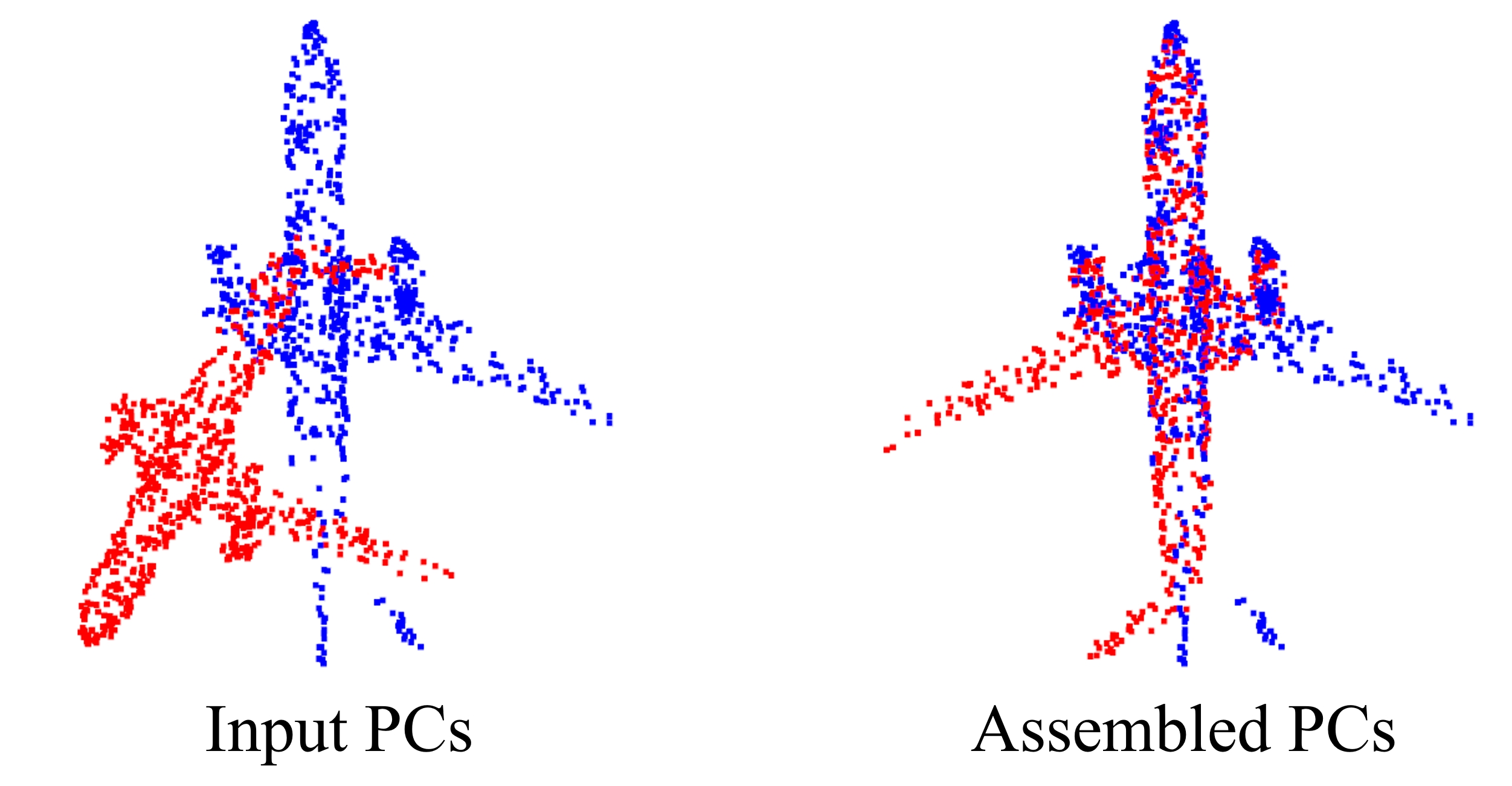}
        \end{minipage}
        \label{fig1-alignment}
    }
    \subfigure[Assembly of non-overlapped PCs]{
        \begin{minipage}[b]{0.41\linewidth}
            \centering
            \includegraphics[width=0.9\linewidth]{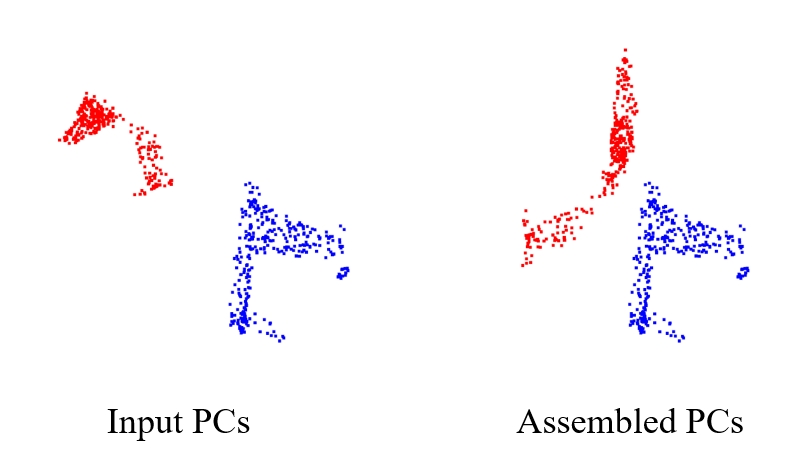} 
        \end{minipage}
        \label{fig1-assembly}
    }
  \caption{
    Two examples of PC assembly.
  Given a pair of PCs,
  the proposed method BITR transforms the source PC (red) to align the reference PC (blue).
  The input PCs may be overlapped~\subref{fig1-alignment} or non-overlapped~\subref{fig1-assembly}.
  }
  \label{Assembly-examples}
  \end{figure}
Most of existing assembly methods are correspondence-based~\citep{arun1987least, qin2022geometric, besl1992method}:
they use the fact that when the input PCs are aligned,
the points corresponding to the same physical position should be close to each other.
For example,
in Fig.~\ref{fig1-alignment},
the points at the head of the airplane in the source and reference PCs should be close in the aligned PC.
Specifically,
these methods first estimate the correspondence between PCs based on feature similarity or distance,
and then compute the transformation by matching the estimated corresponding point pairs.
As a result,
these methods generally have difficulty handling PCs with no correspondence,
\ie, non-overlapped PCs, such as Fig.~\ref{fig1-assembly}.
In addition,
they are often sensitive to the initial positions of PCs.

To address these difficulties,
we propose a method,
called $SE(3)$-bi-equivariant transformer (BITR), 
based on the $SE(3)$-bi-equivariance prior of the task:
when the input PCs are perturbed by rigid transformations,
the output should transform accordingly.
A formal definition of $SE(3)$-bi-equivariance can be found in Def.~\ref{equivariance_def}.
Our motivation for using the $SE(3)$-bi-equivariance prior is threefold:
First,
the strong training guidance provided by symmetry priors is known to lead to large performance gain and high data efficiency.
For example,
networks with a translation-equivariance prior, 
\ie, convolutional neural networks (CNNs), 
are known to excel at image segmentation~\citep{long2015fully}.
Thus,
$SE(3)$-bi-equivariance prior should lead to similar practical benefits in PC assembly tasks.
Second,
$SE(3)$-bi-equivariant methods are theoretically guaranteed to be ``global'',
\ie, their performances are independent of the initial positions.
Third,
the $SE(3)$-bi-equivariance prior does not rely on correspondence,
\ie, it can be used to handle PCs with no correspondence.

\begin{wrapfigure}{r}{.51\textwidth}
    \vspace{-3mm}
    \begin{minipage}{\linewidth}
        \centering
        \includegraphics[width=0.95\linewidth]{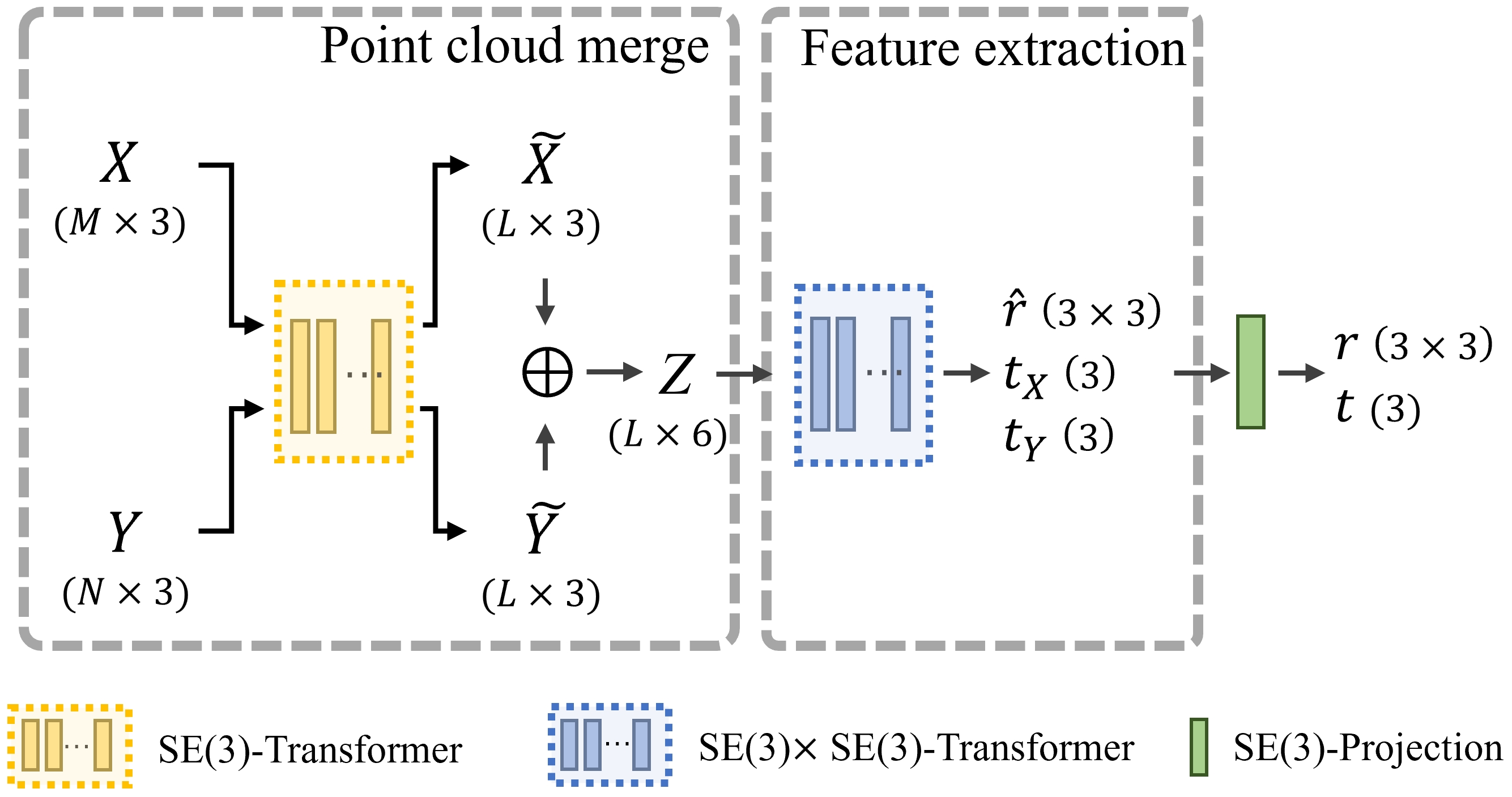}
    \end{minipage}
\caption{
    An overview of BITR.
    The input 3-D PCs $X$ and $Y$ are first merged into a 6-D PC $Z$ by concatenating the extracted key points $\tilde{X}$ and $\tilde{Y}$.
    Then,
    $Z$ is fed into a $SE(3)\times SE(3)$-transformer to obtain equivariant features $\hat{r}$, $t_X$ and $t_Y$. 
    These features are finally projected to $SE(3)$ as the output.
}
\vspace{-3mm}
\label{BITR-model}
\end{wrapfigure}
Specifically,
the proposed BITR is an end-to-end trainable model consisting of two steps:
it first extracts $SO(3) \times SO(3)$-equivariant features from the input PCs,
and then obtains a rigid transformation by projecting the features into $SE(3)$.
For the first step,
we define a $SE(3)\times SE(3)$-transformer acting on the 6-D merged PC by extending the $SE(3)$-transformer~\citep{fuchs2020se};
For the second step,
we use a $\textit{SVD}$-type projection inspired by Arun's method~\citep{arun1987least}.
In addition,
we theoretically show that scale-equivariance and swap-equivariance can be incorporated into BITR via weight constraining techniques,
which further guarantees that the performance is not influenced by scaling or swapping inputs.
An illustration of BITR is presented in Fig.~\ref{BITR-model}.

In summary,
the contribution of this work is as follows:
\begin{itemize}[leftmargin=4mm]
    \item [-] 
    We present a $SE(3)$-bi-equivariant PC assembly method, called BITR. \footnote{Code is available at: \url{https://github.com/wzm2256/BiTr}}
    BITR can assemble PCs without correspondence,
    and guarantees stable performance with arbitrary initial positions.
    In addition,
    the $SE(3) \times SE(3)$-transformer used in BITR is the first $SE(3) \times SE(3)$-equivariant steerable network to the best of our knowledge.
    \item [-] Theoretically, 
    we show that scale and swap equivariance can be incorporated in to BITR by weight-constraining,
    thus it guarantees stable performance under scaling and swapping the inputs.
    \item [-] We show experimentally that BITR can effectively assemble PCs in practical tasks.
\end{itemize}

\section{Related works}
A special case of PC assembly is PC registration,
where the correspondence between input PCs is assumed to exist.
A seminal work in this task was conducted by~\cite{arun1987least},
which provided a closed-form solution to the problem with known correspondence.
To handle PCs with unknown correspondence,
most of the subsequent works extend~\cite{arun1987least} by first estimating the correspondence by comparing distances~\citep{besl1992method}, 
or features~\citep{rusu2009fast, qin2022geometric, yew2020rpm} of the PCs,
and then aligning the PCs by aligning the estimated corresponding points.
Notably,
to obtain $SE(3)$-bi-equivariance,
$SO(3)$-invariant features~\cite{yu2024riga, deng2018ppf, yu2023rotation} have been investigated for correspondence estimation.
However,
since these methods require a sufficient number of correspondences, 
they have difficulty handling PCs where the correspondence does not exist.
In addition,
they often have difficulty handling PCs with large initial errors~\citep{zhang1994iterative}.

The proposed BITR is related to the existing registration methods because it can be seen as a generalization of Arun's method~\citep{arun1987least}.
However,
in contrast to these methods,
BITR is correspondence-free, 
\ie, it is capable of handling PCs with no correspondence.
In addition,
it theoretically guarantees stable performance under arbitrary initial position.

Recently,
some works have been devoted to a new PC assembly task, 
called fragment reassembly,
whose goal is to reconstruct a complete shape from two fragments.
Unlike registration task,
this task generally does not assume the existence of correspondence.
\cite{chen2022neural} first studied this task, 
and they addressed it as a pose estimation problem,
where the pose of each fragment relative to a canonical pose is estimated via a regression model.
\cite{wu2023leveraging} further improved this method by considering the $SE(3)$-equivariance of each fragment.
In addition,
\cite{sellan2022breaking} proposed a simulated dataset for this task.
In contrast to these methods,
the proposed BITR does not rely on the canonical pose,
\ie, it directly estimates the relative pose.
As a result,
BITR is conceptually simpler and it can handle the shape whose canonical pose is unknown.

Another related research direction is equivariant networks.
Due to their ability to incorporate 3D rotation and translation symmetry priors,
these networks have been extensively used in modelling 3D data~\citep{satorras2021n, deng2021vector, weiler20183d, lin2023coarse, ganea2021independent},
and recently they have been used for robotic manipulation task~\cite{ryu2024diffusion, ryu2022equivariant, yang2023equivact}.
In particular,
\cite{thomas2018tensor} proposed a tensor field network (TFN) for PC processing,
and $SE(3)$-transformer~\citep{fuchs2020se} further improved TFN by introducing the attention mechanism.
On the other hand,
the theory of equivariant networks was developed in~\cite{lang2020wigner, cesa2021program}.
BITR follows this line of research because the $SE(3) \times SE(3)$-transformer used in BITR is a direct generalization of $SE(3)$-transformer~\cite{fuchs2020se},
and it is the first $SE(3) \times SE(3)$-equivariant steerable network to the best of our knowledge.

\section{Preliminaries}

This section briefly reviews Arun's method and the concept of equivariance,
which will be used in BITR.

\subsection{Group representation and equivariance}
Given a group $G$,
its representation is a group homomorphism $\rho: G \rightarrow GL(V)$,
where $V$ is a linear space.
When $G$ is the 3D rotation group $SO(3)$,
it is convenient to consider its irreps (irreducible orthogonal representation) $\rho_p : SO(3) \rightarrow GL(V_p)$,
where $p \in \mathbb{N}$ is the degree of the irreps,
and $\textit{dim}(V_p)=2p+1$.
For $r \in G$,
$\rho_p(r) \in \mathbb{R}^{(2p+1) \times (2p+1)}$  is known as the Wigner-D matrix.
For example,
$\rho_0(r)=1$ for all $r\in SO(3)$;
$\rho_1(r) \in \mathbb{R}^{3 \times 3}$ is the rotation matrix of $r$.
More details can be found in~\cite{cesa2021program} and the references therein.

In this work,
we focus on the group $G$ of two independent rotations, \ie, $G=SO(3) \times SO(3)$,
where $\times$ represents the direct product.
Similar to $SO(3)$,
we also consider the irreps of $G$.
A useful fact is that all irreps of $G$ can be written as the combinations of the irreps of $SO(3)$:
the degree-$(p,q)$ irreps of $G$ is $\rho_{p,q} = \rho_p  \otimes \rho_q: SO(3) \times SO(3) \rightarrow GL(V_p \otimes V_q) $,
where $p,q \in \mathbb{N}$,
$\rho_p$ and $\rho_q$ are irreps of $SO(3)$,
and $\otimes$ is tensor product (Kronecker product for matrix).
For example,
$\rho_{0,0}(r_1 \times r_2)=1 \in \mathbb{R}$;
$\rho_{1,0}(r_1 \times r_2) \in \mathbb{R}^{3 \times 3}$ is the rotation matrix of $r_1$;
$\rho_{1,1}(r_1 \times r_2)=\rho_1(r_1) \otimes \rho_1(r_2) \in \mathbb{R}^{9 \times 9}$ is the Kronecker product of the rotation matrices of $r_1$ and $r_2$.

Given two representations $\rho: G \rightarrow GL(V)$ and $\tau : G \rightarrow GL(W)$,
a map $\Phi: V \rightarrow W$ satisfying $\Phi(\rho(g) x)=\tau(g) \Phi(x)$ for all $g \in G$ and $x \in V$ is called $G$-equivariant.
When $\Phi$ is parametrized by a neural network,
we call $\Phi$ an equivariant neural network,
and we call the feature extracted by $\Phi$ equivariant feature.
Specifically,
a degree-$p$ equivariant feature transforms according to $\rho_p$ under the action of $SO(3)$,
and a degree-$(p,q)$ equivariant feature transforms according to $\rho_p \otimes \rho_q$ under the action of $SO(3)\times SO(3)$.
For simpler notations,
we omit the representation homomorphism $\rho$, \ie, 
we write $r$ instead of $\rho(r)$, 
when $\rho$ is clear from the text.

\subsection{Arun's method}
Consider a PC assembly problem \emph{with known one-to-one correspondence}:
Let $Y=\{ y_u \}_{u=1}^{N} \subseteq \mathbb{R}^3$ and $X=\{ x_u \}_{u=1}^{N} \subseteq \mathbb{R}^3$ be a pair of PCs consisting of $N$ points,
and let $\{ (x_u, y_u)\}_{u=1}^{N}$ be their corresponding point pairs.
What is the optimal rigid transformation that aligns $X$ to $Y$?

\cite{arun1987least} provided a closed-form solution to this problem.
It claims that the optimal solution $g=(r,t) \in SE(3)$ in terms of mean square errors is
\begin{equation}
        \label{Arun}
        r = \textit{SVD}(\bar{r}) \quad \text{and} \quad t = \boldsymbol{m}(Y)-r\boldsymbol{m}(X)     \\
\end{equation}
where 
\begin{equation}
    \label{Arun-rhat}
    \bar{r} = \sum_i \bar{y_i}\bar{x_i}^T 
\end{equation}
is the correlation matrix,
$\boldsymbol{m}(\cdot)$ represents the mean value,
$\bar{x_i}=x_i - \boldsymbol{m}(X)$ and $\bar{y_i}=y_i - \boldsymbol{m}(Y)$ represent the centralized points,
and $\textit{SVD}(\cdot)$ represents the singular value decomposition projection.
The definition of SVD projection can be found in Def.~\ref{def-SVD-proj}.

Arun's solution enjoys rich equivariance properties.
Formally,
we have the following proposition:
\begin{definition}
\label{equivariance_def}
Consider a map $\Phi: \mathbb{S} \times \mathbb{S} \rightarrow SE(3)$ where $\mathbb{S}$ is the set of 3D PCs.
Given $X, Y \in \mathbb{S}$,
let $g = \Phi(X, Y)$.
\begin{itemize}[leftmargin=4mm]
    \item[-] $\Phi$ is $SE(3)$-bi-equivariant if $\Phi(g_1 X, g_2Y) = g_{2}gg_1^{-1}, \quad \forall g_1, g_2 \in SE(3)$.
    \item[-] $\Phi$ is swap-equivariant if $\Phi(Y, X) = g^{-1}$.
    \item[-] $\Phi$ is scale-equivariant if $\Phi(cX, cY) = (r, ct), \; \forall c \in \mathbb{R}_+$.
\end{itemize}
\end{definition}

\begin{proposition}
    \label{Arun-global}
    Under a mild assumption~\eqref{assumption-1},
    Arun's algorithm~\eqref{Arun} is $SE(3)$-bi-equivariant, swap-equivariant and scale-equivariant.
\end{proposition}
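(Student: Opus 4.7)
The strategy is to track, for each of the three equivariances, how the centred point sets, the correlation matrix $\bar r$, its $\textit{SVD}$-projection, and finally the translation component transform. All three claims then reduce to a handful of algebraic identities for $\textit{SVD}(\cdot)$ on $3\times 3$ matrices, which are the only nontrivial ingredient.

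First, I would compute the action of each input perturbation on the ingredients of~\eqref{Arun}--\eqref{Arun-rhat}. For the $SE(3)$-bi-equivariance case, write $g_i=(r_i,t_i)$ for $i=1,2$; then $\boldsymbol m(g_1X)=r_1\boldsymbol m(X)+t_1$ and $\overline{(g_1 X)}_i = r_1\bar x_i$, and analogously for $Y$. Substituting into~\eqref{Arun-rhat} gives $\bar r' = r_2\,\bar r\,r_1^{\mathsf T}$. For swap, one obtains directly $\bar r^{\mathrm{swap}} = \bar r^{\mathsf T}$; for scaling by $c>0$, $\bar r^{\mathrm{scale}} = c^{2}\bar r$.

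Next I would invoke the three key properties of the $\textit{SVD}$ projection $\textit{SVD}:\mathbb R^{3\times 3}\to SO(3)$, which I expect to be stated or proved in Def.~\ref{def-SVD-proj} under the mild assumption~\eqref{assumption-1} (typically distinct nonzero singular values, which makes the projection single-valued):
\begin{enumerate}[leftmargin=6mm]
    \item[(i)] left/right rotation covariance, $\textit{SVD}(r_2 M r_1^{\mathsf T}) = r_2\,\textit{SVD}(M)\,r_1^{\mathsf T}$ for $r_1,r_2\in SO(3)$;
    \item[(ii)] transpose covariance, $\textit{SVD}(M^{\mathsf T}) = \textit{SVD}(M)^{\mathsf T}$;
    \item[(iii)] positive-scale invariance, $\textit{SVD}(cM) = \textit{SVD}(M)$ for $c>0$.
\end{enumerate}
All three follow from the characterization $\textit{SVD}(M)=\arg\max_{R\in SO(3)}\operatorname{tr}(R^{\mathsf T}M)$: replacing $M$ by $r_2 M r_1^{\mathsf T}$ is a change of variables $R\mapsto r_2 R r_1^{\mathsf T}$ in the feasible set; the transpose case uses $\operatorname{tr}(R^{\mathsf T}M^{\mathsf T})=\operatorname{tr}(M R^{\mathsf T})=\operatorname{tr}(R M^{\mathsf T})$... sorry, rather $\operatorname{tr}(R^{\mathsf T}M^{\mathsf T})=\operatorname{tr}((MR)^{\mathsf T})=\operatorname{tr}(MR)$, so the maximiser is the transpose of the original one; and positive scaling multiplies the objective by $c$ without changing the argmax. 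This is the only place where assumption~\eqref{assumption-1} is actually used.

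Finally I would assemble the translations. In the bi-equivariant case, (i) gives $r' = r_2 r r_1^{\mathsf T}$, and then
\[
t' = \boldsymbol m(g_2 Y) - r'\boldsymbol m(g_1 X) = r_2\,t + t_2 - r_2 r r_1^{\mathsf T} t_1,
\]
which is exactly the translation part of the $SE(3)$ composition $g_2 g g_1^{-1}$, using $r_1^{\mathsf T}=r_1^{-1}$. For the swap case, (ii) gives $r^{\mathrm{swap}}=r^{\mathsf T}=r^{-1}$, and $t^{\mathrm{swap}} = \boldsymbol m(X) - r^{-1}\boldsymbol m(Y) = -r^{-1}\bigl(\boldsymbol m(Y)-r\boldsymbol m(X)\bigr) = -r^{-1} t$, matching $g^{-1}$. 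For the scaling case, (iii) gives $r^{\mathrm{scale}}=r$, and $t^{\mathrm{scale}} = c\boldsymbol m(Y) - r\,c\boldsymbol m(X) = c t$, as required.

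The main obstacle is really the $\textit{SVD}$-projection identities (i)--(iii); the rest is bookkeeping with the $SE(3)$ group law. If Def.~\ref{def-SVD-proj} already records these covariance properties, the proof reduces to the three substitutions above; otherwise, a short variational argument based on the trace characterization, together with assumption~\eqref{assumption-1} to ensure uniqueness of the maximiser, fills the gap.
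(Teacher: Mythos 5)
Your proposal is correct and follows the same overall structure as the paper's Appendix~\ref{Sec-Aruns-proof}: compute how $\bar r$, $\boldsymbol m(X)$, $\boldsymbol m(Y)$ transform under each perturbation, push the transformation through the $\textit{SVD}$ projection, and finish with the $SE(3)$ group law. The only genuine point of departure is how the three $\textit{SVD}$-covariance identities (your (i)--(iii)) are justified. The paper's Lemma~\ref{lemma-1} proves them by observing that, e.g., $r_2 U\,\Sigma\,(r_1 V)^{\mathsf T}$ is itself a valid SVD of $r_2 A r_1^{\mathsf T}$ with $r_2 U, r_1 V \in SO(3)$, and then invoking the uniqueness result (Prop.~\ref{Arun_unique}, Lemma~\ref{sign_consistent}) established under Assumption~\ref{assumption-1}. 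You instead invoke the Procrustes characterization $\textit{SVD}(M)=\arg\max_{R\in SO(3)}\operatorname{tr}(R^{\mathsf T}M)$ and obtain all three identities as changes of variables. Both are valid; yours is more compact once the argmax characterization is in hand (and nicely unifies (i)--(iii) as a single mechanism), but it quietly relies on the equivalence between the argmax formulation and the $UV^{\mathsf T}$ formula in Def.~\ref{def-SVD-proj} --- a standard fact, but one you would need to cite or sketch, since the paper never states $\textit{SVD}$ variationally. The paper's route is more self-contained given that it already develops uniqueness of the $U V^{\mathsf T}$ form from first principles. In both cases Assumption~\ref{assumption-1} serves exactly the purpose you identified: guaranteeing the projection is single-valued so that the covariance statements are well-posed.
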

In other words,
Arun's method guarantees to perform consistently
1) with arbitrary rigid perturbations on $X$ and $Y$, \ie, it is global,
2) if $X$ and $Y$ are swapped (aligning $Y$ to the fixed $X$ or aligning $X$ to the fixed $Y$),
and 3) in arbitrary scale. Details of Prop.~\ref{Arun-global} can be found at Appx.~\ref{Sec-Aruns-proof}.

We regard Arun's method as a prototype of $SE(3)$-bi-equivariant PC assembly methods:
it first extracts a degree-$(1,1)$ $SO(3) \times SO(3)$-equivariant translation-invariant feature,
\ie, the correlation matrix $\bar{r}$~\eqref{Arun-rhat},
and then obtains an output $ g \in SE(3)$ using a $\textit{SVD}$-based projection.
This observation immediately leads to more general $SE(3)$-bi-equivariant methods
where the handcrafted feature $\bar{r}$ is replaced by the more expressive learned equivariant features,
thus,
the correspondence is no longer necessary.
We will develop this idea in the proposed BITR in the next section,
and further show that the scale and swap equivariance of Arun's method can also be inherited.

\section{$SE(3)$-bi-equivariant transformer}
\label{sec-BITR}
This section presents the details of the proposed BITR.
BITR follows the same principle as Arun's method~\citep{arun1987least}:
it first extracts $SO(3) \times SO(3)$-equivariant features as a generalization of the correlation matrix $\bar{r}$~\eqref{Arun-rhat},
and then projects the features to $SE(3)$ similarly to~\eqref{Arun}.
Specifically,
we first propose a $SE(3)\times SE(3)$-transformer for feature extraction in Sec.~\ref{Sec-bi-trans}.
Since this transformer is defined on 6-D space,
\ie, it does not directly handle the given 3-D PCs,
it relies on a pre-processing step described in Sec.~\ref{Sec-PC-merge},
where the input 3-D PCs are merged into a 6-D PC.
Finally,
the Arun-type $SE(3)$-projection is presented in Sec.~\ref{Sec-Arun-Proj}.
An overview of BITR is presented in Fig.~\ref{BITR-model}.

\subsection{Problem formulation}
\label{Sec_problem_formulation}
Let $Y=\{ y_v \}_{v=1}^{N} \subseteq \mathbb{R}^3$ and $X=\{ x_u \}_{u=1}^{M} \subseteq \mathbb{R}^3$ be the PCs sampled from the reference and source shape respectively.
The goal of assembly is to find a rigid transformation $g \in SE(3)$,
so that the transformed PC $gX=\{rx_i + t \}_{i=1}^{M}$ is aligned to $Y$.
Note that 
we do not assume that $X$ and $Y$ are overlapped,
\ie, we do not assume the existence of corresponding point pairs.

\subsection{$SE(3)\times SE(3)$-transformer}
\label{Sec-bi-trans}

To learn $SO(3) \times SO(3)$-equivariant translation-invariant features generalizing $\bar{r}$~\eqref{Arun},
this subsection proposes a $SE(3)\times SE(3)$-transformer as a generalization of $SE(3)$-transformer~\citep{fuchs2020se}.
We present a brief introduction to $SE(3)$-transformer~\citep{fuchs2020se} in Appx.~\ref{se3-introduction} for completeness.

According to the theories developed in~\cite{cesa2021program},
to define a $SE(3)\times SE(3)$-equivariant transformer,
we first need to define the feature map of a transformer layer as a tensor field, 
and specify the action of $SE(3)\times SE(3)$ on the field.
Since $SE(3) \times SE(3)$ can be decomposed as $(T(3) \times T(3)) \rtimes (SO(3) \times SO(3))$ where $T(3)$ is the 3-D translation group,
the tensor field should be defined in the $6$-D Euclidean space $\mathbb{R}^6 \cong T(3) \times T(3)$,
and the features attached to each location should be $SO(3)\times SO(3)$-equivariant and $T(3) \times T(3)$-invariant.
Formally,
we define the tensor field as
\begin{equation}
    \label{Bi-field}
    f(z) = \sum_{u=1}^{L}f_u\delta(z-z_u)
\end{equation}
where $Z = \{ z_u \}_{u=1}^L \subseteq \mathbb{R}^6$ is a 6-D PC,
$\delta$ is the Dirac delta function,
$f_u=\oplus_{p,q}f_u^{p,q}$ is the feature attached to $z_u$,
where $f_u^{p,q}$ is the degree-$(p,q)$ equivariant component.
We then specify the action of $SE(3) \times SE(3)$ on the base space $\mathbb{R}^6$ as
\begin{equation}
    (g_1 \times g_2) (z) = ( g_1 z^1 ) \oplus ( g_2 z^2) \quad  \forall z \in \mathbb{R}^6
\end{equation}
where $z=z^1 \oplus z^2$, 
$z^1, z^2 \in \mathbb{R}^3$ are the first and last three components of $z$,
$\oplus$ represents direct sum (concatenate),
and $g_i=(r_i, t_i) \in SE(3)$ for $i=1,2$.
Thus,
the action of $SE(3) \times SE(3)$ on the degree-$(p,q)$ component of $f$ is
\begin{equation}
    \label{Bi-field-action}
    \bigl(   (g_1 \times g_2) f^{p,q}  \bigr)(z) = \bigl( \rho_{p,q}(r_1 \times r_2) \bigr) f^{p,q} \left( (g_1 \times g_2)^{-1} (z) \right). \nonumber
\end{equation}

With the above preparations,
we can now define a $SE(3) \times SE(3)$-transformer layer in a message passing formulation similar to $SE(3)$-transformer:
\begin{equation}
    \label{Bi-SE3-layer}
        f^{\vo}_{\text{out}}(z_u) = \underbrace{W^{\vo} F^{\vo}_{\text{in}}(z_u)}_{\text{self-interaction}} + \sum_{ \substack{\vi \\ v \in \textit{KNN}(u) \backslash \{u\}} } \underbrace{\alpha_{uv}}_{\text{attention}} \underbrace{\mathbf{V}^{{\vo, \vi}}_{uv}}_{\text{message}}.
\end{equation}
Here,
we use notations $\vo =(o_1, o_2)$ and $\vi=(i_1, i_2)$ for simplicity.
For example,
$f^{\vo}$ represents the degree-$\vo$ feature $f^{o_1, o_2}$.
$F^{\vo}(z_u) \in \mathbb{R}^{c \times (2o_1+1)(2o_2+1)}$ is the collection of all degree-$\vo$ features at $z_u$,
where $c$ is the number of channels of the degree-$\vo$ features,
and $W^{\vo} \in \mathbb{R}^{1 \times c}$ is a learnable parameter for self-interaction.
$\textit{KNN}(\cdot)$ represents the $k$ nearest neighborhood,
and attention $\alpha_{uv}$ is computed according to
\begin{equation}
    \label{SE3-attention}
    \alpha_{uv}=\frac{\exp \left(\mathbf{Q}_u^{\top} \mathbf{K}_{uv}\right)}{\sum_{v' \in \textit{KNN}(u) \backslash \{u\}} \exp \left(\mathbf{Q}_u^{\top} \mathbf{K}_{u v^{\prime}}\right)},
\end{equation}
where $\mathbf{Q}$, $\mathbf{K}$ and $\mathbf{V}$ are known as the query, key and value respectively.
They are defined as
\begin{equation}
    \mathbf{Q}_u = \bigoplus_{\vo} W_Q^{\vo} F_{\text{in}}^{\vo}(z_u), 
    \mathbf{K}_{u v} = \bigoplus_{\vo} \sum_{\vi} \mathcal{W}_K^{\vo, \vi}\left(z_{vu}\right) f_{\text{in}}^{\vi}(z_v),
    \mathbf{V}_{u v}^{\vo, \vi} = \mathcal{W}_V^{\vo, \vi}\left(z_{vu}\right) f_{\text{in}}^{\vi}(z_v) \label{bi-query}
\end{equation}
where  $z_{vu}=z_v-z_u$,
$W_Q^{\vi}$ is a learnable parameter for $\mathbf{Q}$,
and the convolutional kernel $\mathcal{W}^{\vo, \vi} (z)$ in $\mathbf{V}$ and $\mathbf{K}$ both take the form of
\begin{equation}
    \label{Bi-SE3-kernel}
    \textit{vec}(\mathcal{W}^{\vo, \vi}(z))=\sum_{J_1=|o_1-i_1|}^{o_1+i_1}\sum_{J_2=|o_2-i_2|}^{o_2+i_2} \Bigl(  \underbrace{\varphi_{J_1, J_2}^{\vo, \vi }(\| z^1 \|, \| z^2 \|)  }_{\text{radial component}}                                                                                                      
                                                                                                        \underbrace{\mathcal{Q}_{J_1, J_2}^{\vo, \vi} Y_{J_1} ( \frac{ z^1}{\| z^1 \|} )    \otimes Y_{J_2}(  \frac{z^2}{\| z^2 \|})}_{\text{angular component}} \Bigr),
\end{equation}

where the learnable radial component $\varphi_{J_1, J_2}^{\vo, \vi }: \mathbb{R} \times \mathbb{R} \rightarrow \mathbb{R}$ is parametrized by a neural network,
the non-learnable angular component is determined by the 2-nd order Clebsch-Gordan constant $\mathcal{Q}$ and the spherical harmonics $Y_{J}: \mathbb{R}^3 \rightarrow \mathbb{R}^{2J+1}$,
and $\textit{vec}(\cdot)$ is the vectorize function reshaping a matrix to a vector.
Formulation~\eqref{Bi-SE3-kernel} is derived in Appx.~\ref{kernel-derive-appx}.

Note that
the kernel~\eqref{Bi-SE3-kernel} is the main difference between a $SE(3)$-transformer layer and a $SE(3)\times SE(3)$-transformer layer.
In the special case where only $SE(3)$-equivariance is considered,
\ie,
all features are of degree $(p, 0)$ (or $(0, q)$),
a $SE(3)\times SE(3)$-transformer layer becomes a $SE(3)$-transformer layer.

Finally,
we adopt the equivariant Relu (Elu) layer similar to~\cite{deng2021vector} as the point-wise non-linear layer in our network.
Given an input degree-$\vi$ feature $F^\vi$ with $c$ channels,
an Elu layer is defined as
\begin{equation}
    \label{Relu-def}
    \textit{Elu}(F^\vi)= \begin{cases} F_\mu & \langle  F_\mu, F_\nu \rangle \geqslant 0 \\ 
                            F_\mu-\langle F_\mu, \frac{F_\nu}{\|F_\nu\|} \rangle  \frac{F_\nu}{\|F_\nu\|} & \text { otherwise, }
                        \end{cases}
\end{equation}
where 
$F_\mu = W_\mu^\vi F^\vi$ and $\quad F_\nu = W_\nu^\vi F^\vi$.
$W_\mu, W_\nu \in \mathbb{R}^{c \times c}$ are learnable weights and $\|\cdot\|$ is the channel-wise vector norm.
Note that when $\vi=(1,0)$ or $\vi=(0,1)$, 
our definition~\eqref{Relu-def} becomes the same as~\cite{deng2021vector}.
By interleaving transformer layers and Elu layers,
we can finally build a complete $SE(3) \times SE(3)$-equivariant transformer model.

\subsection{Point cloud merge}
\label{Sec-PC-merge}

To utilize the transformer model defined in Sec.~\ref{Sec-bi-trans},
we need to construct a 6-D PC  as its input.
To this end,
we first extract key points from the raw 3-D PCs,
and then concatenate them to a 6-D PC to merge their information.
Thus,
the resulting 6-D PC is not only small in size but also contains the key information of the raw PCs pairs.

Formally,
we extract $L$ ordered key points $\tilde{X} = \{ \tilde{x}_u \}_{u=1}^L$ and $\tilde{Y} = \{ \tilde{y}_u \}_{u=1}^L$ from $X$ and $Y$ respectively,
and then obtain $Z = \{ \tilde{x}_u \oplus \tilde{y}_u \}_{u=1}^L$.
Note that we do not require $\tilde{X}$ ($\tilde{Y}$) to be a subset of $X$ ($Y$).
Specifically,
we represent the coordinates of the key points as a convex combination of the raw PCs:
\begin{equation}
    \label{keypoints}
    \tilde{X} = \textit{SoftMax}(F^{0}_{X}) X, \quad \tilde{Y} = \textit{SoftMax}(F^{0}_{Y}) Y,
\end{equation}
where 
$X \in \mathbb{R}^{M \times 3}$ and $Y \in \mathbb{R}^{N \times 3}$ represent the coordinates of $X$ and $Y$ respectively,
and $\textit{SoftMax}(\cdot)$ represents the row-wise softmax.
$F^{0}_{X} \in \mathbb{R}^{L \times M}$ and $F^{0}_{Y} \in \mathbb{R}^{L \times N}$ are the weights of each point in $X$ and $Y$ respectively,
and they are degree-$0$, \ie, rotation-invariant, features computed by a shared $SE(3)$-transformer $\Phi_E$:
\begin{equation}
    F^{0}_{X} = \Phi_E(X), \quad F^{0}_{Y} = \Phi_E(Y).
\end{equation}

Furthermore,
inspired by~\cite{xu2021omnet},
we fuse the features of $X$ and $Y$ in $\Phi_E$ before the last layer,
so that their information is merged more effectively, 
\ie, the selection of $\tilde{X}$ or $\tilde{Y}$ depends on both $X$ and $Y$.
Specifically,
the fused features are 
\begin{equation}
    \left\{
    \begin{aligned}
        \begin{split}
        f_{\text{out}, X}(x_u) &= f_{\text{in}, X}^0(x_u) \oplus \textit{Pool}_v \left( f_{\text{in}, Y}^0(y_v) \right) \oplus f_{\text{in}, X}^1(x_u)  \\
        f_{\text{out}, Y}(y_v) &= f_{\text{in}, Y}^0(y_v) \oplus \textit{Pool}_u \left( f_{\text{in}, X}^0(x_u) \right) \oplus f_{\text{in}, Y}^1(y_v),    
    \end{split}
    \end{aligned} \right.
\end{equation}
where we only consider degree-$0$ and degree-$1$ features.
$f_{\cdot, X}$ and $f_{\cdot, Y}$ represent the features of $X$ and $Y$, 
and $\textit{Pool}$ is the average pooling over the PC.

Note that $\tilde{X}$ and $\tilde{Y}$ obtained in Eqn.~\ref{keypoints} are permutation invariant.
For example,
according to Eqn.~\ref{keypoints},
the $i$-th key point of $X$ is $\tilde{x}_i = \sum_j \mathcal{F}_{ij} X_j$, 
where $\mathcal{F}_{ij}$ is the $i$-th channel of $F^{0}_{X}$ at $x_j$ (after softmax normalization).
When $X$ is permutated by $\sigma$,
then the $i$-th key point can be written as $\tilde{x}'_i = \sum_{j'} \mathcal{F}_{ij'} X_{j'}$,
where $j'=\sigma(j)$.
It is easy to see that $\tilde{x}'_i = \tilde{x}_i$ because both $j$ and $j'$ iterate through $\{1, ..., M\}$ in the summation.

\subsection{SE(3)-projection}
\label{Sec-Arun-Proj}
We now obtain the final output by projecting the feature extracted by the $SE(3) \times SE(3)$-transformer to $SE(3)$.
Formally,
let $f$ be the output tensor field of the $SE(3) \times SE(3)$-transformer.
We compute the final output $g=(r,t) \in SE(3)$ using an Arun-type projection as follows:
\begin{equation}
    \left\{
    \begin{aligned}
        \label{Arun-proj}
        \begin{split}
        & r = \textit{SVD}(\hat{r}) \\
        & t = (\boldsymbol{m}(\tilde{Y}) + t_Y) - r (\boldsymbol{m}(\tilde{X}) + t_X),     \\
    \end{split}
    \end{aligned} \right.
    \end{equation}
where $\hat{r}= \textit{unvec}(\tilde{r}) \in \mathbb{R}^{3 \times 3}$,
$t_X \in \mathbb{R}^3$ and $t_Y \in \mathbb{R}^{3}$ are equivariant features computed as 
\begin{equation}
    \tilde{r} = \textit{Pool}_u (f^{1, 1}_u), \; t_X = \textit{Pool}_u (f^{1, 0}_u), \; t_Y = \textit{Pool}_u (f^{0, 1}_u). \nonumber
\end{equation}

We note that projection~\eqref{Arun-proj} extends Arun's projection~\eqref{Arun} in two aspects.
First,
although $\hat{r}$ in~\eqref{Arun-proj} and $\bar{r}$~\eqref{Arun-rhat} are both degree-$(1,1)$ features,
$\hat{r}$ is more flexible than $\bar{r}$ because $\hat{r}$ is a learned feature while $\bar{r}$ is handcrafted,
and 
$\hat{r}$ is correspondence-free while $\bar{r}$ is correspondence-based.
Second,
projection~\eqref{Arun-proj} explicitly considers non-zero offsets $t_X$ and $t_Y$,
which allow solutions where the centers of PCs do not match.

In summary,
BITR 
computes the output $g$ for PCs $X$ and $Y$ according to
\begin{equation}
    \label{BITR-summary}
    g = \Phi_P \circ \Phi_S(X, Y),
\end{equation}
where $\Phi_S : \mathbb{S} \times \mathbb{S} \rightarrow \mathbb{F}$ is a $SE(3)\times SE(3)$-transformer (with the PC merge step),
$\Phi_P: \mathbb{F} \rightarrow SE(3)$ represents projection~\eqref{Arun-proj},
and $\mathbb{F}$ is the set of tensor field.
We finish this section with a proposition that BITR is indeed $SE(3)$-bi-equivariant.
\begin{proposition}
    \label{Arun-proj-global}
    Under a mild assumption~\eqref{assumption-1},
    BITR~\eqref{BITR-summary} is $SE(3)$-bi-equivariant.
\end{proposition}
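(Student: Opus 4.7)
The plan is to substitute $(X, Y) \mapsto (g_1 X, g_2 Y)$, with $g_i = (r_i, t_i) \in SE(3)$, into the composition~\eqref{BITR-summary}, track how every intermediate quantity transforms, and check that the final pair $(r, t)$ becomes $g_2 (r, t) g_1^{-1}$.

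First, I would analyze the key-point stage. Since $\Phi_E$ is an $SE(3)$-transformer whose final layer outputs only degree-$0$ features, the softmax weights $F^0_X$ and $F^0_Y$ are $SE(3)$-invariant in their respective arguments. The cross-PC fusion only injects pooled degree-$0$ channels (already $SE(3)$-invariant) and translation-invariant degree-$1$ channels from one branch into the other, so independent rigid perturbations of $X$ and $Y$ leave both weight matrices unchanged. Combined with the linearity of~\eqref{keypoints} in the raw coordinates and the fact that the rows of $\textit{SoftMax}(F^0_X)$ sum to one (so translations pass through correctly), this would give $\tilde{X} \to g_1 \tilde{X}$ and $\tilde{Y} \to g_2 \tilde{Y}$, and therefore $Z \to (g_1 \times g_2) Z$ under the action of $SE(3) \times SE(3)$ on $\mathbb{R}^6$ defined in Sec.~\ref{Sec-bi-trans}.

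Next, I would invoke the equivariance of the transformer and pooling stage. By construction $\Phi_S$ is $SE(3) \times SE(3)$-equivariant, and node-wise pooling preserves both the $T(3) \times T(3)$-invariance and the $SO(3) \times SO(3)$-equivariance of every irreducible component. The pooled features therefore transform according to their respective irreps: $t_X \to r_1 t_X$ (degree-$(1,0)$), $t_Y \to r_2 t_Y$ (degree-$(0,1)$), and $\tilde{r} \to (r_1 \otimes r_2)\tilde{r}$ (degree-$(1,1)$); using $\textit{vec}(AMB) = (B^T \otimes A)\textit{vec}(M)$ to unvectorize, this reads $\hat{r} \to r_2 \hat{r} r_1^T$.

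Finally, I would assemble these into the Arun-type projection~\eqref{Arun-proj}. Invoking the same mild assumption used in Prop.~\ref{Arun-global} to guarantee $\textit{SVD}(r_2 \hat{r} r_1^T) = r_2 \textit{SVD}(\hat{r}) r_1^T$ yields $r \to r_2 r r_1^T$. The centroids transform as $\boldsymbol{m}(\tilde{X}) \to r_1 \boldsymbol{m}(\tilde{X}) + t_1$ and analogously for $Y$; a direct substitution and simplification then gives $t \to r_2 t + t_2 - (r_2 r r_1^T) t_1$, so the output pair $(r, t)$ is exactly $g_2(r, t)g_1^{-1}$. I expect the main obstacle to be precisely this SVD step: as a piecewise-linear projection, $\textit{SVD}$ breaks bi-equivariance at matrices with coincident or vanishing singular values, which is exactly the regime excluded by the mild assumption; everything else is careful bookkeeping of $r_i$ and $t_i$ through layers whose equivariance is already in hand.
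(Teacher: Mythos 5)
Your proposal is correct and tracks essentially the same chain of substitutions as the paper's proof: equivariance of the key-point stage (convex combinations with $SE(3)\times SE(3)$-invariant softmax weights), layer-by-layer $SE(3)\times SE(3)$-equivariance of the transformer and pooling giving $\hat{r}\to r_2\hat{r}r_1^T$, $t_X\to r_1 t_X$, $t_Y\to r_2 t_Y$, and SVD equivariance under Assumption~\ref{assumption-1} (Lemma~\ref{lemma-1}) to finish the bookkeeping. One small slip in your description of the fusion step: only the pooled degree-$0$ (invariant) channels cross branches while the degree-$1$ channels stay within their own branch, but your conclusion that $F^{0}_{X}$ and $F^{0}_{Y}$ are $SE(3)\times SE(3)$-invariant is unaffected.
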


\section{Swap-equivariance and scale-equivariance}
\label{Sec-scale-swap}
This section seeks to incorporate swap and scale equivariances into the proposed BITR model.
These two equivariances are discussed in Sec.~\ref{Sec_swap} and Sec.~\ref{Sec_scale} respectively.

\subsection{Incorporating swap-equivariance}
\label{Sec_swap}
This subsection seeks to incorporate swap-equivariance to BITR,
\ie, to ensure that swapping $X$ and $Y$ has the correct influence on the output.
To this end,
we need to treat the group of swap as $\mathbb{Z}/2\mathbb{Z} = \{1, s \}$ where $s^2=1$,
\ie, $s$ represents the swap of $X$ and $Y$,
and properly define the action of $\mathbb{Z}/2\mathbb{Z}$ on the learned features.

Formally,
we define the action of $\mathbb{Z}/2\mathbb{Z}$ on field $f$~\eqref{Bi-field} as follows.
We first define the action of $s$ on the base space $\mathbb{R}^6$ as swapping the coordinates of $\tilde{X}$ and $\tilde{Y}$:
$s(z)  =z^2 \oplus z^1$, 
where $z=z^1 \oplus z^2$, 
and $z^1, z^2 \in \mathbb{R}^3$ are the coordinates of $\tilde{X}$ and $\tilde{Y}$ respectively.
Then we define the action of $s$ on feature $f$ as
$\bigl(s(f) \bigr)^{p,q}(z)=\bigl(  f^{q,p} \left(  s(z) \right)  \bigr)^T$,
where we regard a degree-$(p, q)$ feature $f^{p, q}_u$ as a matrix of shape $\mathbb{R}^{(2p + 1) \times (2q + 1)}$ by abuse of notation,
and $(\cdot)^T$ represents matrix transpose.

Intuitively,
according to the above definition,
degree-$(1, 1)$, $(1, 0)$ and $(0, 1)$ features
will become (the transpose of) degree-$(1, 1)$, $(0, 1)$ and $(1, 0)$ features respectively 
under the action of $s$,
\ie, $\hat{r}$ will be transposed, 
$t_X$ and $t_Y$ will be swapped.
This is exactly the transformation needed to ensure swap-equivariant outputs.
We formally state this observation in the following proposition.
\begin{proposition}
    \label{swap-prop}
    For a tensor field $f$ and a projection $\Phi_P$~\eqref{Arun-proj},
    $\Phi_P\left( s(f)  \right) = (\Phi_P(f))^{-1}$.
\end{proposition}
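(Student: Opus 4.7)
The plan is to unfold all definitions, track how each ingredient of the projection $\Phi_P$ transforms under $s$, and then verify algebraically that the resulting $(r',t')$ equals $(r^{-1},-r^{-1}t)=g^{-1}$.

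First I would describe how the pooled equivariant features transform. By the given action, $s(f)$ is supported on $\{s(z_u)\}=\{\tilde y_u \oplus \tilde x_u\}$, so when $\Phi_P$ is applied to $s(f)$ the roles of the two key-point clouds are exchanged, namely $(\tilde X)_{\text{new}}=\tilde Y$ and $(\tilde Y)_{\text{new}}=\tilde X$. Using $(s(f))^{p,q}(z)=\bigl(f^{q,p}(s(z))\bigr)^{T}$ and treating the degree-$(p,q)$ features as matrices, pooling over the new support gives
\begin{equation*}
\textit{Pool}_u\,(s(f))^{1,1}_u=\hat r^{T},\qquad \textit{Pool}_u\,(s(f))^{1,0}_u=t_Y,\qquad \textit{Pool}_u\,(s(f))^{0,1}_u=t_X,
\end{equation*}
where I use that for a $3\times 1$ (or $1\times 3$) vector the transpose is a no-op on its entries. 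Hence the quantities fed into the Arun-type formula~\eqref{Arun-proj} for $s(f)$ are $(\hat r^{T},t_Y,t_X)$ together with the means $(\boldsymbol{m}(\tilde Y),\boldsymbol{m}(\tilde X))$ swapped.

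Next I would handle the rotation part. The key algebraic fact is $\textit{SVD}(A^{T})=\textit{SVD}(A)^{T}$, which follows directly from the definition of the SVD projection (Def.~\ref{def-SVD-proj}): if $A=U\Sigma V^{T}$ then $A^{T}=V\Sigma U^{T}$, and the projection maps these to $UV^{T}$ and $VU^{T}$ respectively (with the determinant correction being symmetric in $U,V$). Because the resulting matrix is a rotation, its transpose equals its inverse, so
\begin{equation*}
r'=\textit{SVD}(\hat r^{T})=\textit{SVD}(\hat r)^{T}=r^{T}=r^{-1}.
\end{equation*}

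Finally I would compute $t'$ by plugging into~\eqref{Arun-proj} with the swapped ingredients:
\begin{equation*}
t'=(\boldsymbol{m}(\tilde X)+t_X)-r^{-1}(\boldsymbol{m}(\tilde Y)+t_Y)=-r^{-1}\bigl[(\boldsymbol{m}(\tilde Y)+t_Y)-r(\boldsymbol{m}(\tilde X)+t_X)\bigr]=-r^{-1}t.
\end{equation*}
Combining, $\Phi_P(s(f))=(r^{-1},-r^{-1}t)=g^{-1}$ as required.

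The routine part is the translation calculation. The main obstacle is really the bookkeeping at the start: one must be careful that the shift of support under $s$ is consistent with the relabeling $\tilde X\leftrightarrow \tilde Y$ that $\Phi_P$ sees when it is presented with the swapped input, so that pooling the transposed features genuinely produces $\hat r^{T}$ and the swap of $t_X,t_Y$. The secondary obstacle is justifying $\textit{SVD}(A^{T})=\textit{SVD}(A)^{T}$ cleanly; this is a short but non-trivial check from the definition of the SVD projection and should be isolated as a small lemma (or cited from Appx.~\ref{Sec-Aruns-proof} where the analogous equivariance of Arun's method is treated).
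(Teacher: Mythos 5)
Your argument is correct and follows essentially the same route as the paper's own proof: you unfold how the pooled features transform under $s$ (transpose of $\hat r$, swap of $t_X,t_Y$, swap of the key-point means), invoke $\textit{SVD}(A^{T})=\textit{SVD}(A)^{T}$ (which the paper isolates as part of Lemma~\ref{lemma-1}, exactly as you suggest), and then read off $(r^{-1},-r^{-1}t)=g^{-1}$ from the projection formula. The only stylistic difference is that you make the support-relabeling bookkeeping explicit, whereas the paper states the transformed pooled quantities directly; both are the same calculation.
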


Now the remaining problem is how to make a $SE(3) \times SE(3)$-transformer $\mathbb{Z}/2\mathbb{Z}$-equivariant.
A natural solution is to force all layers in the $SE(3) \times SE(3)$-transformer to be $\mathbb{Z}/2\mathbb{Z}$-equivariant.
The following proposition provides a concrete way to achieve this.
\begin{proposition}
    \label{Swap-weights}
    Let $\tilde{\cdot}$ represent the swap of index, \eg, if $\vo=(o_1, o_2)$, then $\tilde{\vo} = (o_2, o_1)$.
    1) For a transformer layer~\eqref{Bi-SE3-layer}, 
    if the self-interaction weight satisfies $W^{\vo} = W^{\tilde{\vo}}$, 
    the weight of query~\eqref{bi-query} satisfies  $W_Q^{\vo} = W_Q^{\tilde{\vo}}$,
    and the radial function satisfies $\varphi^{\vi, \vo}_{J_1, J_2}(\|z^1 \|, \|z^2\|)=\varphi^{\tilde{\vi}, \tilde{\vo}}_{J_2, J_1}(\|z^2\|, \|z^1 \|)$ for all $\vi$, $\vo$, $J_1$, $J_2$, $z^1$ and $z^2$,
    then the transformer layer is $\mathbb{Z}/2\mathbb{Z}$-equivariant.
    
    2) For an Elu layer~\eqref{Relu-def},
    if $W_\nu^{\vi} = W_\nu^{\tilde{\vi}}$ and $W_\mu^{\vi} = W_\mu^{\tilde{\vi}}$ for all $\vi$,
    then the Elu layer is $\mathbb{Z}/2\mathbb{Z}$-equivariant.    
\end{proposition}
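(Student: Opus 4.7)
The plan is to verify swap-equivariance by direct substitution into the action $(s(f))^{p,q}(z) = (f^{q,p}(s(z)))^{T}$, viewing a degree-$(p,q)$ feature as a $(2p+1)\times(2q+1)$ matrix. I will handle the two parts separately: the Elu layer first as a warm-up, then the transformer layer term by term.

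For Part 2, the linear maps $F_\mu = W_\mu^{\vi} F^{\vi}$ and $F_\nu = W_\nu^{\vi} F^{\vi}$ act on the channel index only, hence commute with matrix transposition along the feature axis. The hypotheses $W_\mu^{\vi} = W_\mu^{\tilde{\vi}}$ and $W_\nu^{\vi} = W_\nu^{\tilde{\vi}}$ then imply that under swap both $F_\mu$ and $F_\nu$ transform as degree-$\tilde{\vi}$ features with an added transpose. Because the channel-wise inner product $\langle F_\mu, F_\nu\rangle$ and norm $\|F_\nu\|$ are invariant under joint transposition of their arguments, the branching condition in~\eqref{Relu-def} is preserved, and the output inherits the correct swap law.

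For Part 1, the self-interaction $W^{\vo} F^{\vo}_{\mathrm{in}}(z_u)$ maps under swap to $W^{\vo}(F^{\tilde{\vo}}_{\mathrm{in}}(s(z_u)))^{T} = (W^{\tilde{\vo}} F^{\tilde{\vo}}_{\mathrm{in}}(s(z_u)))^{T}$ by the assumption $W^{\vo} = W^{\tilde{\vo}}$, since $W^{\vo}$ is a scalar acting on the channel axis. The central technical step concerns the kernel $\mathcal{W}^{\vo,\vi}(z)$ of~\eqref{Bi-SE3-kernel}; I need the identity
\begin{equation*}
    \mathcal{W}^{\vo,\vi}(z)\, f^{\vi}(z') \;=\; \bigl(\mathcal{W}^{\tilde{\vo},\tilde{\vi}}(s(z))\, f^{\tilde{\vi}}(s(z'))\bigr)^{T}.
\end{equation*}
The radial factor is handled directly by $\varphi^{\vo,\vi}_{J_1,J_2}(\|z^1\|,\|z^2\|) = \varphi^{\tilde{\vo},\tilde{\vi}}_{J_2,J_1}(\|z^2\|,\|z^1\|)$. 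The angular factor $\mathcal{Q}^{\vo,\vi}_{J_1,J_2}\, Y_{J_1}(z^1/\|z^1\|) \otimes Y_{J_2}(z^2/\|z^2\|)$ must transform into $\mathcal{Q}^{\tilde{\vo},\tilde{\vi}}_{J_2,J_1}\, Y_{J_2}(z^2/\|z^2\|) \otimes Y_{J_1}(z^1/\|z^1\|)$, up to the transposition induced by $\textit{vec}$. This reduces to a symmetry of the Clebsch-Gordan tensor $\mathcal{Q}$ under the simultaneous swap of its input-pair, output-pair, and summation-index labels; it follows because both sides specify the same $SO(3)\times SO(3)$-equivariant projector between the correspondingly swapped irreps and the tensor-factor swap on the angular side corresponds precisely to matrix transpose after vectorization.

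Once this kernel identity is in place, the value $\mathbf{V}^{\vo,\vi}_{uv}$ is immediately swap-covariant, and summing over $\vi$ preserves this since $\vi \mapsto \tilde{\vi}$ is a bijection on the feature degrees. For the attention scalar $\alpha_{uv}$, the query $\mathbf{Q}_u$ and key $\mathbf{K}_{uv}$ transform by the same transpose rule (using $W_Q^{\vo} = W_Q^{\tilde{\vo}}$ for the query and the kernel identity for the key), so $\mathbf{Q}_u^{\top}\mathbf{K}_{uv}$ decomposes into a sum over $\vo$ of Frobenius-type inner products that are invariant under joint transposition, and the softmax over $v$ is therefore preserved. The KNN neighborhood itself is preserved because $s$ acts as a Euclidean isometry on $\mathbb{R}^{6}$. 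Combining the three terms yields swap-equivariance of the full layer. The main obstacle is the Clebsch-Gordan symmetry identity sketched above; I expect to verify it either via an abstract uniqueness argument on equivariant projectors between irreps of $SO(3)\times SO(3)$, or by direct computation using the standard recoupling formulas in an appendix.
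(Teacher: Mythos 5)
Your overall plan matches the paper's: verify swap-equivariance term by term (self-interaction, value/message, attention) after establishing a kernel-swap identity, and observe that the Elu branching condition is preserved because the relevant inner products are invariant under joint transposition. The neighborhood argument (that $s$ is an isometry of $\mathbb{R}^6$, so KNN is preserved) and the Frobenius-invariance of $\mathbf{Q}_u^\top \mathbf{K}_{uv}$ are also the same devices the paper uses. So up to the kernel step, your proof and the paper's coincide.

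The gap is in exactly that kernel step, and it is the crux of the whole proof. You state the needed identity
\begin{equation*}
    \mathcal{W}^{\vo,\vi}(z)\, f^{\vi}(z') \;=\; \bigl(\mathcal{W}^{\tilde{\vo},\tilde{\vi}}(s(z))\, f^{\tilde{\vi}}(s(z'))\bigr)^{T},
\end{equation*}
dispose of the radial factor, and then for the angular factor appeal to ``a symmetry of the Clebsch-Gordan tensor $\mathcal{Q}$ under the simultaneous swap of its input-pair, output-pair, and summation-index labels,'' which you say you will verify ``either via an abstract uniqueness argument \ldots or by direct computation using the standard recoupling formulas.'' This leaves the load-bearing claim unproved. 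Moreover, the appeal to a CG ``symmetry'' or to recoupling formulas over-engineers the step: the second-order coefficient here \emph{factorizes} as $\mathcal{Q}^{\vo,\vi}_{J_1,J_2} = \langle i_1, m_1, o_1, m_3 \mid J_1, m_5\rangle\, \langle i_2, m_2, o_2, m_4 \mid J_2, m_6\rangle$, a product of two ordinary first-order CG coefficients, one for each $SO(3)$ factor. Swapping $(\vo,\vi,J_1,J_2) \mapsto (\tilde{\vo},\tilde{\vi},J_2,J_1)$ merely permutes the two scalar factors, so the entry-wise expression for $\mathcal{W}^{\vo,\vi}(z)$ at index $(m_3,m_4,m_1,m_2)$ literally equals the entry-wise expression for $\mathcal{W}^{\tilde{\vo},\tilde{\vi}}(s(z))$ at index $(m_4,m_3,m_2,m_1)$, term by term; the paper records this as $\mathcal{W}^{\vo,\vi}(z) = \bigl(\mathcal{W}^{\tilde{\vo},\tilde{\vi}}(s(z))\bigr)^{T_{12}T_{34}}$ where $T_{12}T_{34}$ denotes transposing the output pair and input pair of the 4-index tensor. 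No recoupling identity or abstract uniqueness argument on equivariant projectors is required, and if you reach for Schur-style uniqueness you would still have to check irreducibility and fix normalizations, which is strictly more work than just matching indices. You should also be a little careful with the claim that ``the tensor-factor swap on the angular side corresponds precisely to matrix transpose after vectorization'': swapping the Kronecker factors $A\otimes B \mapsto B\otimes A$ is a commutation (perfect-shuffle) permutation, not a transpose; what makes the bookkeeping come out right here is precisely the simultaneous index relabeling $(m_3,m_4,m_1,m_2)\to(m_4,m_3,m_2,m_1)$ of the 4-index tensor, which is why the paper phrases the identity via $T_{12}T_{34}$ rather than a single transpose of the vectorized matrix.
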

More details, including the complete matching property (Prop.~\ref{Complete-match-prop}), can be found in Appx.~\ref{sec-Swap-weights-appx}.

\subsection{Incorporating scale-equivariance}
\label{Sec_scale}

This subsection seeks to incorporate scale equivariance to BITR,
\ie, to ensure that when $X$ and $Y$ are multiplied by a scale constant $c \in \mathbb{R}_+$,
the output result transforms correctly.
To this end,
we need to consider the scale group $(\mathbb{R}_+, \times)$,
\ie, the multiplicative group of $\mathbb{R}_+$,
and properly define the $(\mathbb{R}_+, \times)$-equivariance of the learned feature.
For simplicity,
we abbreviate group $(\mathbb{R}_+, \times)$ as $\mathbb{R}_+$.

We now consider the action of $\mathbb{R}_+$ on field $f$~\eqref{Bi-field}.
We call $f$ a \emph{degree-$p$  $\mathbb{R}_+$-equivariant field} ($p \in \mathbb{N}$) if it transforms as
$\left(c(f) \right)(z) = c^p f(c^{-1}z)$ under the action of $\mathbb{R}_+$,
where $z \in \mathbb{R}^6$ and $c \in \mathbb{R}_+$.
We immediately observe that degree-$1$  $\mathbb{R}_+$-equivariant features lead to scale-equivariant output.
Intuitively, 
if $\tilde{r}$, $t_X$ and $t_Y$ are degree-$1$  $\mathbb{R}_+$-equivariant features,
then they will become $c\tilde{r}$, $ct_X$ and $ct_Y$ under the action of $c$,
and the projection step will cancel the scale of $\tilde{r}$ while keeping the scale of $t_X$ and $t_Y$,
which is exactly the desirable results.
Formally,
we have the following proposition.
\begin{proposition}
    \label{scale-prop}
        Let $\Phi_P$ be projection~\eqref{Arun-proj}, 
        $f$ be a degree-$1$ $\mathbb{R}_+$-equivariant tensor field,
        and $(r, t) = \Phi_P(f)$.
        We have
        $\Phi_P\left( c(f)  \right) = (r, ct) \quad \forall c \in \mathbb{R}_+$.
\end{proposition}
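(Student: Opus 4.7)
The plan is to trace how the $\mathbb{R}_+$-action propagates through every ingredient of the Arun-type projection~\eqref{Arun-proj}. Intuitively, both the pooled features $\tilde{r}$, $t_X$, $t_Y$ and the key-point means $\boldsymbol{m}(\tilde{X})$, $\boldsymbol{m}(\tilde{Y})$ will pick up an overall factor of $c$; the $\textit{SVD}$ projection then kills this factor in the rotational part $r$ but preserves it in the translational part $t$, yielding exactly $(r, ct)$.

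First I would unpack the action of $c \in \mathbb{R}_+$ on the field $f(z) = \sum_{u=1}^{L} f_u \delta(z - z_u)$. By the degree-$1$ hypothesis $\bigl(c(f)\bigr)(z) = c\, f(c^{-1} z)$, the Dirac masses move from $z_u$ to $c z_u$ and each feature vector attached to the new base point is rescaled by $c$. In particular, the first three coordinates of the new base points are $c\tilde{x}_u$ and $c\tilde{y}_u$, so $\boldsymbol{m}(\tilde{X}) \mapsto c\,\boldsymbol{m}(\tilde{X})$ and $\boldsymbol{m}(\tilde{Y}) \mapsto c\,\boldsymbol{m}(\tilde{Y})$. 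Since pooling is just a channel-wise average over base points, the equivariant components scale linearly as well: $\tilde{r} \mapsto c\tilde{r}$, $t_X \mapsto c t_X$, $t_Y \mapsto c t_Y$, and hence $\hat{r} = \textit{unvec}(\tilde{r}) \mapsto c \hat{r}$.

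Next I would invoke the scale-invariance of the $\textit{SVD}$ projection: for any positive $c$ and any matrix $M$, one has $\textit{SVD}(cM) = \textit{SVD}(M)$, because the left and right singular vectors of $M$ and $cM$ coincide and only the singular values get rescaled. Applied to $\hat{r}$ this gives $r' = \textit{SVD}(c\hat{r}) = \textit{SVD}(\hat{r}) = r$. Substituting into the translation formula of~\eqref{Arun-proj},
\begin{equation*}
t' \;=\; \bigl(c\,\boldsymbol{m}(\tilde{Y}) + c t_Y\bigr) - r\bigl(c\,\boldsymbol{m}(\tilde{X}) + c t_X\bigr) \;=\; c\bigl[(\boldsymbol{m}(\tilde{Y}) + t_Y) - r(\boldsymbol{m}(\tilde{X}) + t_X)\bigr] \;=\; c\,t,
\end{equation*}
which is precisely the desired identity $\Phi_P\bigl(c(f)\bigr) = (r, ct)$.

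The main obstacle is bookkeeping rather than anything technical: the projection $\Phi_P$ implicitly reads the means $\boldsymbol{m}(\tilde{X}),\boldsymbol{m}(\tilde{Y})$ off the base points carried by $f$, so I need to verify that the $\mathbb{R}_+$-action on the six-dimensional base space is compatible with how these means are extracted and with how pooling is normalized. Once the convention that $f$ carries its base locations $\{z_u\}$ along under scaling is fixed, and once the assumption~\eqref{assumption-1} is invoked to ensure $\textit{SVD}(\hat{r})$ is uniquely defined along the entire scaling orbit, the chain of implications above closes and the proposition follows.
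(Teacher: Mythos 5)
Your argument is correct and follows essentially the same route as the paper's own proof: scale all the pooled equivariant features and the key-point means by $c$ via the degree-$1$ equivariance of $f$, use the fact that $\textit{SVD}(cA)=\textit{SVD}(A)$ for $c>0$ (the paper cites its Lemma~\ref{lemma-1} for this; you justify it inline) to conclude $r'=r$, then collect the $c$ factors in the translation formula to get $t'=ct$. Your treatment of the base-point bookkeeping — noting explicitly that the support of $c(f)$ moves to $\{cz_u\}$ so that $\boldsymbol{m}(\tilde{X}),\boldsymbol{m}(\tilde{Y})$ scale by $c$, and that assumption~\eqref{assumption-1} keeps the SVD projection well-defined along the scaling orbit — is slightly more explicit than the paper's, but the substance is the same.
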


The remaining problem is how to ensure that a $SE(3) \times SE(3)$-transformer is $\mathbb{R}_+$-equivariant and its output is of degree-$1$,
so that scaling the input can lead to the proper scaling of output.
Here we provide a solution based on the following proposition.
\begin{definition}
    $\varphi: \mathbb{R}\times \mathbb{R} \rightarrow \mathbb{R}$ is a degree-$p$ function if $\varphi(cx, cy)=c^p\varphi(x,y)$ for all $c \in \mathbb{R}_+$.
\end{definition}

\begin{proposition}
    \label{scale-weights}
    1) Denote $\varphi_K$ and $\varphi_V$ the radial functions used in $\mathbf{K}$ and $\mathbf{V}$ respectively.
    Let $\varphi_K$ be a degree-$0$ function, 
    $f_{in}$ be a degree-$0$ $\mathbb{R}_+$-equivariant input field.
    For transformer layer~\eqref{Bi-SE3-layer},
    if $\varphi_V$ is a degree-$1$ function and the self-interaction weight $W=0$, 
    then the output field $f_{out}$ is degree-$1$ $\mathbb{R}_+$-equivariant;
    If $\varphi_V$ is a degree-$0$ function,
    then the output field $f_{out}$ is degree-$0$ $\mathbb{R}_+$-equivariant.

    2) For Elu layer~\eqref{Relu-def},
    if the input field is degree-$p$ $\mathbb{R}_+$-equivariant,
    then the output field is also degree-$p$ $\mathbb{R}_+$-equivariant.
\end{proposition}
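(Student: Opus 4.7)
The plan is to directly track how each term in the transformer layer~\eqref{Bi-SE3-layer} and the Elu layer~\eqref{Relu-def} rescales under $c \in \mathbb{R}_+$, then read off the degree of the output field. I would first record three bookkeeping observations: positions transform as $z_u \mapsto c z_u$, so differences satisfy $z_{vu} \mapsto c z_{vu}$; the angular component of the kernel~\eqref{Bi-SE3-kernel} depends only on the unit vectors $z^1/\|z^1\|$ and $z^2/\|z^2\|$ and is therefore scale-invariant; and a degree-$p$ radial function satisfies $\varphi(c\|z^1\|, c\|z^2\|) = c^p \varphi(\|z^1\|, \|z^2\|)$. Consequently, a kernel $\mathcal{W}^{\vo,\vi}$ built from a degree-$p$ radial function rescales overall by $c^p$.

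For part 1, I would proceed in three stages. First, show that both the query $\mathbf{Q}_u$ and the key $\mathbf{K}_{uv}$ in~\eqref{bi-query} are scale-invariant: $\mathbf{Q}_u$ because it is a pointwise linear map on a degree-$0$ field, and $\mathbf{K}_{uv}$ because $\varphi_K$ is degree-$0$ and $f_{in}$ is degree-$0$ $\mathbb{R}_+$-equivariant. This forces the softmax weights $\alpha_{uv}$ in~\eqref{SE3-attention} to be scale-invariant, which is the critical step that prevents the non-linearity from interfering with the degree count. Second, examine the message $\mathbf{V}_{uv}$: when $\varphi_V$ is degree-$1$, $\mathbf{V}_{uv}$ rescales by $c$, and when $\varphi_V$ is degree-$0$ it is scale-invariant. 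Third, account for the self-interaction term $W^{\vo} F_{in}^{\vo}(z_u)$: because $F_{in}$ is degree-$0$, this term is always scale-invariant. Combining the three: in the degree-$1$ case the aggregated messages rescale by $c$, but the self-interaction is degree-$0$, so we must impose $W=0$ to eliminate that contamination and obtain a clean degree-$1$ output; in the degree-$0$ case the message and self-interaction are both scale-invariant, so no constraint on $W$ is needed.

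For part 2, I would use that $F_\mu = W_\mu^{\vi} F^{\vi}$ and $F_\nu = W_\nu^{\vi} F^{\vi}$ depend linearly on the input, so a degree-$p$ input rescales both by $c^p$. Hence $\langle F_\mu, F_\nu\rangle$ rescales by $c^{2p}>0$, preserving its sign, so the same branch of~\eqref{Relu-def} is selected before and after scaling. In the first branch the output is $F_\mu$ itself, scaling as $c^p$; in the second, $F_\nu/\|F_\nu\|$ is scale-invariant while $\langle F_\mu, F_\nu/\|F_\nu\|\rangle$ scales as $c^p$, so the projected residual also scales as $c^p$.

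The main subtlety — rather than a serious technical obstacle — is recognizing why $W=0$ is genuinely necessary in the degree-$1$ case: without it, the self-interaction contributes a degree-$0$ term that cannot be absorbed into a degree-$1$ output, since no kernel ever attaches the missing factor of $c$ to it. The rest is routine degree-counting, and the argument's cleanness hinges on the fact that the softmax's argument is already scale-invariant in our setting so the non-linearity is harmless.
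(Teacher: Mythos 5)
Your proposal is correct and follows essentially the same route as the paper's proof: establish the kernel scaling lemma from the radial function's degree, show $\mathbf{Q}$ and $\mathbf{K}$ (hence the attention weights) are scale-invariant under the degree-$0$ hypotheses, and then read off the degree of the output from the scaling of the message and the self-interaction term; your branch-preservation argument for the Elu layer fills in a proof the paper explicitly omits. The remark on why $W=0$ is needed in the degree-$1$ case matches the paper's reasoning.
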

More discussions can be found in Appx.~\ref{sec_scale_weight_appx}.

\section{Experiments and analysis}
This section experimentally evaluates the proposed BITR.
After describing the experiment settings in Sec.~\ref{Sec_exp_setting}, 
we first present a simple example in Sec.~\ref{Sec-toy} to highlight the equivariance of BITR.
Then we evaluate BITR on assembling the shapes in ShapeNet~\cite{shapenet2015}, 
BB dataset~\cite{sellan2022breaking}, 7Scenes~\cite{shotton2013scene} and ASL~\cite{pomerleau2012challenging} from Sec.~\ref{Sec-PC-regist} to Sec.~\ref{Sec-Frag-Re}.
We finally apply BITR to visual manipulation tasks in Sec.~\ref{Sec-vis-mani}.

\subsection{Experiment settings}
\label{Sec_exp_setting}
We extract $L=32$ key points for each PC.
The $SE(3)$-transformer and the $SE(3)\times SE(3)$-transformer both contain $2$ layers with $c=4$ channels.
We consider $k=24$ nearest neighborhoods for message passing.
We only consider low degree equivariant features, \ie, $p,q \in \{0, 1\}$ for efficiency.
We train BITR using Adam optimizer~\citep{kingma2014adam} with learning rate $1e^{-4}$.
We use the loss function $L = \| r^{T} r_{gt} - I \|_2^2 + \| t_{gt} - t \|_2^2$,
where $(r,t)$ are the output transformation,
$(r_{gt}, t_{gt})$ are the corresponding ground truth.
We evaluate all methods by isotropic rotation and translation errors:
$\Delta r = (180/\pi) \textit{accos}\left( 1/2 \left( \textit{tr}(rr^{T}_{gt}) - 1) \right)   \right)$, and $\Delta t = \| t_{gt} - t \|$
where $\textit{tr}(\cdot)$ is the trace of a matrix.
We do not use random rotation and translation augmentations as~\cite{qin2022geometric}.
More details are in Appx.~\ref{sec-appx-more-details}.

\subsection{A proof-of-concept example}
\label{Sec-toy}
To demonstrate the equivariance property of BITR,
we train BITR on the bunny shape~\citep{Sbunny}.
We prepare the dataset similar to~\citep{yew2020rpm}:
In each training iteration,
we first construct the raw PC $S$ by uniformly sampling $2048$ points from the bunny shape and adding $200$ random outliers from $[-1, 1]^3$,
then we obtain PCs $\{X_P, Y_P\}$ by dividing $S$ into two parts of ratio $(30\%, 70\%)$ using a random plane $P$.
We train BITR to reconstruct $S$ using randomly rotated and translated $\{X_P, Y_P\}$.
To construct the test set,
we generate a new sample $\{X_{\tilde{P}}, Y_{\tilde{P}}\}$,
and additionally construct $3$ test samples by 1) swapping,  
2) scaling (factor $2$) and 3) randomly rigidly perturbing $\{X_{\tilde{P}}, Y_{\tilde{P}}\}$.

The assembly results of BITR on these four test samples are shown in Fig.~\ref{test-examples}.
We observe that BITR performs equally well in all cases.
Specifically,
the differences between the rotation errors in these four cases are small (less than $1e^{-3}$).
The results suggest that BITR is indeed robust against these three perturbations,
which verifies its swap-equivariance, 
scale-equivariance and $SE(3)$-bi-equivariance.
More experiments can be found in the appendix:
a numerical verification of Def.~\ref{equivariance_def} is presented in Appx.~\ref{Sec-app-toy},
an ablation study of swap and scale equivariances are presented in Appx.~\ref{appx_ablation},
and the verification of the complete-matching property~\ref{Complete-match-prop} is presented in Appx.~\ref{complete-matching-exp-app}.

\begin{figure}[ht!]
    \centering
    \vspace{-5mm}
    \subfigure[Original $\quad$ $\quad$ $(\Delta r = 11.275)$]{
        \begin{minipage}[b]{0.21\linewidth}
            \centering
            \includegraphics[width=1.0\linewidth]{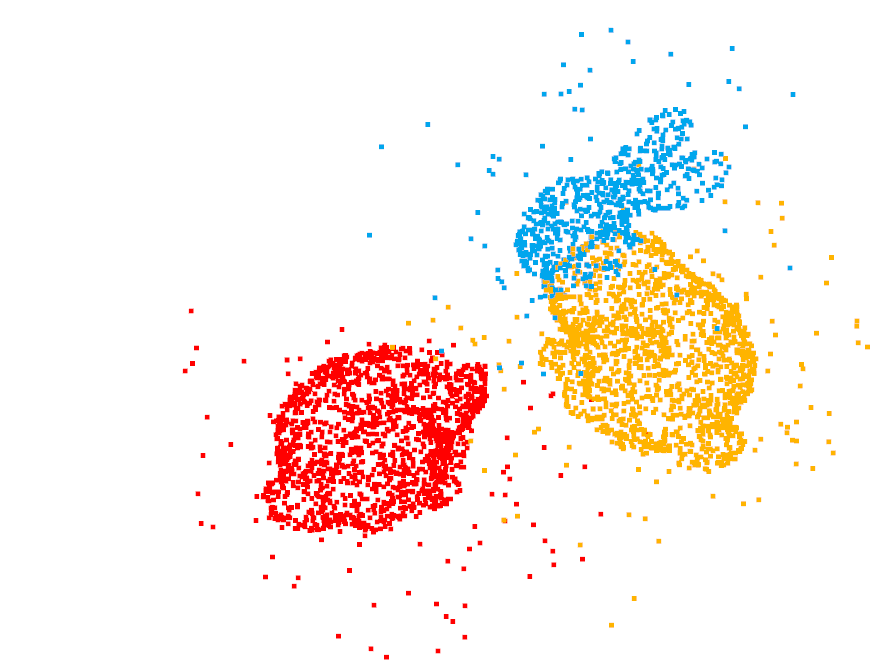}
        \end{minipage}
        \label{a-sample}
    }
    \subfigure[Swapped $\quad$ $\quad$ $(\Delta r = 11.275)$]{
        \begin{minipage}[b]{0.21\linewidth}
            \centering
            \includegraphics[width=1.0\linewidth]{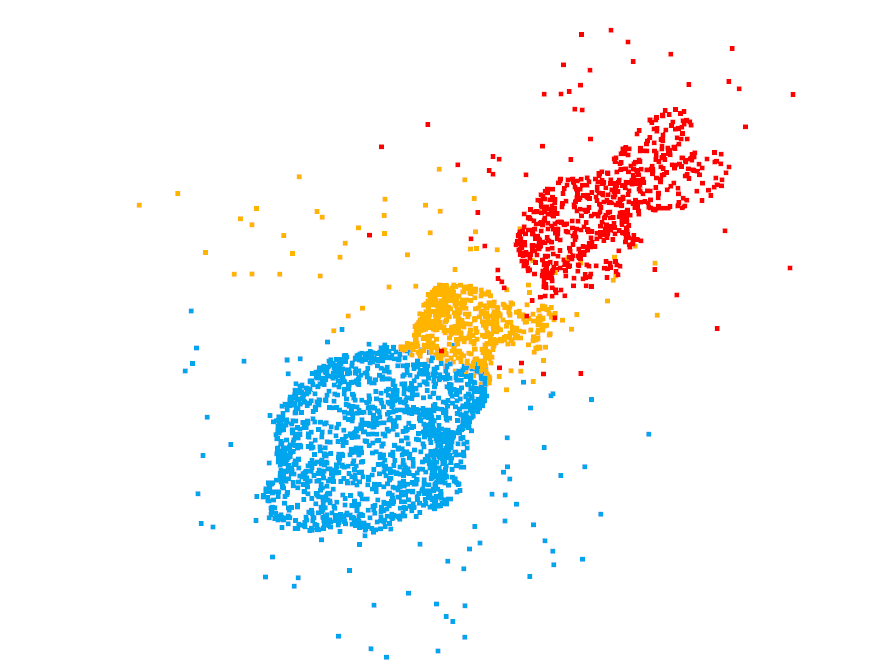}
        \end{minipage}
        \label{b-sample}
    }
    \subfigure[Rigidly perturbed $\quad$ $\quad$ $(\Delta r = 11.276)$]{
        \begin{minipage}[b]{0.21\linewidth}
            \centering
            \includegraphics[width=1.0\linewidth]{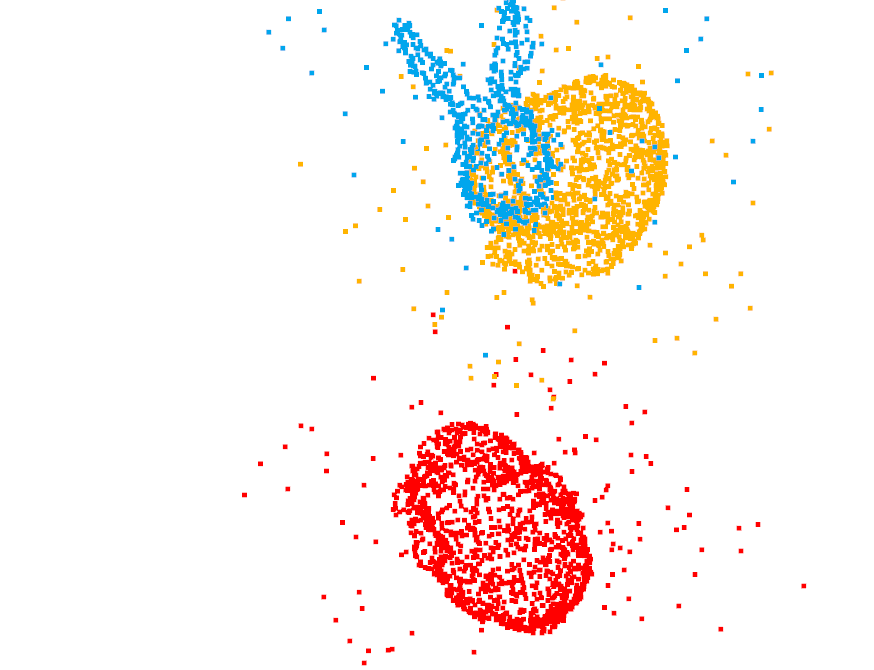}
        \end{minipage}
        \label{d-sample}
    }
    \subfigure[Scaled $(\Delta r = 11.276)$]{
        \begin{minipage}[b]{0.25\linewidth}
            \centering
            \includegraphics[width=1.0\linewidth]{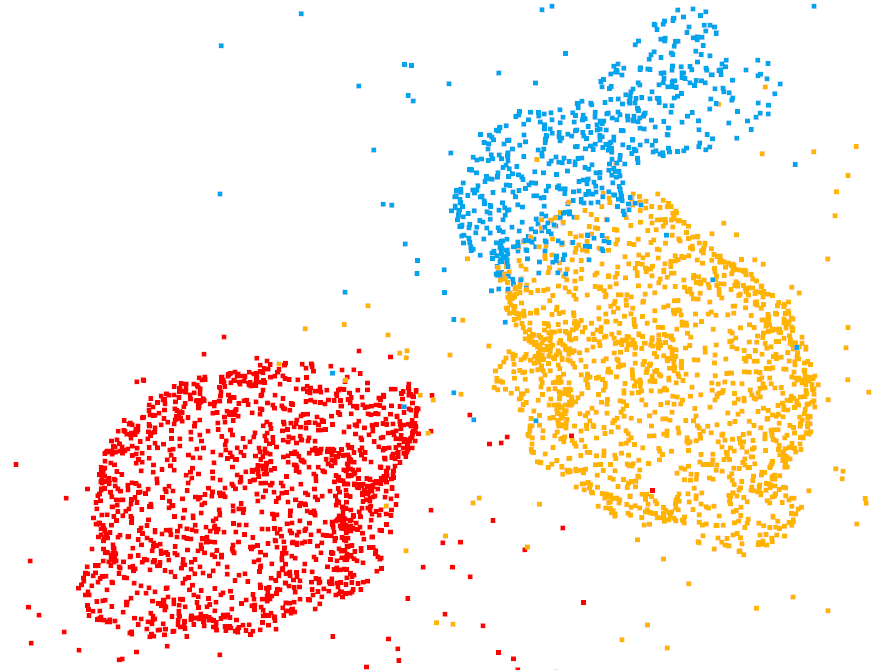}
        \end{minipage}
        \label{c-sample}
    }
  \caption{The results of BITR on a test example~\subref{a-sample},
  and the swapped~\subref{b-sample}, scaled~\subref{c-sample} and rigidly perturbed~\subref{d-sample} inputs. 
  The red, yellow and blue colors represent the source,
  transformed source and reference PCs respectively.
  }
  \vspace{-5mm}
  \label{test-examples}
  \end{figure}

\subsection{Results on ShapeNet}

\subsubsection{Single shape assembly}
\label{Sec-PC-regist}
In this experiment,
we evaluate BITR on assembling PCs sampled from a single shape.
When the inputs PCs are overlapped, 
this setting is generally known as PC registration.
We construct a dataset similar to~\cite{yew2020rpm}:
for a shape in the airplane class of ShapeNet~\citep{shapenet2015},
we obtain each of the input PCs by uniformly sampling $1024$ points from the shape,
and keep ratio $s$ of the raw PC by cropping it using a random plane.
We vary $s$ from $0.7$ to $0.3$.
\emph{Note the PCs may be non-overlapped when $s < 0.5$.}

We compare BITR against the state-of-the-art registration methods GEO~\citep{qin2022geometric} and ROI~\cite{yu2023rotation},
and the state-of-the-art fragment reassembly methods NSM~\cite{chen2022neural} and LEV~\cite{wu2023leveraging}.
For NSM and LEV,
we additionally provide the canonical pose for each shape.
Note that LEV and ROI are $SE(3)$-equivariant methods.
For $s \geq 0.5$,
we report the results of BITR fine-tuned by ICP~\citep{zhang1994iterative} (BITR+ICP).
Note that BITR+ICP is $SE(3)$-bi-equivariant and scale-equivariant,
but not swap-equivariant.

\begin{wrapfigure}{r}{.45\textwidth}
    \vspace{-3mm}
    \begin{minipage}{\linewidth}
        \centering
        \includegraphics[width=0.5\linewidth]{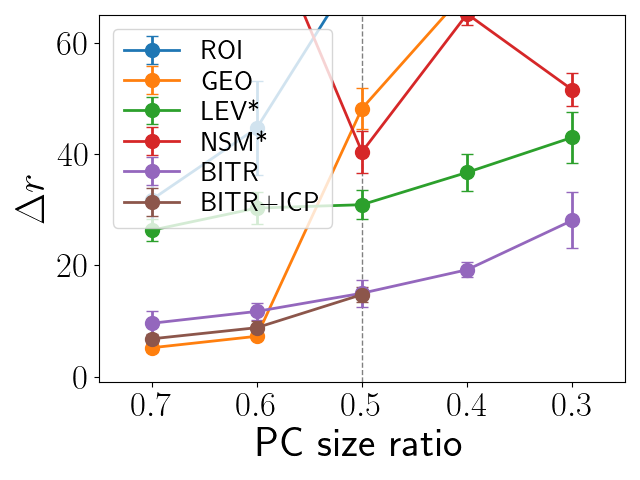} \hspace{-2mm}
        \includegraphics[width=0.5\linewidth]{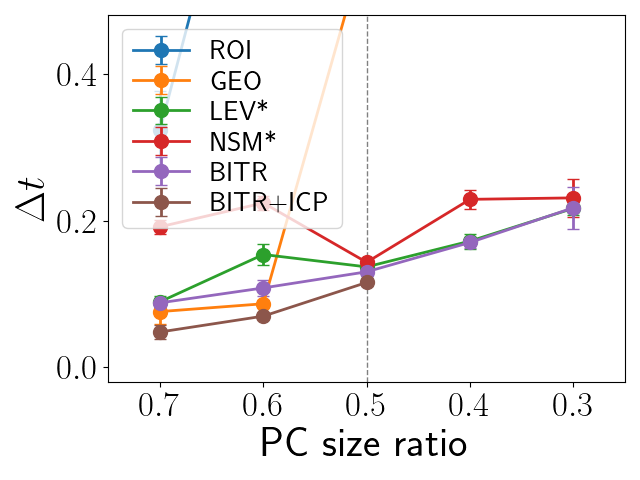}
    \end{minipage}
\caption{
    Assembly results on the airplane dataset.
  $*$ denotes methods which require the true canonical poses of the input PCs.
}
\vspace{-1mm}
\label{fig-registration}
\end{wrapfigure}
We present the results in Fig.~\ref{fig-registration}.
We observe that the performance of all methods decrease as $s$ decreases.
Meanwhile,
BITR outperforms all baseline methods when $s$ is small ($s \leq 0.5$).
On the other hand,
when $s$ is large ($s > 0.5$),
BITR performs worse than GEO,
but it still outperforms other baselines.
Nevertheless,
since the results of BITR are sufficiently close to optimum ($\Delta r \leq 20$),
the ICP refinement can lead to improved results that are close to GEO.
More details can be found in Appx.~\ref{Sec-PC-regist-app}.

\subsubsection{Inter-class assembly}
\label{Sec-PC-regist-inter}
\begin{wrapfigure}{r}{.35\textwidth}
    \vspace{-8mm}
    \begin{minipage}{\linewidth}
        \centering
        \includegraphics[width=0.7\linewidth]{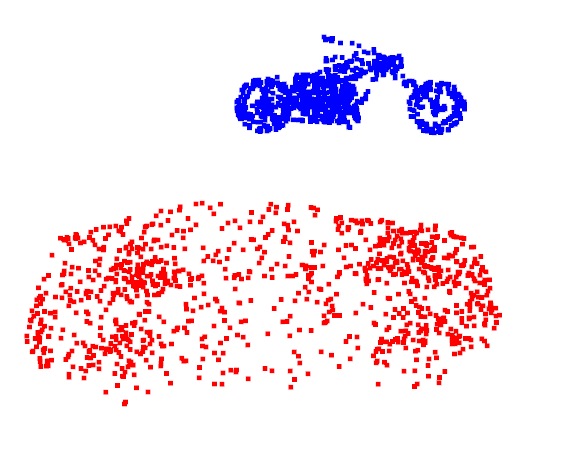}
    \end{minipage}
    \vspace{-3mm}
\caption{
    A result of BITR on assembling a motorbike and a car.
}
\vspace{-7mm}
\label{fig-registration-inter}
\end{wrapfigure}
To evaluate BITR on non-overlapped PCs,
we extend the experiment in Sec.~\ref{Sec-PC-regist} to inter-class assembly.
We train BITR to place a car shape on the right of motorbike shape, 
so that their directions are the same 
and their distance is $1$.
We consider $s=1.0$ and $0.7$.
Note that this task is beyond the scope of registration methods,
since the input PCs are non-overlapped.
A result of BITR is shown in Fig.~\ref{fig-registration-inter}.
More details can be found in Appx.~\ref{Sec-PC-regist-inter-app}.

\subsection{Results on fragment reassembly}
\label{Sec-Frag-Re}
This subsection evaluates BITR on a fragment reassembly task.
We compare BITR against NSM~\citep{chen2022neural}, LEV~\citep{wu2023leveraging} and DGL~\citep{zhan2020generative} on the $2$-fragment WineBottle class of the BB dataset~\citep{sellan2022breaking}.
The data preprocessing step is described in Appx.~\ref{Sec-Frag-Re-appx}.

\begin{wraptable}{r}{.5\textwidth}
	\begin{center}
        \vspace{-7mm}
	  \caption{Reassembly results on $2$-fragment WineBottle. 
      We report the mean and std of the error of BITR.}
    \label{BB-table}
      \begin{tabular}{c c c} 
		  \hline
           & $\Delta r $ & $\Delta t$  \\
		\hline
	  \centering
    DGL  & 101.3 & 0.09 \\
    NSM  & 101   & 0.18 \\
    LEV  &  98.3 & 0.17 \\
    BITR (Ours) &  8.4 (0.9)  & 0.07 (0.008) \\
    \hline
	  \end{tabular}
	\end{center}
	\vspace{-5mm}
\end{wraptable}
We test the trained BITR $3$ times using different random samples, 
and report the mean and standard deviation of $(\Delta r, \Delta t)$ in Tab.~\ref{BB-table}.
We observe that BITR outperforms all baseline methods:
BITR achieves the lowest rotation errors,
and its translation error is comparable to DGL,
which is lower than other baselines by a large margin.
We provide some qualitative comparisons in Appx.~\ref{Sec-Frag-Re-appx}.

\subsection{Results on real data}
\label{Sec-7Scenes}
This subsection evaluates BITR on an indoor dataset 7Scenes~\cite{shotton2013scene} and the outdoor scenes in ASL dataset~\cite{pomerleau2012challenging}.
We present the results on 7Scenes  in this section, 
and leave the results on ASL  and some qualitative results to Appx.~\ref{Sec-7Scenes-app}.

For the 7Scenes dataset,
we arbitrarily rotate and translate all frames,
and train BITR to align all adjacent frames.
We use the data from the first $6$ scenes as the training set,
and the data from $7$-th scene as the test set.
To train BITR,
we use a random clipping augmentation similar to Sec.~\ref{Sec-PC-regist}:
we keep ratio $s$ of each PCs by clipping them using a random plane,
where $s$ is uniformly distributed in $[0.5, 1.0]$.
We compare BITR against GEO~\cite{qin2022geometric}, ROI~\cite{yu2023rotation}, ICP~\citep{zhang1994iterative} and OMN~\cite{xu2021omnet},
where OMN is a recently proposed correspondence-free registration method.

\begin{wraptable}{r}{.45\textwidth}
        \begin{center}
            \vspace{-8mm}
          \caption{Results on 7Scenes. We report mean and std of  $\Delta r$ and $\Delta t$.}
          \setlength{\tabcolsep}{2.0pt} 
        \label{tab-7scenes}
          \begin{tabular}{c c c c c} 
              \hline
               & $\Delta r$ & $\Delta t$   \\
            \hline
          \centering
            ICP & 73.2 (5.7) & 2.4 (0.2) \\
            OMN & 129.02 (2.15) & 0.98 (0.06) \\
            GEO & 9.2 (0.02) & 0.2 (0.08) \\
            ROI & 9.0 (0.0) & 0.2 (0.0) \\
            BITR (Ours) & 26.7 (0.0) & 0.8 (0.0) \\
            BITR+ICP (Ours) & 11.1 (0.0) & 0.3 (0.0) \\
            BITR+OT (Ours) & 9.5 (0.0) & 0.3 (0.0) \\
          \hline
          \end{tabular}
        \end{center}
    \end{wraptable}
The results on 7Scenes are reported in Tab.~\ref{tab-7scenes}.
We observe that BITR can produce results that are close to the optimum ($\Delta r \approx 25$) from a random initialization ($\Delta r \in U[0, 180]$),
and extra refinements like ICP and OT can further improve the results ($\Delta r \approx 10$).
This observation is consistent with that in Sec.~\ref{Sec-PC-regist}.
In particular,
BITR with the OT refinement is comparable with GEO and ROI,
which use highly complicated features specifically designed for registration tasks and an OT-like refinement process.
On the other hand,
ICP and OMN fails in this task due to their sensitiveness to initial positions.

\subsection{Results on visual manipulation}
\label{Sec-vis-mani}
This subsection investigates the potential of BITR in manipulation tasks.
Following~\cite{ryu2023equivariant},
we consider two tasks: mug-hanging and bowl-placing.
For both tasks,
$X$ represents an object grasped by a robotic arm, \ie, a cup or a bowl,
$Y$ represents the fixed environment with a target, \ie, a stand or a plate,
and we train BITR to align $X$ to $Y$,
so that the cup can be hung to the stand,
or the bowl can be placed on the plate.

Fig.~\ref{visual-manipulation} presents the results of BITR on bowl-placing.
We observe that although BITR is not originally designed for manipulation tasks,
it can place the bowl in a reasonable position relative to the plate.
However,
we also notice that BITR may produce unrealistic results,
\eg, the PCs may collide.
Thus,
post-processing steps~\citep{schauer2015collision} or extra regularizers~\cite{ganea2021independent} may be necessary in practical applications.
More results and discussions can be found in Appx.~\ref{Sec-vis-mani-app}.

\begin{figure}[h!]
    \centering
    \vspace{-4mm}
    \subfigure[A success case of bowl-placing]{
        \begin{minipage}[b]{0.45\linewidth}
            \centering
            \includegraphics[width=0.45\linewidth]{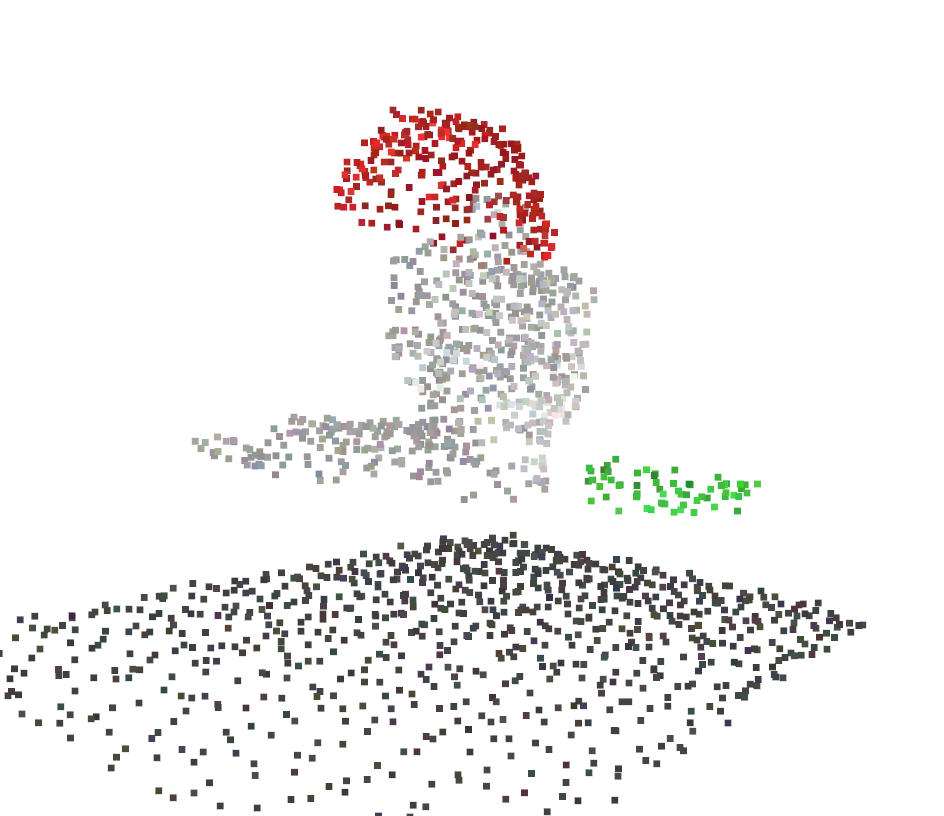}
            \includegraphics[width=0.45\linewidth]{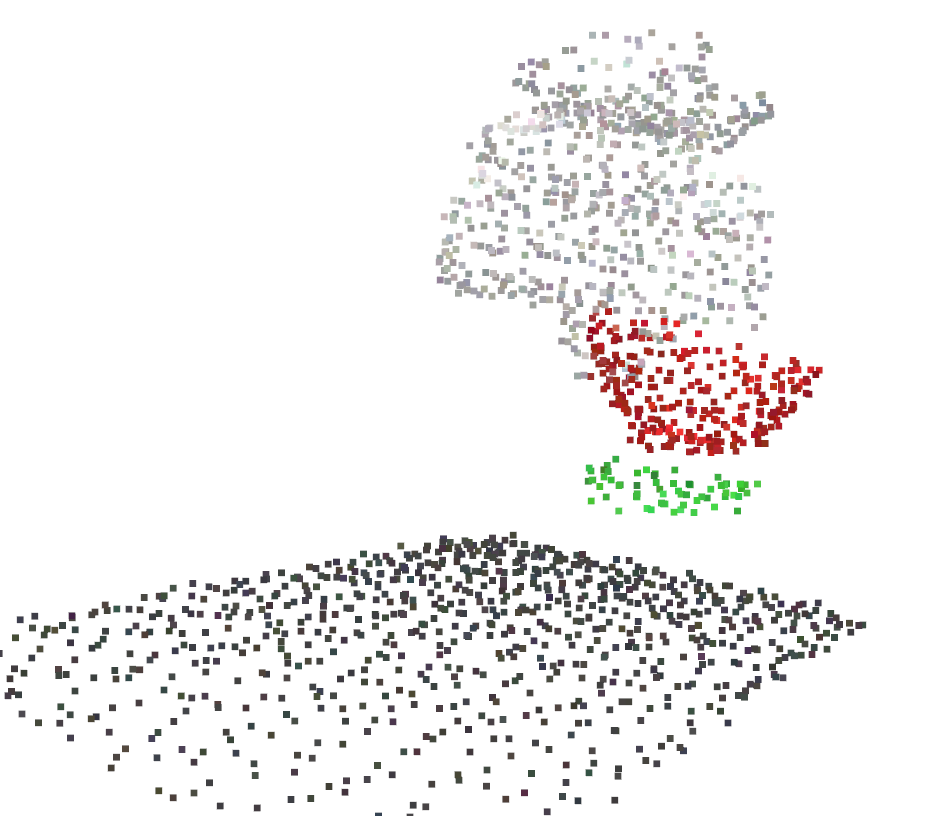}
        \end{minipage}
        \label{fig-bowl-placing}
    }
    \subfigure[A failure case of bowl-placing]{
        \begin{minipage}[b]{0.45\linewidth}
            \centering
            \includegraphics[width=0.45\linewidth]{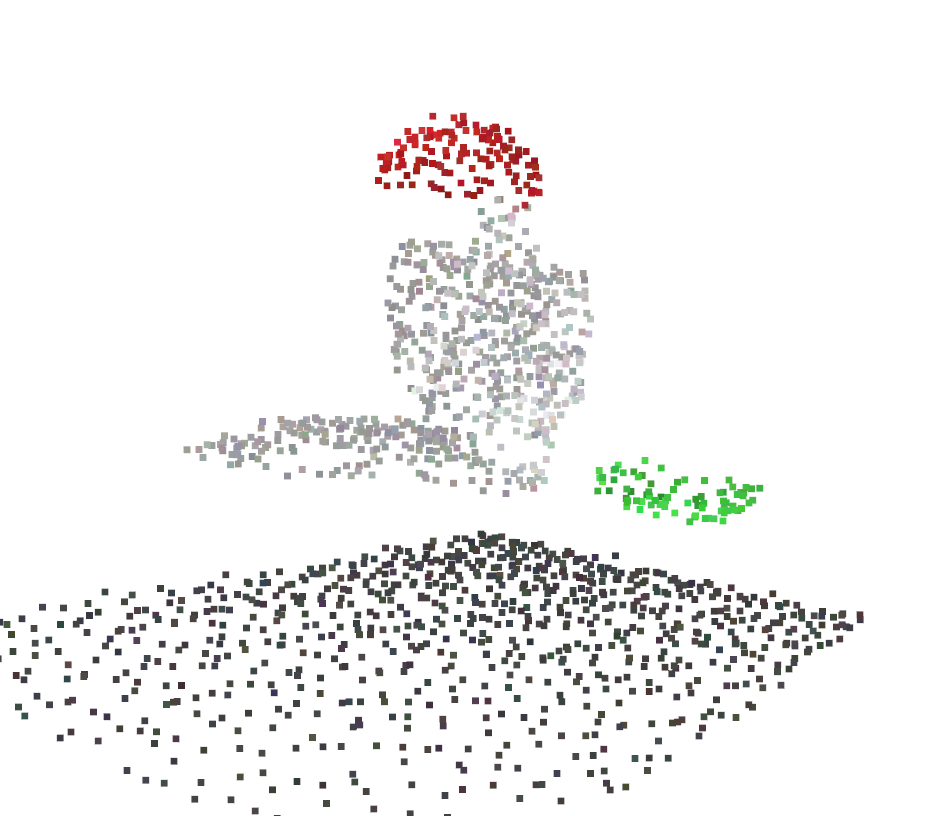}
            \includegraphics[width=0.45\linewidth]{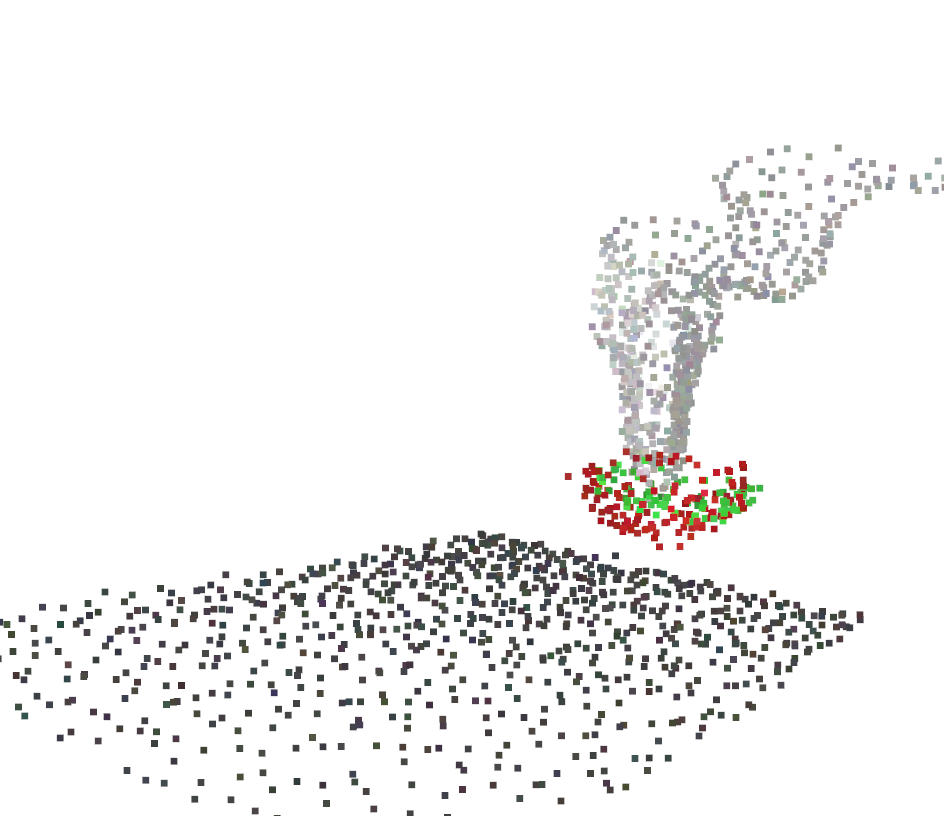}
        \end{minipage}
        \label{fig-bowl-placing-fail}
    }
  \caption{The results of BITR on bowl-placing.
  We present the input PCs (left panel) and the assembled results (right panel).
  BITR can generally place the bowl (red) on the plate (green)~\subref{fig-mug-hanging},
  but it sometimes produces unrealistic results where collision exists~\subref{fig-bowl-placing-fail}.
  }
  \vspace{-5mm}
  \label{visual-manipulation}
  \end{figure}

\section{Conclusion}
\label{sec-conclusion}
This work proposed a PC assembly method, called BITR.
The most distinguished feature of BITR is that it is correspondence-free, 
$SE(3)$-bi-equivariant,
scale-equivariant and swap-equivariant.
We experimentally demonstrated the effectiveness of BITR.

BITR in its current form has two main limitations.
First, 
BITR is computationally inefficient because each degree of feature is computed independently without parallel.
This issue was also observed in SE(3)-equivariant networks, 
and was recently addressed by~\cite{passaro2023reducing}.
A promising future research direction is to develope similar acceleration techniques for BITR.
Second,
since BITR is deterministic,
\ie,
it only predicts one result for a given input,
it cannot handle symmetric PCs.
Although this feature does not cause any difficulty in this work (there is no strictly symmetric PCs in this work due to noise, random sampling, etc),
it may be problematic in future applications such as molecule modelling where symmetric PCs exist,
\eg, benzene rings.
To address this issue,
we plan to generalize BITR to a generative model in the future.
More discussions can be found in Appx.~\ref{appx-discussion}.

\section*{Acknowledgments}
This work is funded by the Swedish Research Council through grant agreement no. 2019-03686 and Chalmers AI Research Initiative (CHAIR).
The computations were enabled by resources provided by the National Academic Infrastructure for Supercomputing in Sweden (NAISS), 
partially funded by the Swedish Research Council through grant agreement no. 2022-06725.
The authors would like to thank Jan Gerken at Chalmers and Nan Xue at Ant Group for useful discussions,
and thank the anonymous reviewers for their insightful comments and suggestions.

\bibliography{Bieq}

\begin{thebibliography}{10}

\bibitem{agarap2018deep}
Abien~Fred Agarap.
\newblock Deep learning using rectified linear units (relu).
\newblock {\em arXiv preprint arXiv:1803.08375}, 2018.

\bibitem{arun1987least}
K~Somani Arun, Thomas~S Huang, and Steven~D Blostein.
\newblock Least-squares fitting of two 3-d point sets.
\newblock {\em IEEE Transactions on pattern analysis and machine intelligence},
  (5):698--700, 1987.

\bibitem{ba2016layer}
Jimmy~Lei Ba, Jamie~Ryan Kiros, and Geoffrey~E Hinton.
\newblock Layer normalization.
\newblock {\em arXiv preprint arXiv:1607.06450}, 2016.

\bibitem{besl1992method}
Paul~J Besl and Neil~D McKay.
\newblock Method for registration of 3-d shapes.
\newblock In {\em Sensor fusion IV: control paradigms and data structures},
  volume 1611, pages 586--606. Spie, 1992.

\bibitem{cesa2021program}
Gabriele Cesa, Leon Lang, and Maurice Weiler.
\newblock A program to build e (n)-equivariant steerable cnns.
\newblock In {\em International Conference on Learning Representations}, 2022.

\bibitem{shapenet2015}
Angel~X. Chang, Thomas Funkhouser, Leonidas Guibas, Pat Hanrahan, Qixing Huang,
  Zimo Li, Silvio Savarese, Manolis Savva, Shuran Song, Hao Su, Jianxiong Xiao,
  Li~Yi, and Fisher Yu.
\newblock {ShapeNet: An Information-Rich 3D Model Repository}.
\newblock Technical Report arXiv:1512.03012 [cs.GR], Stanford University ---
  Princeton University --- Toyota Technological Institute at Chicago, 2015.

\bibitem{chen2022neural}
Yun-Chun Chen, Haoda Li, Dylan Turpin, Alec Jacobson, and Animesh Garg.
\newblock Neural shape mating: Self-supervised object assembly with adversarial
  shape priors.
\newblock In {\em Proceedings of the IEEE/CVF Conference on Computer Vision and
  Pattern Recognition}, pages 12724--12733, 2022.

\bibitem{coumans2016pybullet}
Erwin Coumans and Yunfei Bai.
\newblock Pybullet, a python module for physics simulation for games, robotics
  and machine learning.
\newblock 2016.

\bibitem{deng2021vector}
Congyue Deng, Or~Litany, Yueqi Duan, Adrien Poulenard, Andrea Tagliasacchi, and
  Leonidas~J Guibas.
\newblock Vector neurons: A general framework for so (3)-equivariant networks.
\newblock In {\em Proceedings of the IEEE/CVF International Conference on
  Computer Vision}, pages 12200--12209, 2021.

\bibitem{deng2018ppf}
Haowen Deng, Tolga Birdal, and Slobodan Ilic.
\newblock Ppf-foldnet: Unsupervised learning of rotation invariant 3d local
  descriptors.
\newblock In {\em Proceedings of the European conference on computer vision
  (ECCV)}, pages 602--618, 2018.

\bibitem{fuchs2020se}
Fabian Fuchs, Daniel Worrall, Volker Fischer, and Max Welling.
\newblock Se(3)-transformers: 3d roto-translation equivariant attention
  networks.
\newblock {\em Advances in neural information processing systems},
  33:1970--1981, 2020.

\bibitem{gainza2023novo}
Pablo Gainza, Sarah Wehrle, Alexandra Van Hall-Beauvais, Anthony Marchand,
  Andreas Scheck, Zander Harteveld, Stephen Buckley, Dongchun Ni, Shuguang Tan,
  Freyr Sverrisson, et~al.
\newblock De novo design of protein interactions with learned surface
  fingerprints.
\newblock {\em Nature}, pages 1--9, 2023.

\bibitem{ganea2021independent}
Octavian-Eugen Ganea, Xinyuan Huang, Charlotte Bunne, Yatao Bian, Regina
  Barzilay, Tommi Jaakkola, and Andreas Krause.
\newblock Independent se (3)-equivariant models for end-to-end rigid protein
  docking.
\newblock {\em arXiv preprint arXiv:2111.07786}, 2021.

\bibitem{gojcic2020learning}
Zan Gojcic, Caifa Zhou, Jan~D Wegner, Leonidas~J Guibas, and Tolga Birdal.
\newblock Learning multiview 3d point cloud registration.
\newblock In {\em Proceedings of the IEEE/CVF conference on computer vision and
  pattern recognition}, pages 1759--1769, 2020.

\bibitem{kingma2014adam}
Diederik~P Kingma and Jimmy Ba.
\newblock Adam: A method for stochastic optimization.
\newblock {\em arXiv preprint arXiv:1412.6980}, 2014.

\bibitem{lang2020wigner}
Leon Lang and Maurice Weiler.
\newblock A wigner-eckart theorem for group equivariant convolution kernels.
\newblock In {\em International Conference on Learning Representations}, 2021.

\bibitem{lin2023coarse}
Cheng-Wei Lin, Tung-I Chen, Hsin-Ying Lee, Wen-Chin Chen, and Winston~H Hsu.
\newblock Coarse-to-fine point cloud registration with se (3)-equivariant
  representations.
\newblock In {\em 2023 IEEE international conference on robotics and automation
  (ICRA)}, pages 2833--2840. IEEE, 2023.

\bibitem{long2015fully}
Jonathan Long, Evan Shelhamer, and Trevor Darrell.
\newblock Fully convolutional networks for semantic segmentation.
\newblock In {\em Proceedings of the IEEE conference on computer vision and
  pattern recognition}, pages 3431--3440, 2015.

\bibitem{midtvedt2022single}
Benjamin Midtvedt, Jes{\'u}s Pineda, Fredrik Sk{\"a}rberg, Erik Ols{\'e}n,
  Harshith Bachimanchi, Emelie Wes{\'e}n, Elin~K Esbj{\"o}rner, Erik Selander,
  Fredrik H{\"o}{\"o}k, Daniel Midtvedt, et~al.
\newblock Single-shot self-supervised object detection in microscopy.
\newblock {\em Nature communications}, 13(1):7492, 2022.

\bibitem{pan2023tax}
Chuer Pan, Brian Okorn, Harry Zhang, Ben Eisner, and David Held.
\newblock Tax-pose: Task-specific cross-pose estimation for robot manipulation.
\newblock In {\em Conference on Robot Learning}, pages 1783--1792. PMLR, 2023.

\bibitem{passaro2023reducing}
Saro Passaro and C~Lawrence Zitnick.
\newblock Reducing so (3) convolutions to so (2) for efficient equivariant
  gnns.
\newblock In {\em International Conference on Machine Learning}, pages
  27420--27438. PMLR, 2023.

\bibitem{pomerleau2012challenging}
Fran{\c{c}}ois Pomerleau, Ming Liu, Francis Colas, and Roland Siegwart.
\newblock Challenging data sets for point cloud registration algorithms.
\newblock {\em The International Journal of Robotics Research},
  31(14):1705--1711, 2012.

\bibitem{qin2022geometric}
Zheng Qin, Hao Yu, Changjian Wang, Yulan Guo, Yuxing Peng, and Kai Xu.
\newblock Geometric transformer for fast and robust point cloud registration.
\newblock In {\em Proceedings of the IEEE/CVF conference on computer vision and
  pattern recognition}, pages 11143--11152, 2022.

\bibitem{scatter}
rusty1s.
\newblock pytorchscatter.
\newblock 2023.

\bibitem{rusu2009fast}
Radu~Bogdan Rusu, Nico Blodow, and Michael Beetz.
\newblock Fast point feature histograms (fpfh) for 3d registration.
\newblock In {\em 2009 IEEE international conference on robotics and
  automation}, pages 3212--3217. IEEE, 2009.

\bibitem{ryu2023equivariant}
Hyunwoo Ryu, Hong in~Lee, Jeong-Hoon Lee, and Jongeun Choi.
\newblock Equivariant descriptor fields: Se(3)-equivariant energy-based models
  for end-to-end visual robotic manipulation learning.
\newblock In {\em The Eleventh International Conference on Learning
  Representations}, 2023.

\bibitem{ryu2024diffusion}
Hyunwoo Ryu, Jiwoo Kim, Hyunseok An, Junwoo Chang, Joohwan Seo, Taehan Kim,
  Yubin Kim, Chaewon Hwang, Jongeun Choi, and Roberto Horowitz.
\newblock Diffusion-edfs: Bi-equivariant denoising generative modeling on se
  (3) for visual robotic manipulation.
\newblock In {\em Proceedings of the IEEE/CVF Conference on Computer Vision and
  Pattern Recognition}, pages 18007--18018, 2024.

\bibitem{ryu2022equivariant}
Hyunwoo Ryu, Hong-in Lee, Jeong-Hoon Lee, and Jongeun Choi.
\newblock Equivariant descriptor fields: Se (3)-equivariant energy-based models
  for end-to-end visual robotic manipulation learning.
\newblock {\em arXiv preprint arXiv:2206.08321}, 2022.

\bibitem{satorras2021n}
V{\i}ctor~Garcia Satorras, Emiel Hoogeboom, and Max Welling.
\newblock E (n) equivariant graph neural networks.
\newblock In {\em International conference on machine learning}, pages
  9323--9332. PMLR, 2021.

\bibitem{schauer2015collision}
Johannes Schauer and Andreas N{\"u}chter.
\newblock Collision detection between point clouds using an efficient kd tree
  implementation.
\newblock {\em Advanced Engineering Informatics}, 29(3):440--458, 2015.

\bibitem{sellan2022breaking}
Silvia Sell{\'a}n, Yun-Chun Chen, Ziyi Wu, Animesh Garg, and Alec Jacobson.
\newblock Breaking bad: A dataset for geometric fracture and reassembly.
\newblock {\em Advances in Neural Information Processing Systems},
  35:38885--38898, 2022.

\bibitem{shotton2013scene}
Jamie Shotton, Ben Glocker, Christopher Zach, Shahram Izadi, Antonio Criminisi,
  and Andrew Fitzgibbon.
\newblock Scene coordinate regression forests for camera relocalization in
  rgb-d images.
\newblock In {\em Proceedings of the IEEE conference on computer vision and
  pattern recognition}, pages 2930--2937, 2013.

\bibitem{Sbunny}
Standford.
\newblock The stanford 3d scanning repository.
\newblock {\em [Online]. Available:
  http://graphics.stanford.edu/data/3Dscanrep/}, 1993.

\bibitem{thomas2018tensor}
Nathaniel Thomas, Tess Smidt, Steven Kearnes, Lusann Yang, Li~Li, Kai Kohlhoff,
  and Patrick Riley.
\newblock Tensor field networks: Rotation-and translation-equivariant neural
  networks for 3d point clouds.
\newblock {\em arXiv preprint arXiv:1802.08219}, 2018.

\bibitem{vaswani2017attention}
Ashish Vaswani, Noam Shazeer, Niki Parmar, Jakob Uszkoreit, Llion Jones,
  Aidan~N Gomez, {\L}ukasz Kaiser, and Illia Polosukhin.
\newblock Attention is all you need.
\newblock {\em Advances in neural information processing systems}, 30, 2017.

\bibitem{Villegas-Suarez_2023_ICCV}
Ariana~M. Villegas-Suarez, Cristian Lopez, and Ivan Sipiran.
\newblock Matchmakernet: Enabling fragment matching for cultural heritage
  analysis.
\newblock In {\em Proceedings of the IEEE/CVF International Conference on
  Computer Vision (ICCV) Workshops}, pages 1632--1641, October 2023.

\bibitem{weiler20183d}
Maurice Weiler, Mario Geiger, Max Welling, Wouter Boomsma, and Taco~S Cohen.
\newblock 3d steerable cnns: Learning rotationally equivariant features in
  volumetric data.
\newblock {\em Advances in Neural Information Processing Systems}, 31, 2018.

\bibitem{wu2023leveraging}
Ruihai Wu, Chenrui Tie, Yushi Du, Yan Zhao, and Hao Dong.
\newblock Leveraging se (3) equivariance for learning 3d geometric shape
  assembly.
\newblock In {\em Proceedings of the IEEE/CVF International Conference on
  Computer Vision}, pages 14311--14320, 2023.

\bibitem{xu2021omnet}
Hao Xu, Shuaicheng Liu, Guangfu Wang, Guanghui Liu, and Bing Zeng.
\newblock Omnet: Learning overlapping mask for partial-to-partial point cloud
  registration.
\newblock In {\em Proceedings of the IEEE/CVF International Conference on
  Computer Vision}, pages 3132--3141, 2021.

\bibitem{yang2023equivact}
Jingyun Yang, Congyue Deng, Jimmy Wu, Rika Antonova, Leonidas Guibas, and
  Jeannette Bohg.
\newblock Equivact: Sim (3)-equivariant visuomotor policies beyond rigid object
  manipulation.
\newblock {\em arXiv preprint arXiv:2310.16050}, 2023.

\bibitem{yew2020rpm}
Zi~Jian Yew and Gim~Hee Lee.
\newblock Rpm-net: Robust point matching using learned features.
\newblock In {\em Proceedings of the IEEE/CVF conference on computer vision and
  pattern recognition}, pages 11824--11833, 2020.

\bibitem{yu2024riga}
Hao Yu, Ji~Hou, Zheng Qin, Mahdi Saleh, Ivan Shugurov, Kai Wang, Benjamin
  Busam, and Slobodan Ilic.
\newblock Riga: Rotation-invariant and globally-aware descriptors for point
  cloud registration.
\newblock {\em IEEE Transactions on Pattern Analysis and Machine Intelligence},
  2024.

\bibitem{yu2023rotation}
Hao Yu, Zheng Qin, Ji~Hou, Mahdi Saleh, Dongsheng Li, Benjamin Busam, and
  Slobodan Ilic.
\newblock Rotation-invariant transformer for point cloud matching.
\newblock In {\em Proceedings of the IEEE/CVF conference on computer vision and
  pattern recognition}, pages 5384--5393, 2023.

\bibitem{zhan2020generative}
Guanqi Zhan, Qingnan Fan, Kaichun Mo, Lin Shao, Baoquan Chen, Leonidas~J
  Guibas, Hao Dong, et~al.
\newblock Generative 3d part assembly via dynamic graph learning.
\newblock {\em Advances in Neural Information Processing Systems},
  33:6315--6326, 2020.

\bibitem{zhang1994iterative}
Zhengyou {Zhang}.
\newblock Iterative point matching for registration of free-form curves and
  surfaces.
\newblock {\em International Journal of Computer Vision}, 13(2):119--152, 1994.

\bibitem{Zhou2018}
Qian-Yi Zhou, Jaesik Park, and Vladlen Koltun.
\newblock {Open3D}: {A} modern library for {3D} data processing.
\newblock {\em arXiv:1801.09847}, 2018.

\end{thebibliography}
\bibliographystyle{plain}

\newpage
\appendix
\section{SE(3)-equivariant Transformers}
\label{se3-introduction}
A well-known $SE(3)$-equivariant network is $SE(3)$-transformer~\cite{fuchs2020se},
which adapts the powerful transformer structure~\cite{vaswani2017attention} to $SO(3)$-equivariant settings.
In this model,
the feature map $f$ of each layer is defined as a tensor field supported on a 3-D PC:
\begin{equation}
    \label{field}
    f(x) = \sum_{u=1}^{M}f_u\delta(x-x_u),
\end{equation}
where $\delta$ is the Dirac function,
$X=\{ x_u \}_{u=1}^{M} \subseteq \mathbb{R}^3$ is a point set,
and $f_u$ is the feature attached to $x_u$.
Here, 
feature $f_u$ takes the form of $f_u=\oplus_{p}f_u^{p}$,
where the component $f_u^{p} \in V_{p}$ is the degree-$p$ feature,
\ie, it transforms according to $\rho_p $ under the action of $SO(3)$.
For example,
when $f_u$ represents the norm vector of a point cloud,
then $f_u=f_u^{1} \in \mathbb{R}^3$.
We also write the collection of all degree-$p$ features at $x_u$ as $F^p(x_u) \in \mathbb{R}^{c \times (2p+1)}$,
where $c$ is the number of channels.

For each transformer layer,
the degree-$k$ output feature at point $x_i$ is computed by performing message passing: 
\begin{equation}
    \label{SE3-layer}
        f^l_{\text{out}}(x_u) = \underbrace{W^l F^l_{\text{in}}(x_u)}_{\text{self-interaction}} + \sum_{l}\sum_{ v \in \mathcal{N}(u) \backslash \{u\}} \underbrace{\alpha_{uv}}_{\text{attention}} \underbrace{\mathbf{V}^{lk}_{uv}}_{\text{message}},
\end{equation}
where $\mathcal{N}(u)$ represents the neighborhood of $u$,
$W^l \in \mathbb{R}^{1 \times c} $ is the learnable weight for self-interaction,
$c$ represents the number of channels,
and 
\begin{equation}
    \alpha_{uv}=\frac{\exp \left(\mathbf{Q}_u^{\top} \mathbf{K}_{uv}\right)}{\sum_{v^{\prime}} \exp \left(\mathbf{Q}_u^{\top} \mathbf{K}_{u v^{\prime}}\right)}
\end{equation}
is the attention from $v$ to $u$.
Here,
key $\mathbf{K}$, value $\mathbf{V}$ and query $\mathbf{Q}$ are
\begin{equation}
    \mathbf{Q}_u=\bigoplus_{l} W_Q^{l} F_{\text{in}}^l(x_u), \;
    \mathbf{K}_{u v}=\bigoplus_{l} \sum_{k} \mathcal{W}_K^{l k} \left(x_v-x_u\right) f_{\text{in}}^k(x_v), \;
    \mathbf{V}_{u v}^{l k}= \mathcal{W}_V^{l k} \left(x_v-x_u\right) f_{\text{in}}^k(x_v)
\end{equation}
where $W_Q^{l} \in \mathbb{R}^{1 \times (2l+1)}$ is a learnable weight,
and the kernel $\mathcal{W}^{lk}(x) \in \mathbb{R}^{(2l+1) \times (2k+1)}$ is defined as
\begin{equation}
    \label{SE3-kernel}
    \textit{vec}(\mathcal{W}^{l k}(x))=\sum_{J=|k-l|}^{k+l} \underbrace{\varphi_J^{l k}(\| x \|)}_{\text{radial component}}  \underbrace{ Q_{J}^{l k} Y_J (x /\| x \|) }_{\text{angular component}},
\end{equation}
where $\textit{vec}(\cdot)$ is the vectorize function,
the learnable radial component $\varphi_J^{l k}: \mathbb{R}_{+} \rightarrow R$ is parametrized by a neural network,
and the non-learnable angular component is determined by Clebsch-Gordan constant $Q$ and the spherical harmonic $Y_J: \mathbb{R}^3 \rightarrow \mathbb{R}^{2J+1}$.

\section{Derive of the Convolutional Kernel}
\label{kernel-derive-appx}
To derive the kernel~\eqref{Bi-SE3-kernel} for a $SE(3) \times SE(3)$-transformer layer,
we consider the equivariant convolution as a simplified version of the $SE(3) \times SE(3)$-transformer layer:
\begin{equation}
    \label{Bi-conv-layer}
    (\mathcal{W} * f)^\vo(z_u)  \coloneqq \sum_{ \substack{\vi \\ v \in \textit{KNN}(u) \backslash \{u\}} } \mathcal{W}^{\vo, \vi}\left(z_v-z_u\right) f_{\text{in}}^{\vi}(z_v),
\end{equation}
\ie,
we only consider the message $\mathbf{V}_{u v}^{\vo, \vi}$ while fixing the self-interaction weight $W^{\vo}=0$ and attention $\alpha_{uv}=1$ in~\eqref{Bi-SE3-layer}.
To ensure the $SE(3) \times SE(3)$-equivariance of convolution~\eqref{Bi-conv-layer},
\ie,
\begin{equation}
    \left((g_1\times g_2)(\mathcal{W}^{\vo, \vi}*f) \right)(z)= \left(\mathcal{W}^{\vo, \vi}*((g_1\times g_2)f) \right)(z),
\end{equation}
the kernel $\mathcal{W}$ must satisfy a constraint:
\begin{equation}
    \label{kernel-constraint-appx}
    \rho_{\vi}(r_{12}) \otimes \rho_{\vo}(r_{12}) \mathcal{W}(z) = \mathcal{W}\left(r_{12} z\right),
\end{equation}
where we abbreviate $r_1 \times r_2$ as $r_{12}$,
abbreviate $\textit{vec}(\mathcal{W}^{\vo, \vi}) \in \mathbb{R}^{(2i_1+1)(2i_2+1)(2o_1+1)(2o_2+1)}$ as $\mathcal{W}$,
and assume $\|z^1\| = \|z^2\|=1$  for simpler notations.
Equation~\eqref{kernel-constraint-appx} is generally known as the kernel constraint,
and its necessity and sufficiency can be proved in a verbatim way as Theorem 2 in~\cite{weiler20183d},
so we omit the proof here.
Now we can obtain the kernel~\eqref{Bi-SE3-kernel} by solving this constraint.
Note that a direct formulation is given in Theorem 2.1 in~\cite{cesa2021program},
but here we derive it in a less abstract way.

We first observe that $\mathcal{W}^*(z) = Y_{i_1}(z^1) \otimes Y_{i_2}(z^2) \otimes Y_{o_1}(z^1) \otimes Y_{o_2}(z^2)$ is a special solution to equation~\eqref{kernel-constraint-appx},
where $Y_{J}(\cdot) \in \mathbb{R}^{2J+1}$ is the column vector consisting of degree-$J$ harmonics with order $m=-J,... 0,..., J$ as each row element,
because
\begin{align}
    & \rho_{\vi}(r_{12}) \otimes \rho_{\vo}(r_{12}) \mathcal{W}^*(z) \nonumber \\
     =& \Bigl(\rho_{i_1}(r_{1}) \otimes \rho_{i_2}(r_{2}) \otimes \rho_{o_1}(r_{1}) \otimes \rho_{o_2}(r_{2})\Bigr) 
        \Bigl(Y_{i_1}(z^1) \otimes Y_{i_2}(z^2) \otimes Y_{o_1}(z^1) \otimes Y_{o_2}(z^2)\Bigr) \nonumber \\
     =& \Bigl(\rho_{i_1}(r_{1}) Y_{i_1}(z^1) \Bigr) \otimes 
        \Bigl(\rho_{i_2}(r_{2}) Y_{i_2}(z^2) \Bigr) \otimes 
        \Bigl(\rho_{o_2}(r_{1}) Y_{o_2}(z^1) \Bigr) \otimes 
        \Bigl(\rho_{o_2}(r_{2}) Y_{o_2}(z^2) \Bigr) \nonumber \\
     =& Y_{i_1}(r_1z^1) \otimes Y_{i_2}(r_2z^2) \otimes Y_{o_1}(r_1z^1) \otimes Y_{o_2}(r_2z^2) \nonumber  \\
     =& \mathcal{W}^*(r_{12}z). \nonumber 
\end{align}
Then we point out that $\{Y_{J_1}^{m_1}(z^1)Y_{J_2}^{m_2}(z^2)\}_{m_1, m_2, J_2, J_1}$ is an orthogonal and complete basis for two variable square-integrable functions $L^2(X, Y):S^2 \times S^2 \rightarrow \mathbb{R}$,
where $S^2$ is the 2-D sphere.
Thus,
we can write each component of $\mathcal{W}^*(z)$ as a linear combination of this basis.
Specifically,
let $\mathcal{V}(z) = \oplus_{J_1,J_2}Y_{J_1}(z^1)\otimes Y_{J_2}(z^2)$,
and let $(m_1, m_2, m_3, m_4)$ be the index of the element $Y_{i_1}^{m_1}(z^1) Y_{i_2}^{m_2}(z^2) Y_{o_1}^{m_3}(z^1) Y_{o_2}^{m_4}(z^2)$ in $\mathcal{W}^*$,
and $(J_1, m_5, J_2, m_6)$ be the index of the element $Y_{J_1}^{m_5}(z^1) Y_{J_2}^{m_6}(z^2)$ in $\mathcal{V}$.
We have
$\mathcal{W}^*=P\mathcal{V}$,
where
\begin{align}
    &P_{m_1, m_2, m_3, m_4, J_1, m_5, J_2, m_6} \nonumber \\
    =& \int Y_{i_1}^{m_1}(z^1) Y_{i_2}^{m_2}(z^2)  Y_{o_1}^{m_3}(z^1)  Y_{o_2}^{m_4}(z^2)  Y_{J_1}^{m_5}(z^1)  Y_{J_2}^{m_6}(z^2)  dz^2dz^1 \nonumber \\
    =& \Bigl( \int Y_{i_1}^{m_1}(z^1) Y_{o_1}^{m_3}(z^1)   Y_{J_1}^{m_5}(z^1)  dz^1 \Bigr)
        \Bigl( \int Y_{i_2}^{m_2}(z^2) Y_{o_2}^{m_4}(z^2)  Y_{J_2}^{m_6}(z^2)  dz^2 \Bigr) \nonumber \\
    =& \langle i_1, m_1, o_1, m_3| J_1, m_5 \rangle \langle i_2, m_2, o_2, m_4| J_2, m_6\rangle \vc(J_1) \vc(J_2). 
\end{align}
Here,
$\vc(J_1)$ and $\vc(J_2)$ are coefficients related to $J_1$ and $J_2$ respectively.
$\langle i, m_1, o, m_3| J, m_5\rangle$ is known as the Clebsch-Gordan coefficient,
the product $\langle i_1, m_1, o_1, m_3| J_1, m_5\rangle \langle i_2, m_2, o_2, m_4| J_2, m_6 \rangle$ is known as the 2-nd order Clebsch-Gordan coefficient,
and we represent it as $\mathcal{Q}$.
In other words,
we have $P=\mathcal{Q}\vc$,
where $\vc$ is a block diagonal matrix,
and each block is $\vc(J_1) \vc(J_2) I$.
Therefore,
we have
\begin{align}
    \Bigl(\rho_{\vi}(r_{12}) \otimes \rho_{\vo}(r_{12}) \Bigr) P \mathcal{V}(z) & = P \mathcal{V}(r_{12}z) \nonumber \\
    \Bigl(\rho_{\vi}(r_{12}) \otimes \rho_{\vo}(r_{12}) \Bigr) \mathcal{Q} \vc \mathcal{V}(z) & = \mathcal{Q} \vc \mathcal{V}(r_{12}z) \nonumber \\
     \mathcal{Q}^T \Bigl(\rho_{\vi}(r_{12}) \otimes \rho_{\vo}(r_{12}) \Bigr) \mathcal{Q} \vc \mathcal{V}(z) & = \vc \mathcal{V}(r_{12}z). \nonumber
\end{align}
Since 
\begin{equation}
    \Big[ \bigoplus_{J_1, J_2} \rho_{J_1, J_2}(r_{12}) \Big] \vc \mathcal{V}(z) = \vc \mathcal{V}(r_{12}z)
\end{equation}
for arbitrary $z$,
we further have
\begin{equation}
    \mathcal{Q}^T \Bigl(\rho_{\vi}(r_{12}) \otimes \rho_{\vo}(r_{12}) \Bigr) \mathcal{Q} = \Big[ \bigoplus_{J_1, J_2} \rho_{J_1, J_2}(r_{12}) \Big]
\end{equation}
by equating the above two equations.
Finally,
we stack this decomposition back to the kernel constraint~\eqref{kernel-constraint-appx},
and obtain
\begin{equation}
    \Big[ \bigoplus_{J_1, J_2} \rho_{J_1, J_2}(r_{12}) \Big] \mathcal{Q}^T \mathcal{W}(z) = \mathcal{Q}^T \mathcal{W}(r_{12}z),
\end{equation}
which suggests that 
\begin{align}
    \mathcal{Q}^T \mathcal{W}(z) &= \vc' \mathcal{V}(x), \\
     \mathcal{W}(z) &= \mathcal{Q} \vc' \mathcal{V}(x),
\end{align}
where $\vc'$ is the coefficient matrix of the same shape as $\vc$,
and each coefficient is arbitrary.
Specifically,
\begin{equation}
    \mathcal{W}(z) = \vc'_{J_1, J_2} \sum_{J_1, J_2} \mathcal{Q}_{J_1, J_2} Y_{J_1} (z^1) \otimes Y_{J_2} (z^2).
\end{equation}
We note that $\vc'_{J_1, J_2}$ is parametrized by a neural network in our model~\eqref{Bi-SE3-kernel}.

\section{Proofs and theoretical results}
\label{sec-appx-proof}
\subsection{The proof of the equivariance of Arun's method}
\label{Sec-Aruns-proof}
We first establish the result on the uniqueness of Arun's method.
We begin with the result of the uniqueness of singular vectors.

\begin{lemma}
    \label{sign_consistent}
    Let $A=U \Sigma V^T \in \mathbb{R}^{3\times3}$ be the SVD decomposition.
    If a singular value $\sigma_j$ is distinct, 
    then the corresponding singular vectors $U_j$ and $V_j$ can be determined up to a sign.
    If $\sigma_j \neq 0$,
    $U_jV_J^T$ is unique.
\end{lemma}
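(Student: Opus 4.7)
The plan is to reduce uniqueness of singular vectors to uniqueness of eigenvectors of the symmetric matrices $A^T A$ and $A A^T$, and then tie the two sign ambiguities together using the defining relation $A V_j = \sigma_j U_j$.

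First, I would recall that from $A = U \Sigma V^T$ one obtains $A^T A = V \Sigma^2 V^T$ and $A A^T = U \Sigma^2 U^T$, so the columns $V_j$ and $U_j$ are orthonormal eigenvectors of $A^T A$ and $A A^T$ respectively, both with eigenvalue $\sigma_j^2$. If $\sigma_j$ is distinct among the singular values, then $\sigma_j^2$ is a simple eigenvalue of each of $A^T A$ and $A A^T$ (since singular values are non-negative, distinctness of $\sigma_j$ is equivalent to simplicity of $\sigma_j^2$). For a simple eigenvalue of a real symmetric matrix, the one-dimensional eigenspace determines the unit eigenvector up to a sign, so $V_j$ and $U_j$ are each determined up to $\pm 1$.

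Next, I would couple the two sign ambiguities. The SVD relation gives $A V_j = \sigma_j U_j$. If $\sigma_j \neq 0$, then for any choice of sign of $V_j$, the vector $U_j$ is forced by $U_j = \sigma_j^{-1} A V_j$; flipping $V_j \mapsto -V_j$ then forces $U_j \mapsto -U_j$. Consequently the rank-one outer product transforms as $U_j V_j^T \mapsto (-U_j)(-V_j)^T = U_j V_j^T$, which shows that $U_j V_j^T$ is invariant under the residual sign ambiguity, hence unique. This also clarifies why the nonzero hypothesis matters: if $\sigma_j = 0$, the relation $A V_j = 0$ places no constraint on $U_j$ beyond lying in a certain subspace, so $U_j$ and therefore $U_j V_j^T$ need not be unique.

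I expect the main obstacle, if any, to be writing cleanly the case where $\sigma_j$ equals another $\sigma_i$ — but the statement explicitly excludes this by assuming distinctness, so the above argument is essentially a careful bookkeeping of sign conventions rather than any deeper computation. No further machinery from the paper is needed.
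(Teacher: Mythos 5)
Your proof is correct and follows essentially the same route as the paper: reduce to simplicity of the eigenvalue $\sigma_j^2$ of the symmetric matrices $A^T A$ (and $A A^T$), conclude that $V_j$ and $U_j$ are determined up to sign, then use $A V_j = \sigma_j U_j$ to couple the two signs when $\sigma_j \neq 0$. One small imprecision in your closing remark: when $\sigma_j = 0$, the vector $U_j$ is still pinned down up to a sign (via $A A^T$, exactly as the paper notes); what fails is only the sign coupling between $U_j$ and $V_j$, which is why $U_j V_j^T$ can lose uniqueness, not $U_j$ itself.
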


\begin{proof}
    If $\sigma_j \neq 0$,
    we have
    \begin{equation}
        A^TA = V \Sigma^2 V^T = V \Sigma^2 V^{-1},
    \end{equation}
    \ie,
    the eigenvalues of $A^TA$ are $\sigma_i^2$, $i=1,2,3$, 
    and $V_i$ are the corresponding eigenvectors.
    Since $\sigma_j$ is distinct and all $\sigma_i \geq 0$,
    $\sigma_j^2$ is distinct.
    As a result,
    the eigenspace corresponding to $\sigma_j^2$ has dim $1$,
    \ie, it is spanned by $V_j$.
    Since $V_j$ is a unit vector,
    it can be determined up to a sign.
    In addition,
    we have $A V_j = \sigma_j U_j$,
    thus $U_j$ can also be determined up to a sign.
    $U_jV_J^T$ is unique since flipping the sign of $V_j$ always leads to the flipping of sign of $U_j$, 
    and vice versa.

    If $\sigma_j=0$,
    we can still repeat the above argument to determine $V_j$ up to a sign,
    and determine $U_j$ up to a sign using the above argument for $AA^T$.   
\end{proof}

Note that in Arun's algorithm, we use the SVD projection defined as follows.
\begin{equation}
    \overline{\textit{SVD}}(A) = \hat{U}\textit{diag}(1,1, \textit{sign}(\textit{det}(\hat{U}\hat{V}^T)))\hat{V}^T,
\end{equation}
where $A=\hat{U} \hat{\Sigma} \hat{V}^T$, $\hat{U}, \hat{V} \in O(3)$ is the SVD decomposition of $A$.
In other words,
if $\textit{det}(\hat{U}\hat{V}^T)=1$, 
then we take $\hat{U}\hat{V}^T$,
otherwise we flip the sign of $\hat{U_3}\hat{V_3}$.
An important observation is that this SVD projection is unique under a mild assumption.

\begin{assumption}
    \label{assumption-1}
In the SVD decomposition, $\sigma_3$ is distinct, \ie, $\sigma_1 \geq \sigma_2 > \sigma_3 \geq 0$.
\end{assumption}

\begin{proposition}[Uniqueness of SVD projection]
    \label{Arun_unique}
    Let $A=U \Sigma V^T \in \mathbb{R}^{3\times3}$ be the SVD decomposition,
    where $\Sigma=\textit{diag}(\sigma_1, \sigma_2, \sigma_3)$ is a diagonal matrix, and $U,V \in O(3)$.
    If assumption~\ref{assumption-1} is satisfied,
    then $\overline{\textit{SVD}}(A)$ is unique.
\end{proposition}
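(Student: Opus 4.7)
The strategy is to take two arbitrary SVD decompositions $A = \hat U \Sigma \hat V^T = \hat U' \Sigma \hat V'^T$ of the same matrix (the diagonal $\Sigma$ is itself uniquely determined, since the singular values of $A$ are) and show that the output $M := \hat U_1 \hat V_1^T + \hat U_2 \hat V_2^T + s\, \hat U_3 \hat V_3^T$ with $s = \textit{sign}(\det(\hat U \hat V^T))$ is invariant under the choice. I will argue separately that the partial sum $\hat U_1 \hat V_1^T + \hat U_2 \hat V_2^T$ and the signed third term $s\, \hat U_3 \hat V_3^T$ are each intrinsic to $A$.

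For the top part, I split on whether $\sigma_1 = \sigma_2$. If $\sigma_1 > \sigma_2$, both singular values are distinct and positive (by Assumption~\ref{assumption-1}, since $\sigma_2 > \sigma_3 \geq 0$), so Lemma~\ref{sign_consistent} gives uniqueness of each $\hat U_i \hat V_i^T$ individually. If $\sigma_1 = \sigma_2 > 0$, the 2-D eigenspaces of $A^T A$ and $AA^T$ for the eigenvalue $\sigma_1^2$ are intrinsic, so the orthogonal projector $P_V := \hat V_1 \hat V_1^T + \hat V_2 \hat V_2^T$ is intrinsic; combining with $A \hat V_i = \sigma_i \hat U_i$ then gives the identity
\begin{equation}
\hat U_1 \hat V_1^T + \hat U_2 \hat V_2^T = \sigma_1^{-1} A P_V,
\end{equation}
whose right-hand side depends only on $A$.

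For the signed third term, I use that $\sigma_3$ is distinct under Assumption~\ref{assumption-1} and split on whether $\sigma_3 > 0$. If $\sigma_3 > 0$, Lemma~\ref{sign_consistent} directly yields uniqueness of $\hat U_3 \hat V_3^T$; combined with the previous step, $\hat U \hat V^T = \sum_i \hat U_i \hat V_i^T$ and hence $s$ are also unique. If $\sigma_3 = 0$, the third singular vectors are determined only up to \emph{independent} sign flips, which I write as $(\hat U'_3, \hat V'_3) = (\epsilon_U \hat U_3, \epsilon_V \hat V_3)$ with $\epsilon_U, \epsilon_V \in \{\pm 1\}$. These flips multiply $\hat U_3 \hat V_3^T$ by $\epsilon_U \epsilon_V$ and multiply $\det(\hat U), \det(\hat V)$ by $\epsilon_U, \epsilon_V$ respectively, so $s$ picks up the same factor $\epsilon_U \epsilon_V$, and the product $s\, \hat U_3 \hat V_3^T$ is therefore unchanged.

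The main obstacle is verifying that these two groups of ambiguities can be treated independently: when $\sigma_1 = \sigma_2$, the residual $2 \times 2$ block freedom must be absorbable by a single orthogonal $Q$ acting simultaneously on $\hat U$- and $\hat V$-sides (forced by the block structure of the stabilizer of $\Sigma$ under $(Q_U, Q_V)\mapsto Q_U \Sigma Q_V^T$), so it contributes $\det(Q)\det(Q)^T = 1$ to $\det(\hat U \hat V^T)$ and leaves $s$ alone; in particular, it does not couple to the third-column sign choice. Once this decoupling is in hand, combining the two invariance statements above gives $M = M'$ for any two admissible SVDs, which is the claimed uniqueness.
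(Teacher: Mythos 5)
Your proof is correct, and at the top level it follows the same strategy as the paper: take two SVD factorizations $A = \hat U \Sigma \hat V^T = \hat U' \Sigma \hat V'^T$ (with the same $\Sigma$), split the output into the top sum $\hat U_1 \hat V_1^T + \hat U_2 \hat V_2^T$ and the signed third term, and show each is intrinsic to $A$. Where you differ from the paper is in the mechanism for each step, and your version is arguably cleaner. For the top sum, the paper runs a four-way case split on $\sigma_1 = \sigma_2$ vs.\ $\sigma_1 > \sigma_2$ and $\sigma_3 = 0$ vs.\ $\sigma_3 > 0$, and in the degenerate cases writes ad hoc expressions like $\sigma_1^{-1}(A - \sigma_3 U_3 V_3^T)$; your identity $\hat U_1 \hat V_1^T + \hat U_2 \hat V_2^T = \sigma_1^{-1} A P_V$, with $P_V$ the (intrinsic) spectral projector of $A^T A$ onto the degenerate eigenspace, absorbs both $\sigma_3$ subcases at once. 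For the sign, the paper's move is a determinant identity: it records $\tilde U_3 \tilde V_3^T = \pm U_3 V_3^T$ and then evaluates $\det(\tilde U \tilde V^T) = \det(U)\det([V_1, V_2, \pm V_3]^T) = \pm\det(UV^T)$, so the two $\pm$'s cancel when multiplied together. You instead parametrize the full stabilizer of $\Sigma$ (a shared $O(2)$ block on the degenerate pair, independent $\pm 1$'s on the two kernel directions when $\sigma_3 = 0$) and track how each generator hits $\det(\hat U)\det(\hat V)$; this is more work, which is exactly why you need the decoupling paragraph you rightly flag as the crux, but it makes the reason for the invariance more transparent than the paper's computation. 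Two cosmetic notes: $\det(Q)\det(Q)^T = 1$ should read $\det(Q)^2 = 1$, and it would be cleaner to state explicitly that in the $\sigma_3 > 0$ branch the cross-argument you make (uniqueness of the third term plus uniqueness of the top sum implies uniqueness of $\hat U \hat V^T$, hence of $s$) indeed uses the top-sum result you established first. Neither affects correctness.
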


\begin{proof}
    Let $A=\tilde{U} \tilde{\Sigma} \tilde{V}^T$ be another SVD decomposition of $A$, 
    \ie,
    \begin{equation}
        A = \sigma_1 U_1V_1^T + \sigma_2 U_2V_2^T + \sigma_3 U_3V_3^T = \sigma_1 \tilde{U}_1\tilde{V}_1^T + \sigma_2 \tilde{U}_2\tilde{V}_2^T + \sigma_3 \tilde{U}_3\tilde{V}_3^T,
    \end{equation}
    the goal is to prove
    \begin{equation}
        \label{lemma_arun_goal}
        U_1V_1^T + U_2V_2^T + \textit{sign}(\textit{det}(UV^T)) U_3V_3^T = \tilde{U}_1\tilde{V}_1^T + \tilde{U}_2\tilde{V}_2^T + \textit{sign}(\textit{det}(\tilde{U} \tilde{V}^T)) \tilde{U}_3\tilde{V}_3^T.
    \end{equation}
    We first show that $U_1V_1^T + U_2V_2^T = \tilde{U}_1\tilde{V}_1^T + \tilde{U}_2\tilde{V}_2^T$.

    1) If $\sigma_3 = 0$ and $\sigma_1=\sigma_2$,
    then 
    \begin{equation}
        U_1V_1^T + U_2V_2^T = \tilde{U}_1\tilde{V}_1^T + \tilde{U}_2\tilde{V}_2^T = \frac{1}{\sigma_1}A.
    \end{equation}
    
    2) If $\sigma_3 = 0$ and $\sigma_1 > \sigma_2$,
    then $\sigma_1$ and $\sigma_2$ are distinct and nonzero.
    According to lemma~\ref{sign_consistent},
     $U_1V_1^T = \tilde{U}_1\tilde{V}_1^T$ and $U_2V_2^T = \tilde{U}_2\tilde{V}_2^T$,
    thus the summation of these two terms is equal.
    
    3) If $\sigma_3 > 0$ and $\sigma_1 > \sigma_2$,
    the argument is the same as 2)

    4) If $\sigma_3 > 0$ and $\sigma_1 = \sigma_2$,
    then $\sigma_3$ is distinct and nonzero.
    According to lemma~\ref{sign_consistent},
    $U_3 V_3^T = \tilde{U_3} \tilde{V_3}^T$.
    Thus,
    \begin{equation}
        U_1V_1^T + U_2V_2^T = \tilde{U}_1\tilde{V}_1^T + \tilde{U}_2\tilde{V}_2^T = \frac{1}{\sigma_1}(1 - \sigma_3 U_3 V_3^T).
    \end{equation}

    Thus we have $U_1V_1^T + U_2V_2^T = \tilde{U}_1\tilde{V}_1^T + \tilde{U}_2\tilde{V}_2^T$.

    Now we prove that the last term in Eqn.~\eqref{lemma_arun_goal} is equal.
    We have $\tilde{U}_3\tilde{V}_3^T = \pm U_3V_3^T$,
    and we have shown that $U_1V_1^T + U_2V_2^T = \tilde{U}_1\tilde{V}_1^T + \tilde{U}_2\tilde{V}_2^T$,  
    thus we have 
    \begin{align}
         \textit{det}(\tilde{U}_1\tilde{V}_1^T + \tilde{U}_2\tilde{V}_2^T +  \tilde{U}_3\tilde{V}_3^T)
        =& \textit{det}(U_1V_1^T + U_2V_2^T \pm  U_3V_3^T) \nonumber \\
        =& \textit{det}(U) \textit{det}([V_1, V_2, \pm V_3]^T) \nonumber\\
        =& \textit{det}(U) \Bigl(  \pm \textit{det}([V_1, V_2,  V_3]^T) \Bigr) \nonumber\\
        =& \pm \textit{det}(U_1V_1^T + U_2V_2^T +  U_3V_3^T)
    \end{align}
    By multiplying this term with $\tilde{U}_3\tilde{V}_3^T = \pm U_3V_3^T$,
    we have 
    \begin{equation}
        \textit{det}(\tilde{U}_1\tilde{V}_1^T + \tilde{U}_2\tilde{V}_2^T +  \tilde{U}_3\tilde{V}_3^T) \tilde{U}_3\tilde{V}_3^T = \textit{det}(U_1V_1^T + U_2V_2^T +  U_3V_3^T) U_3V_3^T,
    \end{equation}    
    which suggests that last term in Eqn.~\eqref{lemma_arun_goal} is equal.
    In summary,
    we have proved~\eqref{lemma_arun_goal}.
\end{proof}

Finally,
we can prove the uniqueness of Arun's method under the same assumption.
\begin{proposition}[Uniqueness of Arun's method]
    Under assumption~\ref{assumption-1}, Arun's method~\eqref{Arun} is unique.
\end{proposition}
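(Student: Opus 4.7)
The plan is to reduce the statement to Proposition~\ref{Arun_unique}, which has just been established. Arun's method produces a pair $(r, t) \in SE(3)$, so uniqueness means that both coordinates are independent of any choices made in the intermediate steps, in particular the choice of SVD factors of the correlation matrix $\bar{r}$. I would therefore trace each of the two outputs separately and show that none of them depends on a non-canonical choice.

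First I would argue that the correlation matrix $\bar{r} = \sum_i \bar{y_i} \bar{x_i}^T$ in~\eqref{Arun-rhat} is itself unambiguously determined by the inputs: the centroids $\boldsymbol{m}(X)$ and $\boldsymbol{m}(Y)$ are uniquely defined, hence so are the centralized points $\bar{x_i}, \bar{y_i}$ and their outer-product sum. So the only potential source of non-uniqueness is the SVD-based projection applied to $\bar{r}$.

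Next I would apply Proposition~\ref{Arun_unique} with $A = \bar{r}$. Under Assumption~\ref{assumption-1} on the singular values of $\bar{r}$, that proposition already guarantees that $\overline{\textit{SVD}}(\bar{r})$ does not depend on the particular orthogonal factors $U, V$ chosen in the SVD, even though $U$ and $V$ are only determined up to signs by Lemma~\ref{sign_consistent}. Consequently, the rotation $r = \overline{\textit{SVD}}(\bar{r})$ is unique. Once $r$ is unique, the translation $t = \boldsymbol{m}(Y) - r\,\boldsymbol{m}(X)$ is a deterministic function of $r$ and the two centroids, hence also unique.

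There is no real obstacle here: the whole content of the statement has been packaged into Proposition~\ref{Arun_unique}, and the only thing left to check is that Assumption~\ref{assumption-1} is being applied to the right matrix, namely $\bar{r}$, and that no other step in~\eqref{Arun} introduces ambiguity. Both checks are immediate, so the proof reduces to a short ``combine the ingredients'' argument. If I wanted to be fully explicit, I would frame the proof as: suppose two executions of~\eqref{Arun} produce $(r, t)$ and $(r', t')$ from the same inputs; by Proposition~\ref{Arun_unique} we have $r = r'$, and then the translation formula forces $t = t'$.
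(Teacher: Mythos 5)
Your proposal is correct and takes essentially the same approach as the paper: the paper's proof is a two-line application of Proposition~\ref{Arun_unique} to $\bar{r}$ to get uniqueness of $r$, followed by the observation that $t$ is then determined; your version simply spells out the same reduction in more detail.
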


\begin{proof}
    According to Prop.~\ref{Arun_unique}, 
    $r = \textit{SVD}(\bar{r})$ is unique.
    As a result,
    $t = \boldsymbol{m}(Y)-r\boldsymbol{m}(X)$ is also unique.
\end{proof}

Throughout this work,
we assume that assumption~\ref{assumption-1} holds,
so that the uniqueness of Arun's method and BITR can be guaranteed.

For the rest of this appendix,
we use a simpler notation of SVD projection,
where we simply absorb the sign matrix into $\hat{U}$ and $\hat{V}$:
\begin{align}
    U &=\bigl(\hat{U}\textit{diag}(1,1, \textit{sign}(\textit{det}(\hat{U})))\bigr) \in SO(3), \\
    V^T &=\bigl(\textit{diag}(1,1, \textit{sign}(\textit{det}(\hat{V}))) \hat{V}^T\bigr) \in SO(3),  \\
    \textit{and} \quad \Sigma &= \textit{diag}(\hat{\sigma_1}, \hat{\sigma_2}, \hat{\sigma_3} \textit{sign}(\textit{det}(\hat{U}\hat{V}^T))),
\end{align}
thus obtain the following equivalent definition.
\begin{definition}
    \label{def-SVD-proj}
    The SVD projection of matrix $A$ is defined as 
\begin{equation}
    \textit{SVD}(A) = U V^T
\end{equation}
where $A=U \Sigma V^T$ and $U,V \in SO(3)$.
\end{definition}

Finally, we can discuss the equivariance of Arun's method.
\begin{lemma}
    \label{lemma-1}
    Let 
    \begin{equation}
        \textit{SVD}(A) = UV^T
    \end{equation}
    be the SVD projection of $A$,
    \ie, $A$ can be decomposed as $A=U \Sigma V^T$, 
    $\Sigma$ is a diagonal matrix and $U,V \in SO(3)$. We have
    \begin{align}
        \textit{SVD}(r_2Ar_1^T) &= r_2\textit{SVD}(A)r_1^T, \\
        \textit{SVD}(A^T) &= \left(\textit{SVD}(A)\right) ^T, \\
        \textit{SVD}(cA) &= \textit{SVD}(A),
    \end{align}
    for arbitrary $A \in GL(3)$,  $r_1, r_2 \in SO(3)$, and $c \in \mathbb{R}_+$.
\end{lemma}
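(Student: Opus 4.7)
The plan is to prove each of the three identities by constructing a valid SVD decomposition of the transformed matrix whose $U$ and $V$ factors are simple modifications of the original ones, and then invoking uniqueness of the SVD projection (Prop.~\ref{Arun_unique}) to conclude that the projection equals $UV^T$ transformed in the corresponding way. Throughout, Assumption~\ref{assumption-1} is assumed to hold for $A$, which guarantees it also holds for $r_2 A r_1^T$, $A^T$, and $cA$ since these operations preserve the singular-value spectrum up to the positive scalar $c$.

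First I would handle the two-sided rotation. Starting from $A = U \Sigma V^T$ with $U, V \in SO(3)$, I write
\begin{equation}
r_2 A r_1^T = (r_2 U)\, \Sigma\, (r_1 V)^T.
\end{equation}
Since $SO(3)$ is closed under multiplication, $r_2 U$ and $r_1 V$ lie in $SO(3)$, so this display is a valid SVD decomposition in the sense of Def.~\ref{def-SVD-proj}. By the uniqueness of the SVD projection (Prop.~\ref{Arun_unique}), $\textit{SVD}(r_2 A r_1^T) = (r_2 U)(r_1 V)^T = r_2 (UV^T) r_1^T = r_2\, \textit{SVD}(A)\, r_1^T$.

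For the transpose, note that $A^T = V \Sigma U^T$ is also a valid SVD decomposition since $V, U \in SO(3)$ and $\Sigma$ remains diagonal. Uniqueness then gives $\textit{SVD}(A^T) = V U^T = (UV^T)^T = (\textit{SVD}(A))^T$. For the scalar case, $cA = U (c\Sigma) V^T$; because $c \in \mathbb{R}_+$, the matrix $c\Sigma$ is diagonal with the same signs of entries as $\Sigma$, so this is again a valid SVD. Uniqueness yields $\textit{SVD}(cA) = UV^T = \textit{SVD}(A)$.

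The only real subtlety, and thus the main thing to justify carefully, is that each constructed decomposition genuinely falls under the hypotheses of the uniqueness proposition, which requires that Assumption~\ref{assumption-1} (distinctness of $\sigma_3$) continues to hold for the transformed matrix. This is immediate: left- and right-multiplication by elements of $SO(3)$ preserves singular values, transposition preserves them, and scaling by $c>0$ multiplies all of them by $c$ while preserving their ordering and nonzero/zero pattern. Hence Assumption~\ref{assumption-1} is preserved in every case, and the uniqueness argument applies without modification.
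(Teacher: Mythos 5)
Your proof is correct and follows exactly the same route as the paper's: exhibit an explicit SVD decomposition of the transformed matrix built from that of $A$ (e.g.\ $r_2 A r_1^T = (r_2 U)\Sigma(r_1 V)^T$), then invoke uniqueness of the SVD projection (Prop.~\ref{Arun_unique}) to read off the result. The paper writes out only the first identity and remarks that the other two are ``similar,'' whereas you spell out all three cases and, helpfully, verify that Assumption~\ref{assumption-1} is inherited by $r_2 A r_1^T$, $A^T$, and $cA$ so that the uniqueness proposition genuinely applies.
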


\begin{proof}[The Proof of Lemma.~\ref{lemma-1}]
    We have
    \begin{equation}
        \label{lemma1-svd}
        r_2Ar_1^T = r_2U \Sigma V^T r_1^T = (r_2U) \Sigma  (r_1V)^T,
    \end{equation}
    where $r_2U \in SO(3)$, $r_1V \in SO(3)$ and $\Sigma$ is a diagonal matrix.
    In other words,
    \eqref{lemma1-svd} is a \textit{SVD} decomposition of matrix $r_2Ar_1^T$,
    thus the $\textit{SVD}$ projection can be computed as 
    \begin{equation}
        \textit{SVD}(r_2Ar_1^T) = (r_2U) (r_1V)^T = r_2 (UV^T) r_1^T =  r_2\textit{SVD}(A)r_1^T,
    \end{equation}
    which proves the first part of this lemma.
    We omit the other two statements of this lemma since they can be proved similarly. 
\end{proof}

\begin{proof}[The Proof of Prop.~\ref{Arun-global}]
    \label{proof-arun-app}
    Let $\Phi_A$ be Arun's method and $\Phi_A(X,Y)= (r(X,Y), t(X,Y)) =g$. We directly verify these three equivariances according to Def.~\ref{equivariance_def}.
    We only prove the $SE(3)$-bi-equivariance and swap-equivariance and omit the proof of scale-equivariance,
    because it can be proved similarity.

    \paragraph{1) $SE(3)$-bi-equivariance} We compute $\Phi_A(g_1X,g_2Y)$ for the perturbed PCs $g_1X=\{ r_1 x_i + t_1 \}_{i=1}^N$ and $g_2Y=\{ r_2 y_i + t_2 \}_{i=1}^N$ as follows.
    We first compute $\bar{r}(g_1X, g_2Y)$ as
    \begin{equation}
        \bar{r}(g_1X, g_2Y) = \sum_{i} (r_2\bar{y_i}) ( \bar{x_i}^Tr_1^T) = r_2 \left(\sum_{i} \bar{y_i} \bar{x_i}^T \right) r_1^T = r_2 \bar{r}(X, Y) r_1^T.
    \end{equation}
    Then we compute $r$ via $\textit{SVD}$:
    \begin{equation}
        r(g_1X, g_2Y) = \textit{SVD}(\bar{r}(g_1X, g_2Y)) = \textit{SVD} (r_2 \bar{r}(X, Y) r_1^T) = r_2 \textit{SVD} (\bar{r}(X, Y)) r_1^T =  r_2 r(X,Y) r_1^T,
    \end{equation}
    where the 3-rd equality holds due to Lemma.~\ref{lemma-1}.
    Since the mean values are also perturbed as $\boldsymbol{m}(g_1X)=g_1 \boldsymbol{m}(X)$ and $\boldsymbol{m}(g_2Y)=g_2 \boldsymbol{m}(Y)$,
    we compute $t(g_1X, g_2Y)$ as
    \begin{align}
        t(g_1X, g_2Y) = \boldsymbol{m}(g_2Y)- r(g_1X, g_2Y) \boldsymbol{m}(g_1X) &=  g_2\boldsymbol{m}(Y)- r_2 r(X,Y) r_1^T g_1\boldsymbol{m}(X) \nonumber \\
                                                                                 &= r_2 t(X,Y) - r_2 r(X,Y) r_1^{-1} t(X,Y) + t_2.
    \end{align}
    In summary,
    \begin{equation}
        \Phi_A(g_1X, g_2Y) = (r_2r(X,Y)r_1^{-1}, -r_2r(X,Y)r_1^{-1}t_1+r_2t(X,Y)+t_2) =g_2 g g_1^{-1},
    \end{equation}
    which proves the $SE(3)$-bi-equivariance of Arun's method.

    \paragraph{2) swap-equivariance}
    We compute $\Phi_A(Y, X)$ as follows.
    We first compute $\bar{r}(Y, X)$ as
    \begin{equation}
        \bar{r}(Y, X) = \sum_{i} \bar{x_i} \bar{y_i}^T = \left(\sum_{i} \bar{y_i}^T \bar{x_i} \right)^T = \left(\bar{r}(X, Y) \right)^T.
    \end{equation}
    Then we compute $r(Y,X)$ via $\textit{SVD}$:
    \begin{equation}
        r(Y, X) = \textit{SVD}(\bar{r}(Y, X)) = \textit{SVD} \left(\left(\bar{r}(X, Y) \right)^T\right) =  (\textit{SVD} (\bar{r}(X, Y) ))^T = r(X,Y)^T,
    \end{equation}
    We can finally compute $t(Y, X)$ as
    \begin{equation}
        t(Y, X) = \boldsymbol{m}(X)- r(Y, X) \boldsymbol{m}(Y)=  \boldsymbol{m}(X)- r(X, Y)^{-1} \boldsymbol{m}(Y) = -r(X, Y)^{-1} t(X,Y).
    \end{equation}
    In summary,
    \begin{equation}
        \Phi_A(Y, X) = (r(X,Y)^{-1}, -r(X, Y)^{-1} t(X,Y)) = g^{-1},
    \end{equation}
    which proves the swap-equivariance of Arun's method.

\end{proof}

\subsection{The proof of the equivariance of BITR}
Before proving the $SE(3)$-bi-equivariance of BITR,
we first verify that each layer in a $SE(3) \times SE(3)$-transformer is indeed $SE(3) \times SE(3)$-equivariant. 
Intuitively,
the equivariance of a transformer layer~\eqref{Bi-SE3-layer} is a result of the kernel design,
\ie,
the kernel constraint~\eqref{kernel-constraint-appx},
and the invariance of the attention.
We state this result in the following lemma for completeness,
and we note that similar techniques are also used in the construction of $SE(3)$-transformer~\cite{fuchs2020se}.
\begin{lemma}
    \label{bi-layer-lemma-appx}
    The transformer layer~\eqref{Bi-SE3-layer} is $SE(3) \times SE(3)$-equivariant.
\end{lemma}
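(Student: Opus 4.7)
The plan is to decompose the $SE(3)\times SE(3)$ action into its translation and rotation parts and verify invariance/equivariance of~\eqref{Bi-SE3-layer} for each. Three structural facts do most of the work: (i) every spatial argument in~\eqref{Bi-SE3-layer} enters only through the relative displacement $z_{vu}=z_v-z_u$; (ii) $\textit{KNN}(u)$ depends only on pairwise Euclidean distances in $\mathbb{R}^6$ and is therefore preserved by any rigid motion; and (iii) the kernel~\eqref{Bi-SE3-kernel} has been designed in Appx.~\ref{kernel-derive-appx} so that $\mathcal{W}^{\vo,\vi}(r_{12}z)=\rho_{\vo}(r_{12})\,\mathcal{W}^{\vo,\vi}(z)\,\rho_{\vi}(r_{12})^{-1}$, where $r_{12}=r_1\times r_2$.

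First I would dispatch translation invariance: a common shift of every $z_u$ by $(t_1,t_2)$ leaves all $z_{vu}$ unchanged, the $\textit{KNN}$ set unchanged, and all field evaluations unchanged when read against the shifted field, so every term on the right-hand side of~\eqref{Bi-SE3-layer} is untouched. Next, for the rotation part, I would plug the transformed field and displacement into the message $\mathbf{V}_{uv}^{\vo,\vi}=\mathcal{W}_V^{\vo,\vi}(z_{vu})f_{\text{in}}^{\vi}(z_v)$ and apply the kernel constraint; this collapses the two factors $\rho_{\vi}(r_{12})^{-1}$ (from the kernel) and $\rho_{\vi}(r_{12})$ (from the transformed input feature) to identity and leaves an outer factor of $\rho_{\vo}(r_{12})$, so the message transforms as a degree-$\vo$ feature. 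The self-interaction term $W^{\vo}F_{\text{in}}^{\vo}(z_u)$ trivially does the same because $W^{\vo}\in\mathbb{R}^{1\times c}$ only mixes channels and therefore commutes with $\rho_{\vo}(r_{12})$.

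The only step requiring a separate argument is the invariance of the attention weight $\alpha_{uv}$. I would show that the same kernel-constraint manipulation makes each $\vo$-block of $\mathbf{K}_{uv}$ and of $\mathbf{Q}_u$ pick up the same factor $\rho_{\vo}(r_{12})$, and then observe that $\rho_{\vo}(r_{12})=\rho_{o_1}(r_1)\otimes\rho_{o_2}(r_2)$ is a Kronecker product of orthogonal Wigner-D matrices, hence itself orthogonal. Consequently each inner product $(\mathbf{Q}_u^{\vo})^{\top}\mathbf{K}_{uv}^{\vo}$ is preserved, so $\alpha_{uv}$ and the softmax normalization in~\eqref{SE3-attention} are invariant. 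Combining the invariant $\alpha_{uv}$ with the equivariant messages and equivariant self-interaction, the output $f_{\text{out}}^{\vo}(z_u)$ transforms by $\rho_{\vo}(r_{12})$, as required. I do not anticipate any serious obstacle: the only genuinely nontrivial ingredient is the kernel constraint itself, which has already been discharged in Appx.~\ref{kernel-derive-appx}; the main thing to be careful about is invoking orthogonality of the Wigner-D representations in the attention step, without which $\alpha_{uv}$ would fail to be invariant.
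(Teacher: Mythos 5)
Your proposal is correct and follows essentially the same route as the paper's proof in Appendix~\ref{sec-appx-proof}: both rely on the kernel constraint~\eqref{kernel-constraint-appx} to handle the message term, both observe that $\rho_{\vo}(r_{12})=\rho_{o_1}(r_1)\otimes\rho_{o_2}(r_2)$ is orthogonal to conclude invariance of the attention weights, and both treat the self-interaction term as trivial. The only difference is organizational: you explicitly split the $SE(3)\times SE(3)$ action into translation and rotation parts and dispatch translations via the relative-displacement/KNN-preservation observation up front, whereas the paper verifies the full $g_{12}$-equivariance in one pass by a change of variables $z_v\mapsto g_{12}z_v$; the two are equivalent.
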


\begin{proof}
    We abbreviate $r_1 \times r_2$ as $r_{12}$,
    and $g_1 \times g_2$ as $g_{12}$.
    Let $L$ be a transformer layer~\eqref{Bi-SE3-layer},
    we seek to prove $(g_{12})(L(f))=L( (g_{12})(f))$ for the input tensor field $f$ and $g_1, g_2 \in SE(3)$.
    We compute the RHS of the equation as
    \begin{align}
        &\Bigl(  L( g_{12}(f))   \Bigr)^\vo(z_u) \nonumber \\
        = &
        W^{\vo} \bigl(g_{12}(F)\bigr)^{\vo}(z_u) + 
        \sum_{\substack{\vi \\ z_v \in \textit{supp}(g_{12}(f))} } \alpha(g_{12}(f), z_u, z_v)
        \mathcal{W}^{\vo, \vi}(z_v-z_u) \bigl(g_{12}(f) \bigr)^{\vi}(z_v) \nonumber \\
        = &
        W^{\vo} F^{\vo}(g_{12}^{-1} z_u)  \Bigl(\rho_\vo(r_{12})\Bigr)^T + 
        \sum_{\substack{\vi \\ z_v \in \textit{supp}(g_{12}(f))} } \alpha(g_{12}(f), z_u, z_v)
        \mathcal{W}^{\vo, \vi}(z_v-z_u) \rho_\vi(r_{12})f^{\vi}(g_{12}^{-1}z_v) \nonumber \\
        = &
        \underbrace{W^{\vo} F^{\vo}(g_{12}^{-1} z_u) \Bigl(\rho_\vo(r_{12}) \Bigr)^T}_{\Circled{A}} + 
        \sum_{\substack{\vi \\ z_v \in \textit{supp}(f)} } \underbrace{\alpha(g_{12}(f), z_u, g_{12}z_v)}_{\Circled{B}}
        \underbrace{\mathcal{W}^{\vo, \vi}(g_{12}z_v-z_u) \rho_\vi(r_{12})f^{\vi}(z_v)}_{\Circled{C}}. \nonumber
    \end{align}
    Note that here we have $g_{12}F^\vo(z) = F^{\vo}(g^{-1}_{12}z)(\rho_\vo(r_{12}))^T$ because we write $F$ in the channel-first form, 
    \ie, the shape of $F^{\vo}$ is $c \times (2o_1 +1)(2o_2+1)$.
    The LHS of the equation is computed as
    \begin{align}
        & \Bigl(  g_{12} \bigl( L(f) \bigr)  \Bigr)^\vo(z_u)  \nonumber \\
        =& \underbrace{W^{\vo} F^{\vo}(g^{-1}_{12}z_u) \Bigl(\rho_\vo(r_{12}) \Bigr)^T }_{\Circled{A'} } + \nonumber 
        \sum_{ \substack{\vi \\ z_v \in \textit{supp}(f)} } \underbrace{\alpha(f, g_{12}^{-1}z_u, z_v)}_{\Circled{B'}} 
        \underbrace{\rho_\vo(r_{12}) \mathcal{W}^{\vo, \vi}(z_v-g_{12}^{-1}z_u) f^{\vi}(z_v)}_{\Circled{C'}}. \nonumber
    \end{align}
    We now verify that these $3$ terms are equal respectively.

    $\Circled{C} = \Circled{C'}$: By design, 
    the kernel $\mathcal{W}$ satisfies the kernel constraint~\eqref{kernel-constraint-appx}
    \begin{equation}
        \rho_{\vo}(r_{12})\mathcal{W}^{\vo, \vi}(z) \rho_{\vi}^{-1}(r_{12}) = \mathcal{W}(r_{12}z).
    \end{equation}
    In addition,
    let $z_u'=g_{12}^{-1}z_u$.
    We have
    \begin{equation}
        g_{12}z_v - z_u = g_{12}z_v - g_{12}z_u' = r_{12}(z_v - z_u') = r_{12}(z_v - g_{12}^{-1}z_u).
    \end{equation}
    Thus, 
    we can obtain the equality by combining the above two equations.

    $\Circled{B}=\Circled{B'}$: 
    We compute $\mathbf{Q}$ and $\mathbf{K}$ for $\Circled{B}$ and $\Circled{B'}$ respectively.
    We have 
    \begin{align}
        \mathbf{Q}(g_{12}(f), z_u, g_{12}z_v) &= \bigoplus_{\vo} W^{\vo}_Q (g_{12}F)^{\vo}(z_u) = \bigoplus_{o} W^{\vo}_Q F^{\vo}(r^{-1}_{12}(z_u))\Bigl(\rho_{\vo}(r_{12})\Bigr)^T , \nonumber \\
        \mathbf{Q}(f, g^{-1}_{12}z_u, z_v)    &= \bigoplus_{\vo } W^{\vo}_Q F^{\vo}(g^{-1}_{12}(z_u)), \nonumber \\
        \mathbf{K}(g_{12}(f), z_u, g_{12}z_v) & = \bigoplus_{\vo} \sum_{\vi} \mathcal{W}_K^{\vo, \vi}(g_{12}z_v - z_u) \Bigl(g_{12}(f)\Bigr)^\vi(g_{12}z_v) \nonumber \\
          & = \bigoplus_{\vo} \sum_{\vi} \mathcal{W}_K^{\vo, \vi}(g_{12}z_v - z_u) \rho_\vi(r_{12}) f^\vi(z_v), \nonumber
    \end{align}
    and 
    \begin{align}
        \mathbf{K}(f, g_{12}^{-1}z_u, z_v)
        =& \bigoplus_{\vo } \sum_{\vi } \mathcal{W}_K^{\vo, \vi}(z_v - g_{12}^{-1}z_u) f^\vi(z_v) \nonumber \\
        = &\bigoplus_{\vo } \sum_{\vi } \Bigl(\rho_\vo(r_{12}) \Bigr)^T \mathcal{W}_K^{\vo, \vi}(g_{12}z_v - z_u) \rho_\vi(r_{12}) f^\vi(z_v), \nonumber
    \end{align}
    where the last equation holds due to the kernel constraint~\eqref{kernel-constraint-appx} and the orthogonality of $\rho_\vo(r_{12})$.
    Specifically,
    $\rho_\vo(r_{12})=\rho_{o_1}(r_1) \otimes \rho_{o_2}(r_2)$ is an orthogonal matrix because $\rho_{o_1}$ and $\rho_{o_2}$ are orthogonal representations,
    \ie, $\rho_{o_1}(r_1)$ and $\rho_{o_2}(r_2)$ are orthogonal matrices.
    
    Thus,
    we have
    \begin{align}
        & \langle \mathbf{Q}(g_{12}(f), z_u, g_{12}z_v), \mathbf{K}(g_{12}(f), z_u, g_{12}z_v) \rangle \nonumber \\
        =& \sum_\vo W^{\vo}_Q F^{\vo}(r^{-1}_{12}(z_u))\Bigl(\rho_{\vo}(r_{12})\Bigr)^T  \Bigl( \sum_{\vi} \mathcal{W}_K^{\vo, \vi}(g_{12}z_v - z_u) \rho_\vi(r_{12}) f^\vi(z_v) \Bigr) \nonumber \\
        =& \sum_\vo W^{\vo}_Q F^{\vo}(r^{-1}_{12}(z_u))  \Bigl( \sum_{\vi} \Bigl(\rho_{\vo}(r_{12})\Bigr)^T \mathcal{W}_K^{\vo, \vi}(g_{12}z_v - z_u) \rho_\vi(r_{12}) f^\vi(z_v) \Bigr) \nonumber \\
        =&  \langle\mathbf{Q}(f, g^{-1}_{12}z_u, z_v), \mathbf{K}(f, g_{12}^{-1}z_u, z_v) \rangle \nonumber
    \end{align}

    Finally, 
    $\Circled{A}=\Circled{A'}$ is trivial.
    In summary,
    we have shown $(g_{12})(L(f))=L( (g_{12})(f))$,
    which proves the $SE(3) \times SE(3)$-equivariance of the transformer layer~\eqref{Bi-SE3-layer}.  
\end{proof}

On the other hand, 
an Elu layer~\eqref{Bi-SE3-layer} is also $SE(3) \times SE(3)$-equivariant,
because this layer is piece-wise linear.
We state this statement formally in the following lemma,
and omit the proof.
We note that a similar argument was used in~\cite{deng2021vector}.
\begin{lemma}
    \label{elu-layer-lemma-appx}
    The Elu layer~\eqref{Relu-def} is $SE(3) \times SE(3)$-equivariant.
\end{lemma}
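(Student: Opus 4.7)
The plan is to verify $SE(3)\times SE(3)$-equivariance of the Elu layer~\eqref{Relu-def} by exploiting three facts: (i) the layer is pointwise, so the translation part of $SE(3)\times SE(3)$ acts trivially on the output of the operation (it only relabels base-space coordinates, which Elu does not touch); (ii) the weight matrices $W_\mu^{\vi}, W_\nu^{\vi} \in \mathbb{R}^{c\times c}$ mix channels only, so they commute with the action of $\rho_{\vi}(r_1\times r_2)$ which acts on the $(2i_1+1)(2i_2+1)$ degree indices; and (iii) $\rho_{\vi}(r_1\times r_2)=\rho_{i_1}(r_1)\otimes\rho_{i_2}(r_2)$ is an orthogonal representation, so channel-wise inner products and norms are invariant.

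Concretely, first I would fix $g_{12}=g_1\times g_2 \in SE(3)\times SE(3)$ and apply Elu pointwise to the transformed field $(g_{12}f)^{\vi}(z)=\rho_{\vi}(r_1\times r_2)f^{\vi}(g_{12}^{-1}z)$. Writing the feature at the point $g_{12}^{-1}z$ in channel-first form, one checks that $F_\mu^{\text{rot}} = W_\mu^{\vi}(F^{\vi}\,\rho_{\vi}(r_{12})^T) = F_\mu\,\rho_{\vi}(r_{12})^T$ and similarly $F_\nu^{\text{rot}}=F_\nu\,\rho_{\vi}(r_{12})^T$, because $W_\mu^{\vi}, W_\nu^{\vi}$ act from the left on channels while $\rho_{\vi}(r_{12})^T$ acts from the right on degree indices.

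Next I would verify that the branch selector $\langle F_\mu,F_\nu\rangle\geqslant 0$ and the normalization are invariant. Since $\rho_{\vi}(r_{12})$ is orthogonal, the channel-wise inner product satisfies
\begin{equation}
\langle F_\mu^{\text{rot}},F_\nu^{\text{rot}}\rangle = \langle F_\mu\,\rho_{\vi}(r_{12})^T, F_\nu\,\rho_{\vi}(r_{12})^T\rangle = \langle F_\mu, F_\nu\rangle,
\end{equation}
and likewise $\|F_\nu^{\text{rot}}\| = \|F_\nu\|$. Hence the two pieces of the case split of~\eqref{Relu-def} are triggered on the same inputs. In the identity branch the output is $F_\mu^{\text{rot}}=F_\mu\,\rho_{\vi}(r_{12})^T$; in the projection branch a single right-multiplication by $\rho_{\vi}(r_{12})^T$ factors out:
\begin{equation}
F_\mu^{\text{rot}} - \Big\langle F_\mu^{\text{rot}}, \tfrac{F_\nu^{\text{rot}}}{\|F_\nu^{\text{rot}}\|}\Big\rangle \tfrac{F_\nu^{\text{rot}}}{\|F_\nu^{\text{rot}}\|} = \Big( F_\mu - \big\langle F_\mu, \tfrac{F_\nu}{\|F_\nu\|}\big\rangle \tfrac{F_\nu}{\|F_\nu\|}\Big)\rho_{\vi}(r_{12})^T.
\end{equation}
Both branches therefore yield $\mathrm{Elu}(F^{\vi})\,\rho_{\vi}(r_{12})^T$, which is exactly the action of $g_{12}$ on the original output evaluated at $z$.

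Almost everything is bookkeeping; the only mildly subtle point is making sure the channel-wise inner product in~\eqref{Relu-def} is interpreted as a sum over the $(2i_1+1)(2i_2+1)$ degree indices (so that orthogonality of $\rho_{\vi}(r_{12})$ can be invoked), and that $W_\mu^{\vi}, W_\nu^{\vi}$ act only on channels. Once that convention is fixed, the argument is a short calculation, and combined with the pointwise nature of Elu (handling translations trivially) it gives $SE(3)\times SE(3)$-equivariance.
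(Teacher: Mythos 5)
Your proof is correct and is essentially the argument the paper has in mind: the paper omits the proof of this lemma, noting only that the Elu layer is piecewise linear and that a similar argument appears in the vector-neurons paper of Deng et al., which is exactly the calculation you carry out (pointwise action handles translations, $W_\mu^{\vi},W_\nu^{\vi}$ act on channels so they commute with the right action of $\rho_{\vi}(r_{12})^T$, and orthogonality of $\rho_{\vi}(r_{12})$ preserves the branch condition and lets the rotation factor out of both branches). Nothing further is needed.
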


Now we can prove the $SE(3)$-bi-equivariance of BITR.

\begin{proof}[The Proof of Prop.~\ref{Arun-proj-global}]
Let $g = \Phi_P \circ \Phi_S(X, Y)$,
and $f_{out} = \Phi_S(X, Y)$.
We prove this proposition by showing 
\begin{equation}
    \label{BITR-global-proof-appx}
    g_2^{-1}gg_1 = \Phi_P \circ \Phi_S(g_1X, g_2Y).
\end{equation}
We compute the RHS of the equation as follows.
First,
since each point in $\tilde{X}$ and $\tilde{Y}$ is a convex combination of $X$ and $Y$ respectively,
we have 
\begin{equation}
    \tilde{X}(g_1X,g_2Y) = g_1 \tilde{X}(X,Y) \quad \tilde{Y}(g_1X,g_2Y) = g_2 \tilde{Y}(X,Y).
\end{equation}
Then we have $Z(g_1X, g_2Y) = (g_1 \times g_2) Z(X,Y)$ because $Z = \tilde{X} \oplus \tilde{Y}$.
When $Z(g_1X, g_2Y)$ is fed to the $SE(3) \times SE(3)$-transformer,
the output feature will be $(g_1 \times g_2) f_{out}$ by design (This is verified in Lemma.~\ref{bi-layer-lemma-appx} and Lemma.~\ref{elu-layer-lemma-appx}).
In particular,
for degree-$(1,1)$ feature $\tilde{r}$, degree-$(1,0)$ feature $t_X$ and degree-$(0,1)$ feature $t_Y$,
we have
\begin{equation}
    \tilde{r}(g_1X, g_2Y) = (r_1 \otimes r_2) \tilde{r}(X, Y) \quad t_X(g_1X, g_2Y) = r_1 t_X(X, Y) \quad t_Y(g_1X, g_2Y) = r_2 t_Y(X, Y).
\end{equation}
Therefore,
we have $\hat{r}(g_1X, g_2Y) = r_2 \tilde{r}(X, Y) r_1^{-1}$ by applying $ \textit{unvec}(\cdot)$ to the first equation,
\ie,
$\hat{r} = \textit{unvec}(\tilde{r})$.
Finally,
we compute projection $\Phi_P$ similar to proof~\ref{proof-arun-app}:
\begin{equation}
    r(g_1X, g_2Y) =  r_2 r(X,Y) r_1^T,
\end{equation}
and 
\begin{align}
    t(g_1X, g_2Y) &= \boldsymbol{m}(g_2Y) + t_Y(g_1X,g_2Y) - r(g_1X, g_2Y) (\boldsymbol{m}(g_1X) + t_X(g_1X,g_2Y))  \nonumber \\
                  &=  g_2\boldsymbol{m}(Y) + r_2 t_Y(X, Y) - r_2 r(X,Y) r_1^T (g_1\boldsymbol{m}(X) + r_1t_X(X, Y) )  \nonumber \\
                  &= r_2 t(X,Y) - r_2 r(X,Y) r_1^{-1} t(X,Y) + t_2. \nonumber
\end{align}
In summary,
\begin{equation}
    \Phi_P \circ \Phi_S(g_1X, g_2Y) = (r_2r(X,Y)r_1^{-1}, -r_2r(X,Y)r_1^{-1}t_1+r_2t(X,Y)+t_2) =g_2 g g_1^{-1},
\end{equation}
which proves the $SE(3)$-bi-equivariance of BITR.
\end{proof}

We finally prove Prop.~\ref{swap-prop} and Prop.~\ref{scale-prop} similarly to Prop.~\ref{Arun-global}.
\begin{proof}[The Proof of Prop.~\ref{swap-prop}]
We compute $\Phi_P(s(f))$ as follows.
First,
we have
\begin{equation}
    \left(  s(f) \right)^{1,1} (z) = \left(f^{1,1} (s(z))\right)^T, \; 
    \left(  s(f) \right)^{1,0} (z) = \left(f^{0,1} (s(z))\right)^T, \;
    \left(  s(f) \right)^{0,1} (z) = \left(f^{1,0} (s(z))\right)^T
\end{equation}
by definition.
Then we compute the equivariant features as
\begin{equation}
    \hat{r}(s(f)) = \left(\hat{r}(f) \right)^T, \quad t_X(s(f)) = t_Y(f), \quad t_Y(s(f)) = t_X(f),
\end{equation}
where we treat $t_X$ and $t_Y$ as vectors.
Finally,
the output can be computed as
\begin{equation}
    r(s(f)) = \textit{SVD}(\hat{r}(s(f))) =  \textit{SVD}(\left(\hat{r}(f) \right)^T) = \left(\textit{SVD}(\hat{r}(f))\right)^T = \left(r(f)\right)^T
\end{equation}
according to lemma~\ref{lemma-1},
and 
\begin{align}
    t(s(f)) & = \boldsymbol{m}(Y(s(f))) + t_Y(s(f)) - r(s(f)) (\boldsymbol{m}(X(s(f))) + t_X(s(f)))  \nonumber \\
      & = \boldsymbol{m}(X) + t_X(f) - \left(r(f)\right)^T (\boldsymbol{m}(Y) + t_Y(f)) \nonumber \\
      & = -r(f)^T t(f). \nonumber
\end{align}
In other words,
$\Phi_P(s(f)) = (r^{T}, -r^T t) = g^{-1}$,
which proves this proposition.
\end{proof}

\begin{proof}[The Proof of Prop.~\ref{scale-prop}]
    We compute $\Phi_P(c(f))$ as follows.
    First,
    since $f$ is a degree-$1$ $\mathbb{R}_+$-equivariant tensor field,
    $f$ satisfies $c(f)(z) = c f(c^{-1}z)$.
    Therefore,
    we have
    \begin{equation}
        \hat{r}(c(f)) = \left(c \hat{r}(f) \right), \quad t_X(c(f)) = ct_X(f), \quad t_Y(c(f)) = t_Y(f)
    \end{equation}
    by definition.
    Then,
    we compute the output as
    \begin{equation}
        r(c(f)) = \textit{SVD}(\hat{r}(c(f))) =  \textit{SVD}(\left(c\hat{r}(f) \right)) = \left(\textit{SVD}(\hat{r}(f))\right) = \left(r(f)\right)
    \end{equation}
    according to lemma~\ref{lemma-1},
    and 
    \begin{align}
        t(c(f)) & = \boldsymbol{m}(Y(c(f))) + t_Y(c(f)) - r(c(f)) (\boldsymbol{m}(X(c(f))) + t_X(c(f)))  \nonumber \\
        & = c\boldsymbol{m}(Y) + ct_Y(f) - r(f) (c \boldsymbol{m}(Y) + c t_Y(f)) \nonumber \\
        & = c t(f). \nonumber
    \end{align}
    In other words,
    $\Phi_P(c(f)) = (r, c t)$,
    which proves this proposition.
\end{proof}

\subsection{More results of Sec.~\ref{Sec_swap}}

\subsubsection{The proof of Prop.~\ref{Swap-weights}}
\label{sec-Swap-weights-appx}
In this subsection,
we use slightly different notations than other parts of the paper.
We regard the kernel $\mathcal{W}^{\vo, \vi}$ as a 4-D tensor of shape $(2o_1 +1) \times (2o_2 +1) \times (2i_1 +1) \times (2i_2 +1)$,
and we regard the feature $f^\vi$ as a 2-D tensor of shape $(2i_1 +1) \times (2i_2 +1)$, \ie, a matrix, when it is multiplied by a kernel.
Therefore,
the multiplication $\mathcal{W}^{\vo, \vi}f^\vi$ is treated as a tensor product where all two dimensions of $f^\vi$ are treated as rows,
\ie, the result of $\mathcal{W}^{\vo, \vi}f^\vi$ is of shape $(2o_1 +1) \times (2o_2 +1)$.
Similarly,
we regard the collection of features $F^{\vo}$ as a 3-D tensor of shape $c \times (2o_1 +1) \times (2o_2 +1)$.
When it is multiplied by a self-interaction weight $W^{\vo} \in \mathbb{R}^{1 \times c}$,
the result is $W^{\vo} F^{\vo} \in \mathbb{R}^{(2o_1 +1) \times (2o_2 +1)}$.

We first make the following observation on the symmetry of the kernel.
\begin{lemma}
    \label{swap-kernel-appx}
    If the radial function $\varphi$ satisfies $\varphi^{\vi, \vo}_{J_1, J_2}(\|z^1 \|, \|z^2\|)=\varphi^{\tilde{\vi}, \tilde{\vo}}_{J_2, J_1}(\|z^2\|, \|z^1 \|)$ for all $z^1$, $z^2$, $J_1$, $J_2$,
    then kernel $\mathcal{W}$~\eqref{Bi-SE3-layer} satisfies
    \begin{equation}
        \mathcal{W}^{\vo, \vi}(z) = \left(\mathcal{W}^{\tilde{\vo}, \tilde{\vi}}\left(s(z)\right)\right)^{T_{12}T_{34}},
    \end{equation}
    where $T_{ij}$ represents the transpose of the $i$-th and $j$-th dimension of a tensor.
\end{lemma}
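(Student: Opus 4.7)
The plan is to verify the identity by expanding both sides using the explicit kernel formula from Eqn.~\eqref{Bi-SE3-kernel}. Starting from the RHS, I would write
\begin{equation}
\textit{vec}(\mathcal{W}^{\tilde{\vo}, \tilde{\vi}}(s(z))) = \sum_{J_1, J_2} \varphi_{J_1, J_2}^{\tilde{\vo}, \tilde{\vi}}(\|z^2\|, \|z^1\|)\, \mathcal{Q}^{\tilde{\vo}, \tilde{\vi}}_{J_1, J_2}\, Y_{J_1}(z^2/\|z^2\|) \otimes Y_{J_2}(z^1/\|z^1\|), \nonumber
\end{equation}
using that $s$ swaps the two halves of $z$. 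Next I would relabel the summation indices by exchanging the dummies $J_1 \leftrightarrow J_2$, which turns the angular part into $Y_{J_2}(z^2/\|z^2\|) \otimes Y_{J_1}(z^1/\|z^1\|)$ and the radial coefficient into $\varphi^{\tilde{\vo}, \tilde{\vi}}_{J_2, J_1}(\|z^2\|, \|z^1\|)$. Invoking the assumed radial symmetry $\varphi^{\tilde{\vo}, \tilde{\vi}}_{J_2, J_1}(\|z^2\|, \|z^1\|) = \varphi^{\vo, \vi}_{J_1, J_2}(\|z^1\|, \|z^2\|)$, the radial factor now matches that in the LHS exactly.

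The remaining task is to show that for every summand, the angular/CG object
\begin{equation}
\mathcal{Q}^{\tilde{\vo}, \tilde{\vi}}_{J_2, J_1}\, Y_{J_2}(z^2/\|z^2\|) \otimes Y_{J_1}(z^1/\|z^1\|) \nonumber
\end{equation}
coincides, after the $T_{12}T_{34}$ transposition, with the corresponding term
\begin{equation}
\mathcal{Q}^{\vo, \vi}_{J_1, J_2}\, Y_{J_1}(z^1/\|z^1\|) \otimes Y_{J_2}(z^2/\|z^2\|) \nonumber
\end{equation}
appearing in the LHS. For this I would use the factorized form of the second-order Clebsch-Gordan tensor derived in Appx.~\ref{kernel-derive-appx}: each entry of $\mathcal{Q}^{\vo, \vi}_{J_1, J_2}$ is a product of two first-order CG coefficients, one coupling $(i_1, o_1)$ to $J_1$ and the other coupling $(i_2, o_2)$ to $J_2$. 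Swapping $\vo \leftrightarrow \tilde{\vo}$, $\vi \leftrightarrow \tilde{\vi}$ and $J_1 \leftrightarrow J_2$ simply interchanges the two factors, so that $\mathcal{Q}^{\tilde{\vo}, \tilde{\vi}}_{J_2, J_1}$ is a permutation of $\mathcal{Q}^{\vo, \vi}_{J_1, J_2}$. Combined with the swap of the two spherical-harmonic tensor factors $Y_{J_1}(\cdot) \otimes Y_{J_2}(\cdot)$, this amounts precisely to exchanging the two output indices and the two input indices of the resulting 4-tensor, which is the definition of $T_{12}T_{34}$.

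The main obstacle is step three, the careful index bookkeeping: one must identify which factors of the tensor product correspond to the output pair $(o_1, o_2)$ and which to the input pair $(i_1, i_2)$, and verify that the pure swap of the two rank-two sub-factors in the Kronecker decomposition of $\mathcal{Q}$ lifts, under vectorization and the appropriate reshaping conventions, to the simultaneous transpositions $T_{12}$ (on the output axes) and $T_{34}$ (on the input axes). Once this tensor-product identification is made explicit, all remaining steps are routine symbolic manipulations, and summing over $J_1, J_2$ recovers $\mathcal{W}^{\vo, \vi}(z)$ exactly.
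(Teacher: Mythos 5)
Your plan follows essentially the same route as the paper's proof: expand the kernel via Eqn.~\eqref{Bi-SE3-kernel}, relabel the dummy indices $J_1\leftrightarrow J_2$, invoke the radial symmetry hypothesis, and exploit that the second-order Clebsch--Gordan tensor $\mathcal{Q}^{\vo,\vi}_{J_1,J_2}$ factorizes into two first-order CG factors that simply trade places under the swap. However, the step you defer as the ``main obstacle'' --- verifying that the exchange of the two Kronecker sub-factors of $\mathcal{Q}$ and of the two spherical-harmonic factors lifts, under vectorization and reshaping, to exactly $T_{12}T_{34}$ --- is the entire content of the lemma, and you assert rather than prove it. The paper sidesteps this bookkeeping issue by never reasoning at the tensor level at all: it writes the $(m_3,m_4,m_1,m_2)$-entry of $\mathcal{W}^{\vo,\vi}(z)$ and the $(m_4,m_3,m_2,m_1)$-entry of $\mathcal{W}^{\tilde{\vo},\tilde{\vi}}(s(z))$ directly as sums of products of the scalar CG coefficients $\langle i_1,m_1,o_1,m_3|J_1,m_5\rangle$, $\langle i_2,m_2,o_2,m_4|J_2,m_6\rangle$ and the scalar harmonics $Y_{J_1}^{m_5},Y_{J_2}^{m_6}$. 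Because these are scalars, the products commute and the two expressions are literally identical term by term; the $T_{12}T_{34}$ identification then reads off from which index slot $(m_3,m_4,m_1,m_2)$ versus $(m_4,m_3,m_2,m_1)$ you are looking at, with no need to reason about vectorization conventions. If you want to finish your version, the cleanest fix is to do exactly that: drop the tensor-level reshuffling argument and instead compare entries, using the paper's stated convention that $\mathcal{W}^{\vo,\vi}$ is a 4-tensor of shape $(2o_1{+}1)\times(2o_2{+}1)\times(2i_1{+}1)\times(2i_2{+}1)$, so that $T_{12}$ swaps the two output axes and $T_{34}$ swaps the two input axes.
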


\begin{proof}
According to~\eqref{Bi-SE3-layer} and the discussion in Sec.~\ref{kernel-derive-appx},
the $(m_3, m_4, m_1, m_2)$-th element of $\mathcal{W}^{\vo, \vi}(z)$ is
\begin{multline}
    \label{swap-kernel-appx-eq1}
    \sum_{J_1=|o_1-i_1|}^{o_1+i_1} \sum_{J_2=|o_2-i_2|}^{o_2+i_2} \sum_{m_5=-J_1}^{J_1} \sum_{m_6=-J_2}^{J_2} \varphi^{\vi, \vo}_{J_1, J_2}(\|z^1 \|, \|z^2\|) \langle i_1, m_1, o_1, m_3| J_1, m_5 \rangle \\
     \langle i_2, m_2, o_2, m_4| J_2, m_6 \rangle 
     Y_{J_1}^{m_5} (z^1 /\| z^1 \|) Y_{J_2}^{m_6} (z^2 /\| z^2 \|),
\end{multline}
and the $(m_4, m_3, m_2, m_1)$-th element of $\mathcal{W}^{\tilde{\vo}, \tilde{\vi}}\left(s(z)\right)$ is
\begin{multline}
    \label{swap-kernel-appx-eq2}
    \sum_{J_2=|o_2-i_2|}^{o_2+i_2} \sum_{J_1=|o_1-i_1|}^{o_1+i_1}  \sum_{m_6=-J_2}^{J_2} \sum_{m_5=-J_1}^{J_1} \varphi^{\tilde{\vi}, \tilde{\vo}}_{J_2, J_1}(\|z^2 \|, \|z^1\|) \langle i_2, m_2, o_2, m_4| J_2, m_6 \rangle \\
     \langle i_1, m_1, o_1, m_3| J_1, m_5 \rangle  
      Y_{J_2}^{m_6} (z^2 /\| z^2 \|) Y_{J_1}^{m_5} (z^1 /\| z^1 \|).
\end{multline}
Since the angular components in~\eqref{swap-kernel-appx-eq1} and~\eqref{swap-kernel-appx-eq2} are the same,
and the radial components are equal by assumption,
we immediately conclude that~\eqref{swap-kernel-appx-eq1} and~\eqref{swap-kernel-appx-eq2} are equal.
In other words,
\begin{equation}
    \Bigl(\mathcal{W}^{\vo, \vi}(z)\Bigr)_{m_3, m_4, m_1, m_2} = \mathcal{W}^{\tilde{\vo}, \tilde{\vi}}\left(s(z)\right)_{m_4, m_3, m_2, m_1},
\end{equation}
which proves this lemma.
\end{proof}

Then we proceed to the proof of Prop.~\ref{Swap-weights}.
\begin{proof}[The Proof of Prop.~\ref{Swap-weights}]
Let $L$ be a transformer layer~\eqref{Bi-SE3-layer},
we seek to prove $s(L(f))=L(s(f))$ for the input tensor field $f$.

To this end,
we expand the RHS of the equation as  
\begin{align}
    & \Bigl(  L( s(f))   \Bigr)^\vo(z_u) \nonumber \\
    = &
    W^{\vo} \bigl(s(F)\bigr)^{\vo}(z_u) + 
    \sum_{\substack{\vi \in I(s(f)) \\ z_v \in \textit{supp}(s(f))} } \alpha(s(f), z_u, z_v)
    \mathcal{W}^{\vo, \vi}(z_v-z_u) \bigl(s(f) \bigr)^{\vi}(z_v) \nonumber \\
    = &
    W^{\vo} \Bigl(F^{\tilde{\vo}}(s(z_u))\Bigr)^{T_{23}} + 
    \sum_{\substack{\vi \in I(s(f)) \\ z_v \in \textit{supp}(s(f))}} \alpha(s(f), z_u, z_v)
    \mathcal{W}^{\vo, \vi}(z_v-z_u) \Bigl(f^{\tilde{\vi}}(s(z_v))\Bigr)^T \nonumber \\ 
    = &
    \underbrace{W^{\vo} \Bigl(F^{\tilde{\vo}}(s(z_u))\Bigr)^{T_{23}}}_{\Circled{A} } + 
    \sum_{\substack{\vi \in I(f) \\ z_v \in \textit{supp}(f)}} \underbrace{\alpha(s(f), z_u, s(z_v)  )}_{\Circled{B}} 
    \underbrace{\mathcal{W}^{\vo, \tilde{\vi}}(s(z_v)-z_u) \Bigl(f^{\vi}(z_v) \Bigr)^T}_{\Circled{C}}, \nonumber
\end{align}
where $I(f)$ is the set of all degrees of field $f$,
and $supp(f)$ is the support of field $f$.
The last equation holds because we replace $z_v$ by $s(z_v)$,
and replace $\vi$ by $\tilde{\vi}$.
We expand the LHS of the equation as 
\begin{align}
    \Bigl(  s\bigl( L(f) \bigr)  \Bigr)^\vo(z_u) 
    =& \Bigl(\bigl( L(f) \bigr)^{\tilde{\vo}}(s(z_u)) \Bigr)^T \nonumber \\
    =& \underbrace{\Bigl(W^{\tilde{\vo}} F^{\tilde{\vo}}(s(z_u)) \Bigr)^T}_{\Circled{A'} } + 
    \sum_{ \substack{\vi \in I(f) \\ z_v \in \textit{supp}(f)} } \underbrace{\alpha(f, s(z_u), z_v)}_{\Circled{B'}} 
    \underbrace{\Bigl(\mathcal{W}^{\tilde{\vo}, \vi}(z_v-s(z_u)) f^{\vi}(z_v) \Bigr)^T}_{\Circled{C'}}. \nonumber
\end{align}
We now verify that these three terms are equal respectively. 
 
$\Circled{A}=\Circled{A'}$:
By assumption,
$W^{\vo}=W^{\tilde{\vo}}$,
thus we immediately have 
\begin{equation}
    \Bigl(W^{\tilde{\vo}} F^{\tilde{\vo}}(s(z_u)) \Bigr)^T=\Bigl(W^{\vo} F^{\tilde{\vo}}(s(z_u)) \Bigr)^T=W^{\vo}\Bigl( F^{\tilde{\vo}}(s(z_u)) \Bigr)^{T_{23}}.
\end{equation}

$\Circled{B}=\Circled{B'}$: Since $\alpha = \langle \mathbf{Q}, \mathbf{K} \rangle$,
we compute $\mathbf{Q}$ and $\mathbf{K}$ for $\Circled{B}$ and $\Circled{B'}$ respectively.
As for $\mathbf{Q}$,
we have 
\begin{align}
    \mathbf{Q}(s(f), z_u, s(z_v)) = \bigoplus_{\vo \in I(s(f))} W^{\vo}_Q (s(F))^{\vo}(z_u) & = \bigoplus_{o \in I(s(f))} W^{\vo}_Q \Bigl(F^{\tilde{\vo}}(s(z_u)) \Bigr)^T \nonumber \\
    &= \bigoplus_{o \in I(s(f))} \Bigl( W^{\tilde{\vo}}_Q F^{\tilde{\vo}}(s(z_u)) \Bigr)^T, \nonumber
\end{align}
where the last equation holds because $W^{\vo}_Q = W^{\tilde{\vo}}_Q$ by assumption.
In addition,
\begin{align}
    \mathbf{Q}(f, s(z_u), z_v)    &= \bigoplus_{\vo \in I(f)} W^{\vo}_Q F^{\vo}(s(z_u)). \nonumber
\end{align}
As for $\mathbf{K}$,
we have
\begin{align}
    \mathbf{K}(s(f), z_u, s(z_v)) &= \bigoplus_{\vo \in I(s(f))} \sum_{\vi \in I(s(f))} \mathcal{W}_K^{\vo, \vi}(s(z_v) - z_u) \bigl(s(f)\bigr)^\vi(s(z_v))  \nonumber \\
                                  &= \bigoplus_{\vo \in I(s(f))} \Bigl( \sum_{\vi \in I(f)} \mathcal{W}_K^{\tilde{\vo}, \vi}(z_v - s(z_u)) f^\vi(z_v) \Bigr)^T, \nonumber
\end{align}
where the last equation holds due to the constraint of radial function and Lemma.~\ref{swap-kernel-appx},
and we replace $\vi$ by $\tilde{\vi}$.
In addition,
\begin{align}
    \mathbf{K}(f, s(z_u), z_v)    &= \bigoplus_{\vo \in I(f)} \sum_{\vi \in I(f)} \mathcal{W}_K^{\vo, \vi}(z_v - s(z_u)) f^\vi(z_v). \nonumber
\end{align}
We observe that $\mathbf{Q}(s(f), z_u, s(z_v))$ is just the transpose and re-ordering of each component of $\mathbf{Q}(f, s(z_u), z_v)$,
and this is also true for $\mathbf{K}(s(f), z_u, s(z_v))$ and $\mathbf{K}(f, s(z_u), z_v)$,
which suggests that their inner products are the same.
Specifically,
let $\mathbf{Q}^\vo= W^{\vo}_Q F^{\vo}(s(z_u))$ and $\mathbf{K}^\vo = \sum_{\vi \in I(f)} \mathcal{W}_K^{\vo, \vi}(z_v - s(z_u)) f^\vi(z_v)$.
We have
\begin{align}
      \alpha(s(f), z_u, s(z_v)) 
    = \langle\bigoplus_{\vo \in I(s(f))} \Bigl( \mathbf{Q}^{\tilde{\vo}} \Bigr)^T,  \bigoplus_{\vo \in I(s(f))} \Bigl( \mathbf{K}^{\tilde{\vo}} \Bigr)^T \rangle
    = \sum_{\vo \in I(s(f))}  \langle \mathbf{Q}^{\tilde{\vo}} , \mathbf{K}^{\tilde{\vo}} \rangle 
    = \sum_{\vo \in I(f)}  \langle \mathbf{Q}^{\vo} , \mathbf{K}^{\vo} \rangle  \nonumber
\end{align}
and 
\begin{align}
    \alpha(f, s(z_u), z_v) = \langle\bigoplus_{\vo \in I(f)} \mathbf{Q}^{\vo},  \bigoplus_{\vo \in I(f)} \mathbf{K}^{\vo} \rangle = \sum_{\vo \in I(f)}  \langle \mathbf{Q}^{\vo} , \mathbf{K}^{\vo} \rangle,  \nonumber
\end{align}
which suggests that $\alpha(f, s(z_u), z_v) = \alpha(s(f), z_u, s(z_v))$.

$\Circled{C}=\Circled{C'}$: By assumption,
the radial function satisfies $\varphi^{\vi, \vo}_{J_1, J_2}(\|z^1 \|, \|z^2\|)=\varphi^{\tilde{\vi}, \tilde{\vo}}_{J_2, J_1}(\|z^2\|, \|z^1 \|)$ for all $\vi$, $\vo$, $J_1$, $J_2$, $z^1$ and $z^2$,
thus $\mathcal{W}^{\vo, \vi}(z) = \left(\mathcal{W}^{\tilde{\vo}, \tilde{\vi}}\left(s(z)\right)\right)^{T_{12}T_{34}}$ according to Lemma.~\ref{swap-kernel-appx}.
As a result,
\begin{equation}
    \Bigl(\mathcal{W}^{\tilde{\vo}, \vi}(z_v-s(z_u)) f^{\vi}(z_v) \Bigr)^T = \Bigl(\bigl(\mathcal{W}^{\vo, \tilde{\vi}}(s(z_v)-z_u) \bigr)^{T_{12}T_{34}} f^{\vi}(z_v) \Bigr)^T = \mathcal{W}^{\vo, \tilde{\vi}}(s(z_v)-z_u) \Bigl(f^{\vi}(z_v) \Bigr)^T.
\end{equation}

In summary,
we have shown the $\mathbb{Z}/2\mathbb{Z}$-equivariance of a transformer layer~\eqref{Bi-SE3-layer},
which is the first part of Prop.~\ref{Swap-weights}.
We omit the proof of the second part of this proposition,
\ie, the equivariance of an Elu layer,
as it is a simple extension of the proof of $\Circled{A}=\Circled{A'}$.
\end{proof}

\begin{remark}
    The weight sharing technique in Prop.~\ref{Swap-weights}
reduces the number of learnable parameters of a $SE(3) \times SE(3)$-transformer by about half,
thus it makes the model more efficient.
In practice, 
the weight sharing technique can be achieved by sharing weights between different $\vo$, $\vi$ and $J$ directly,
except for the symmetric case, \ie, $\vo=\tilde{\vo}$, $\vi=\tilde{\vi}$ and $J_1=J_2$,
where the requirement of the radial function becomes
\begin{equation}
    \varphi^{\vo, \vi}_{J_1, J_1}(\|z^1 \|, \|z^2\|)=\varphi^{\vo, \vi}_{J_1, J_1}(\|z^2\|, \|z^1 \|).
\end{equation}
We represent the radial function in this case as $\varphi(x, y) = \frac{1}{2} (\phi(x, y) + \phi(y,x))$,
where $\phi$ is a neural network,
which guarantees the symmetry of $\varphi$.
\end{remark}

\subsubsection{The complete-matching property}

Consider the following complete matching problem.

\paragraph{The complete matching problem} Given a pair of PCs $X$ and $Y$, 
    where $Y$ is generated by a unknown rigidly transformation of $X$,
    \ie, $Y=gX$ and $g\in SE(3)$ is unknown,
    how to infer $g$?

The complete matching problem is a prototype of the registration problem,
\ie,
in the simplest case (no outlier, no noise, no partial visibility..),
how to derive the relative transformation between two fully overlapped PCs?
Despite its simplicity,
this problem is non-trivial,
because we do not know the correspondence between $X$ and $Y$,
\ie, Arun's method does not apply.

A surprising fact is that $g$ can be exactly recovered using a $SE(3)$-bi-equivariant and swap-equivariant assembly method.
We state this fact formally as follows.
\begin{proposition}[Complete-matching property]
    \label{Complete-match-prop}
    Let $\Phi$ be a swap-equivariant and $SE(3)$-bi-equivariant assembly method.
    Then 
    \begin{equation}
        \Phi(X, g_1X)\Phi(X, X) =g_1,
    \end{equation}
    for arbitrary PC $X$ and $g_1 \in SE(3)$.
\end{proposition}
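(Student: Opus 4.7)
The plan is to exploit the two equivariance properties to reduce everything to a single element $h = \Phi(X,X)$ and show that this element squares to the identity. First I would introduce the abbreviation $h \coloneqq \Phi(X,X) \in SE(3)$ and observe, by swap-equivariance applied with $X=Y$, that $h = \Phi(X,X) = \Phi(X,X)^{-1} = h^{-1}$, so $h^{2} = e$, the identity of $SE(3)$. This is the crucial ``involution'' fact that will let the tail of the product collapse.

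Next I would compute $\Phi(X, g_1 X)$ using $SE(3)$-bi-equivariance in the form $\Phi(g_1' X, g_2' Y) = g_2'\, \Phi(X,Y)\, (g_1')^{-1}$ from Def.~\ref{equivariance_def}. Taking $Y=X$, $g_1' = e$ and $g_2' = g_1$ gives
\begin{equation}
\Phi(X, g_1 X) \;=\; g_1\, \Phi(X,X)\, e^{-1} \;=\; g_1 h.
\end{equation}
Multiplying on the right by $\Phi(X,X) = h$ and using $h^{2} = e$ then yields
\begin{equation}
\Phi(X, g_1 X)\, \Phi(X,X) \;=\; g_1 h \cdot h \;=\; g_1 h^{2} \;=\; g_1,
\end{equation}
which is exactly the claim.

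There is essentially no obstacle here: the proof is a two-line manipulation once the abstract equivariance definitions are unpacked. The only subtlety worth flagging is the use of swap-equivariance with identical arguments, which forces $h$ to be an order-two element of $SE(3)$; this is a genuine (though trivial) constraint on $\Phi$ and is what makes the self-pairing $\Phi(X,X)$ cancel when squared. No assumption beyond the two stated equivariances (and the standing mild uniqueness assumption~\eqref{assumption-1}, already used throughout) is required, and the argument is independent of how $\Phi$ is actually implemented, so it applies verbatim to BITR via Prop.~\ref{Arun-proj-global} and Prop.~\ref{swap-prop}.
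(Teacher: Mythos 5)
Your proof is correct and is essentially the same argument as the paper's: swap-equivariance applied to the pair $(X,X)$ forces $\Phi(X,X)$ to be an involution, and $SE(3)$-bi-equivariance with $g_1'=e$, $g_2'=g_1$ gives $\Phi(X,g_1X)=g_1\Phi(X,X)$, after which the two copies of $\Phi(X,X)$ cancel. The only difference is that you make the involution explicit via the auxiliary element $h$, whereas the paper writes $\Phi(X,X)\Phi(X,X)=I$ directly.
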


\begin{proof}
    First, 
    since $\Phi$ is swap-equivariant,
    then $\Phi(X,X) \Phi(X,X)= I$.
    Then,
    since $\Phi$ is $SE(3)$-bi-equivariant,
    we have $\Phi(X, g_1X) \Phi(X,X)= g_1\Phi(X, X) \Phi(X,X)=g_1$.
\end{proof}

According to the main text,
BITR is swap-equivariance (Prop.~\ref{Swap-weights}),
thus, we can derive a concrete algorithm, 
called untrained BITR (U-BITR), 
for the complete matching problem:
\begin{equation}
    \Phi_U(X,Y) =\Phi_B(X,Y) \Phi_B(X, X),
\end{equation}
where $\Phi_B$ is BITR model.
Prop.~\ref{Complete-match-prop} suggests that $\Phi_U$ solves the complete matching problem without training.
We numerically evaluate this property in Appx.~\ref{complete-matching-exp-app},
and leave the theoretical analysis of U-BITR to future research.

\subsection{The proof of Prop.~\ref{scale-weights}}
\label{sec_scale_weight_appx}

First, we have the following straightforward lemma for the kernel.
\begin{lemma}
    \label{scale-kernel-appx}
    If the radial function $\varphi: \mathbb{R}\times \mathbb{R} \rightarrow \mathbb{R}$ is a degree-$p$ function,
    then kernel $\mathcal{W}$~\eqref{Bi-SE3-layer} satisfies
    \begin{equation}
        \mathcal{W}(cz) = c^p \mathcal{W}(z), \quad \forall c \in \mathbb{R}_+.
    \end{equation}
\end{lemma}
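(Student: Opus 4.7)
The plan is to prove this statement by direct substitution into the explicit kernel formula~\eqref{Bi-SE3-kernel}, exploiting the fact that the decomposition cleanly separates the radial and angular dependencies on $z = z^1 \oplus z^2$. Scaling $z$ by $c \in \mathbb{R}_+$ gives $cz = (cz^1) \oplus (cz^2)$, so the only quantities that can possibly change are $\|z^1\|$, $\|z^2\|$, $z^1/\|z^1\|$, and $z^2/\|z^2\|$.

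First I would observe that the angular component of the kernel is invariant under scaling: since $c > 0$, the normalized directions satisfy $(cz^i)/\|cz^i\| = z^i/\|z^i\|$ for $i=1,2$, and hence the spherical harmonics $Y_{J_1}(z^1/\|z^1\|) \otimes Y_{J_2}(z^2/\|z^2\|)$ and the (constant) Clebsch-Gordan factor $\mathcal{Q}_{J_1,J_2}^{\vo,\vi}$ are unchanged. Therefore every term in the sum defining $\textit{vec}(\mathcal{W}^{\vo,\vi}(cz))$ has the same angular factor as the corresponding term in $\textit{vec}(\mathcal{W}^{\vo,\vi}(z))$.

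Next I would handle the radial part. Since $\|cz^i\| = c\|z^i\|$ and $\varphi_{J_1,J_2}^{\vo,\vi}$ is a degree-$p$ function by hypothesis, applying the definition of a degree-$p$ function yields
\begin{equation}
\varphi_{J_1,J_2}^{\vo,\vi}(\|cz^1\|, \|cz^2\|) = \varphi_{J_1,J_2}^{\vo,\vi}(c\|z^1\|, c\|z^2\|) = c^p\, \varphi_{J_1,J_2}^{\vo,\vi}(\|z^1\|, \|z^2\|).
\end{equation}
Substituting this into every summand of~\eqref{Bi-SE3-kernel}, the factor $c^p$ is common to all $(J_1, J_2)$ and can be pulled outside the double sum, giving $\textit{vec}(\mathcal{W}^{\vo,\vi}(cz)) = c^p \,\textit{vec}(\mathcal{W}^{\vo,\vi}(z))$. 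Since $\textit{vec}$ is a bijection, this yields the desired identity $\mathcal{W}(cz) = c^p \mathcal{W}(z)$.

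There is no substantive obstacle here; the lemma is essentially a bookkeeping statement that the radial/angular decomposition in~\eqref{Bi-SE3-kernel} is homogeneous in the radial variable. The only minor point to be careful about is that the hypothesis requires $\varphi$ to be degree-$p$ as a function of \emph{both} of its arguments scaled simultaneously by the same $c$, which is precisely the situation produced by scaling $z$; if one tried to scale only $z^1$ or only $z^2$, the lemma would fail.
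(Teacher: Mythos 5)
Your proof is correct and is exactly the intended argument: the paper states this lemma without proof as ``straightforward,'' and the direct substitution you give---scale-invariance of the angular factor $Y_{J_1}(z^1/\|z^1\|)\otimes Y_{J_2}(z^2/\|z^2\|)$ together with homogeneity of the radial factor $\varphi^{\vo,\vi}_{J_1,J_2}$ pulling out a common $c^p$---is precisely the bookkeeping the authors had in mind. Your closing remark about both arguments being scaled by the same $c$ is also the right caveat.
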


Then we directly verify Prop.~\ref{scale-weights}.
\begin{proof}[The proof of Prop.~\ref{scale-weights}]
    Let $L$ be a transformer layer~\eqref{Bi-SE3-layer},
    $f$ be a degree-$0$ $\mathbb{R}_+$-equivariant input field,
    $\varphi_K$ be a degree-$0$ function.
    We seek to prove: 
    1) If $\varphi_V$ is a degree-$0$ function,
    then $c(L(f))=L(c(f))$ and $L(f)$ is degree-$0$ $\mathbb{R}_+$-equivariant;
    2) If $\varphi_V$ is a degree-$1$ function  and $W=0$, 
    then $c(L(f))=L(c(f))$ and $L(f)$ is degree-$1$ $\mathbb{R}_+$-equivariant.

    To begin with,
    we expand $L(c(f))$ as
    \begin{align}
        \Bigl(L\bigl(c(f)\bigr)\Bigr)^\vo(z_u) 
        &= W^{\vo} (c(F))^{\vo}(z_u) + 
        \sum_{ \substack{\vi \\ z_v \in \textit{supp}(c(f))} }\alpha(c(f), z_u, z_v)
        \mathcal{W}^{\vo, \vi}_V(z_v-z_u) \bigl(c(f) \bigr)^{\vi}(z_v)  \nonumber \\
        &= W^{\vo} F^{\vo}(c^{-1}z_u) + 
        \sum_{ \substack{\vi \\ z_v \in \textit{supp}(c(f))} }\alpha(c(f), z_u, z_v)
        \mathcal{W}^{\vo, \vi}_V(z_v-z_u) f^{\vi}(c^{-1}z_v)  \nonumber \\
        &= \underbrace{W^{\vo} F^{\vo}(c^{-1}z_u)}_{\Circled{A} } + 
        \sum_{ \substack{\vi \\ z_v \in \textit{supp}(f)} } \underbrace{\alpha(c(f), z_u, c(z_v))}_{\Circled{B}}
        \underbrace{\mathcal{W}^{\vo, \vi}_V(c(z_v)-z_u) f^{\vi}(z_v)}_{\Circled{C}},  \nonumber
    \end{align}
    and expand $c(L(f))$ as
    \begin{align}
        \Bigl(c \bigl( L(f)\bigr)\Bigr)^\vo(z_u) 
        &=c^p \Bigl(\bigl( L(f)\bigr)\Bigr)^\vo(c^{-1}z_u) \nonumber \\
        &=c^p \Bigl( \underbrace{ W^{\vo} F^{\vo}(c^{-1}z_u)}_{\Circled{A'} } + 
        \sum_{ \substack{\vi \\ z_v \in \textit{supp}(f)} } \underbrace{\alpha(f, c^{-1}z_u, z_v)}_{\Circled{B'}}
        \underbrace{\mathcal{W}^{\vo, \vi}_V(z_v-c^{-1}z_u) f^{\vi}(z_v) }_{\Circled{C'}} \Bigr).  \nonumber
    \end{align}

We first point out that $\Circled{B}=\Circled{B'}$,
because
\begin{align}
    \mathbf{K}(c(f), z_u, c(z_v)) 
    = \bigoplus_{\vo} \sum_{\vi} \mathcal{W}_K^{\vo, \vi}(c(z_v) - z_u) \bigl(c(f) \bigr)^\vi(z_v) 
    &= \bigoplus_{\vo} \sum_{\vi} \mathcal{W}_K^{\vo, \vi}(z_v - c^{-1}(z_u)) f^\vi(c^{-1}z_v) \nonumber \\
    &= \mathbf{K}(f, c^{-1}z_u, z_v), \nonumber
\end{align}
where the second equation holds because $\varphi_K$ is a degree-$0$ function by assumption,
and Lemma.~\ref{scale-kernel-appx} claims that the corresponding kernel $\mathcal{W}_K$ is scale-invariant.
In addition,
\begin{equation}
    \mathbf{Q}(c(f), z_u, c(z_v)) = \bigoplus_{\vo} W^{\vo}_Q (c(F))^{\vo}(z_u) = \bigoplus_{\vo} W^{\vo}_Q F^{\vo}(c^{-1}z_u) = \mathbf{Q}(f, c^{-1}z_u, z_v). \nonumber
\end{equation}
In other words,
both $\mathbf{Q}$ and $\mathbf{K}$ are scale-invariant.
Thus,
we conclude that the attention is also scale-invariant: 
\begin{align}
    \alpha(c(f), z_u, c(z_v)) &= \langle \mathbf{Q}(c(f), z_u, c(z_v)),  \mathbf{K}(c(f), z_u, c(z_v)) \rangle \nonumber \\ 
    &= \langle \mathbf{K}(f, c^{-1}z_u, z_v),  \mathbf{Q}(f, c^{-1}z_u, z_v) \rangle \nonumber \\ 
    &=\alpha(f, c^{-1}z_u, z_v). \nonumber
\end{align}

Now we discuss these two situations:

1) If $\varphi_V$ is a degree-$1$ function and $W=0$, 
then 
\begin{equation}
c \mathcal{W}^{\vo, \vi}_V(z_v-c^{-1}z_u) = \mathcal{W}^{\vo, \vi}_V(c(z_v)-z_u) \nonumber
\end{equation} 
according to Lemma.~\ref{scale-kernel-appx},
and $\Circled{A}=\Circled{A'}=0$.
In other words,
$L(c(f))(z) = c^1 L(f)(c^{-1}z)$,
where the RHS is the action of $c$ on the degree-$1$ $\mathbb{R}_+$-equivariant field,
thus we have $L(c(f)) = c(L(f))$,
and $L(f)$ is degree-$1$ $\mathbb{R}_+$-equivariant.

2) If $\varphi_V$ is a degree-$0$ function, 
then 
\begin{equation}
\mathcal{W}^{\vo, \vi}_V(z_v-c^{-1}z_u) = \mathcal{W}^{\vo, \vi}_V(c(z_v)-z_u) \nonumber
\end{equation} 
according to Lemma.~\ref{scale-kernel-appx}.
Thus, 
$L(c(f))(z) = L(f)(c^{-1}z)$,
where the RHS is the action of $c$ on the degree-$0$ $\mathbb{R}_+$-equivariant field,
thus we have $L(c(f)) = c(L(f))$,
and $L(f)$ is degree-$0$ $\mathbb{R}_+$-equivariant.

In summary,
we have shown that the equivariance of the transformer layer.
We omit the proof of the equivariance of the Elu layer as it can be proved similarly.
\end{proof}

We can now construct a special structure of BITR for producing degree-$1$ $\mathbb{R}_+$-equivariant output:
\begin{equation}
    \textit{deg-}0 \; \rightarrow \; \textit{deg-}0  \; \rightarrow \; \cdots \rightarrow \; \textit{deg-}0 \; \rightarrow \; \textit{deg-}1
\end{equation}
In other words,
all tensor fields are degree-$0$ $\mathbb{R}_+$-equivariant, except for the final output which is degree-$1$ $\mathbb{R}_+$-equivariant.
More specifically,
we consider two types of layers,
\ie, $\textit{deg-}0 \rightarrow \textit{deg-}0$ and $\textit{deg-}0 \rightarrow \textit{deg-}1$.

Prop.~\ref{scale-weights} suggests that 
a transformer layer with degree-$p$ radial function can generate a degree-$p$ $\mathbb{R}_+$-equivariant field when the input field is degree-$0$ $\mathbb{R}_+$-equivariant
(Elu layers do not influence the $\mathbb{R}_+$-equivariance).
Therefore,
we only need to consider degree-$0$ and degree-$1$ radial functions for our purpose.
In practice,
we represent degree-$1$ functions using neural networks consisting of linear and relu~\cite{agarap2018deep} layers,
and we represent degree-$0$ functions using linear, relu and layer normalization~\cite{ba2016layer} layers.
Specifically,
    We represent degree-$0$ functions as neural network $\phi_0$:
    \begin{equation}
        \phi_0: \quad \textit{Linear} \rightarrow \textit{Relu} \rightarrow \textit{LayerNorm} \rightarrow \textit{Linear} \rightarrow \textit{Relu} \rightarrow \textit{LayerNorm},
    \end{equation}
    and represent degree $1$-functions neural network $\phi_1$:
    \begin{equation}
        \phi_1: \quad  \textit{Linear} \rightarrow \textit{Relu} \rightarrow \textit{Linear} \rightarrow \textit{Relu}.
    \end{equation}

\begin{remark}
    There are two major difficulties in developing a general scale-equivariance theory for BITR.
    First,
    the attention term is not scale-invariant due to the existence of the soft-max operation,
    \ie, $\textit{SoftMax}(A) \neq \textit{SoftMax}(cA)$ for vector $A$ and $c \in \mathbb{R}_+$.
    That is the reason why we only accept degree-$0$ $\mathbb{R}_+$-equivariant input in our current theory.
    Second,
    when $p \notin \{ 0, 1\}$,
    we are not aware of any general way to represent degree-$p$ functions using neural networks.
\end{remark}

\section{More experiment results}
\label{sec-appx-experiment}

\subsection{More training details}
\label{sec-appx-more-details}
We run all experiments using a Nvidia T4 GPU card with $16$G memory.
The batch size is set to the largest possible value that can be fitted into the GPU memory.
We set $bs=16$ for the airplane dataset,
and $bs=4$ for the wine bottle dataset.
We train BITR until the validation loss does not decrease.
For the airplane dataset,
we train BITR $10000$ epochs, and the training time is about $8$ days when $s=0.7$.
For the wine bottle dataset,
we train BITR $1000$ epochs, and the training time is about $12$ hours.
The FLOPS is $14.5G$ in a forward pass (including the computation of harmonic functions), and the model contains $0.17M$ parameters.

For Sec.~\ref{Sec-PC-regist} and~\ref{Sec-PC-regist-inter},
we use the normal vector computed by Open3D~\citep{Zhou2018} as the input feature of BITR.
The airplane dataset used in Sec.~\ref{Sec-PC-regist} contains $715$ random training samples and $103$ random test samples.
The wine bottle dataset used in Sec.~\ref{Sec-Frag-Re} contains $331$ training and $41$ test samples.
We adopt the few-shot learning setting in the manipulation tasks in Sec.~\ref{Sec-vis-mani}:
we use $30$ training and $5$ test samples for mug-hanging;
we use $40$ training samples and $10$ test samples for bowl-placing.

\subsection{More results of Sec.~\ref{Sec-toy}}
\label{Sec-app-toy}
We quantitatively verify the equivariance of BITR according to Def.~\ref{equivariance_def}.
Specifically,
we compute 
\begin{align}
    \Delta_{bi} &= \| \Phi_B(g_1 X, g_2Y) - g_{2} \Phi_B(X, Y) g_1^{-1} \|_F, \\
    \Delta_{swap} &= \| \Phi_B(Y, X) - (\Phi_B(X, Y))^{-1} \|_F, \\
    \Delta_{scale}& = \| r_B(cX, cY) - r_B(X, Y) \|_F + \| t_B(cX, cY) - c t_B(X, Y) \|_2,
\end{align}
to verify the $SE(3)$-bi-equivariance, swap-equivariance and scale-equivariance of BITR,
where $\Phi_B$ represent the BITR model,
$(r_B(\cdot), t_B(\cdot))= \Phi_B(\cdot) $ are the output of BITR,
and all $g=(r,t) \in SE(3)$ are written as
\begin{equation}
    g = \begin{bmatrix}
        r & t\\
        0 & 1
        \end{bmatrix}
        \in \mathbb{R}^{4,4}.
\end{equation}
Note that if BITR is perfectly equivariant,
these three errors should always be $0$.

The quantitative results of the experiment are summarized in Tab.~\ref{app-tab-toy},
where we can see that all errors are below the numerical precision of float numbers, \ie, less than $1e^{-5}$.
The results suggest that BITR is indeed $SE(3)$-bi-equivariant, swap-equivariant and scale-equivariant.
\begin{table}[ht!]
	\begin{center}
	  \caption{Verification of the equivariance of BITR.}
    \label{app-tab-toy}
      \begin{tabular}{c c c} 
		  \hline
         $\Delta_{bi}$ & $\Delta_{swap}$ & $\Delta_{scale}$  \\
		\hline
	  \centering
        $5e^{-6}$  & $2e^{-7}$ & $5e^{-7}$ \\
    \hline
	  \end{tabular}
	\end{center}
\end{table}

\subsection{Ablation study}
\label{appx_ablation}
To show the practical effectiveness of our theory on scale and swap equivariances,
we consider an ablation study.
We use the same data as in Sec.~\ref{Sec-toy},
and remove the weight sharing technique in Sec.~\ref{Sec_swap} to break swap-equivariance,
and force $\varphi_V$ in all layers to be a degree-$1$ function to break the scale-equivariance.

We evaluate the trained model on $100$ test samples,
and report the mean and standard deviation of $\Delta r$ in Tab.~\ref{tab_ablation},
where we can see that removing an equivariance of the model leads to the failure in the corresponding test case,
which is consistent with our theory.
\begin{table}[ht!]
	\begin{center}
	  \caption{Ablation study of scale and swap equivariances. We report mean and std of  $\Delta r$}
    \label{tab_ablation}
      \begin{tabular}{c c c c c} 
		  \hline
           & original & rigidly perturbed & swapped & scaled  \\
		\hline
	  \centering
    BITR (full model)  & (17.1, 6.0) & (17.1, 6.0) & (17.1, 6.0) & (17.1, 6.0) \\
    BITR (w/o swap)    & (15.3, 7.0) & (15.3, 7.0) & (70.3, 20.0) &  (15.3, 7.0) \\
    BITR (w/o scale)    & (15.0, 8.0) & (15.0, 8.0) & (15.0, 8.0) & (125.4, 15.0) \\
    \hline
	  \end{tabular}
	\end{center}
	\vspace{-4mm}
\end{table}

\subsection{Evaluation of the complete-matching property}
\label{complete-matching-exp-app}

This experiment numerically evaluates the robustness of the complete-matching property (Prop.~\ref{Complete-match-prop}) against resampling, noise and partial visibility.
We first sample $X$ and $Y$ of size $1024$ from the bunny shape,
and a random $g\in SE(3)$,
then we use a random initialized U-BITR to match $X$ to $gY$.
We consider different settings:
1) $X$ and $Y$ are exactly the same;
2) $X$ and $Y$ are different random samples;
3) Gaussian noise of std $0.01$ is added to $X$ and $Y$;
4) Ratio $s$ of $X$ and $Y$ is kept by cropping using a random plane.

We repeat the experiment $3$ times,
and report the results of U-BITR in Tab.~\ref{tab_complete_matching}.
We observe that the transformation is perfectly recovered when $X=Y$,
which is consistent with the complete-matching property.
Meanwhile,
cropping the PCs leads to large decrease of the accuracy,
while noise and resampling have less effect.
This is consistent with our expectation because cropping the PCs has larger effect on the shape of PCs.
We provide qualitative results in Fig.~\ref{test-examples-complete-matching}.

\begin{table}[ht!]
	\begin{center}
	  \caption{Results of complete matching using U-BITR.}
    \label{tab_complete_matching}
      \begin{tabular}{c c c} 
		  \hline
           &  $\Delta r$  & $\Delta t$ \\
		\hline
	  \centering
      $X=Y$ &   0.0 (0.0) & 0.0 (0.0)  \\
      Resampled & 20.7 (16.4) & 0.18 (0.16) \\
      Noisy  & 5.0 (2.8) & 0.02 (0.008) \\
      Cropping $s=0.9$ &  99.1 (40.49) & 0.58 (0.4)      \\
    \hline
	  \end{tabular}
	\end{center}
\end{table}

\begin{figure}[ht!]
    \centering
    \vspace{-3mm}
    \subfigure[Complete-matching ($\Delta r = 0$)]{
        \begin{minipage}[b]{0.25\linewidth}
            \centering
            \includegraphics[width=1.0\linewidth]{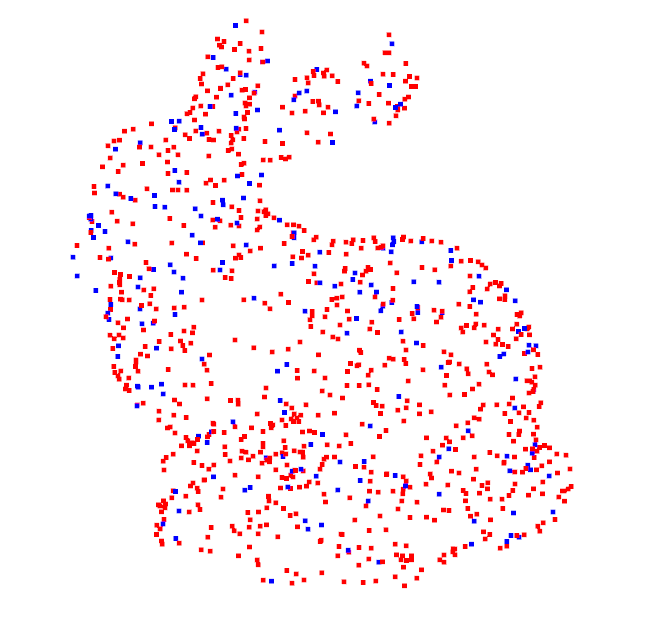}
        \end{minipage}
    }
    \subfigure[Noisy data ($\Delta r = 2.8$)]{
        \begin{minipage}[b]{0.25\linewidth}
            \centering
            \includegraphics[width=1.0\linewidth]{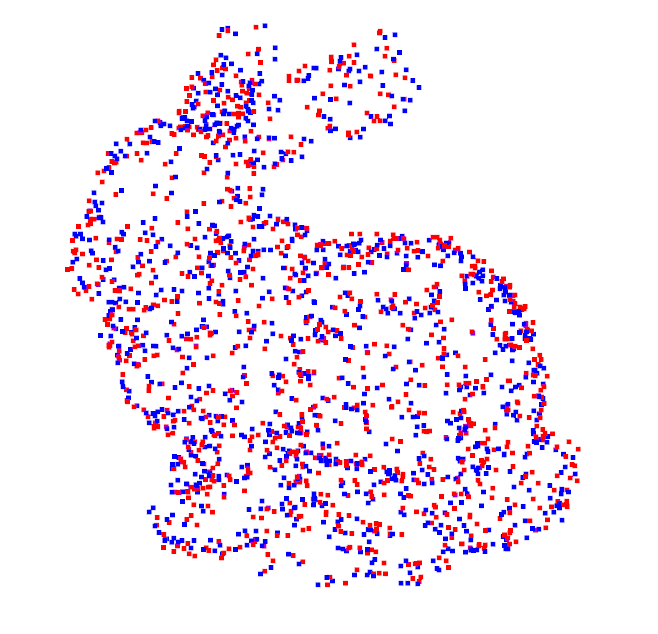}
        \end{minipage}
    }
    \subfigure[Resampled data $\quad$ ($\Delta r = 20.1$)]{
        \begin{minipage}[b]{0.25\linewidth}
            \centering
            \includegraphics[width=1.0\linewidth]{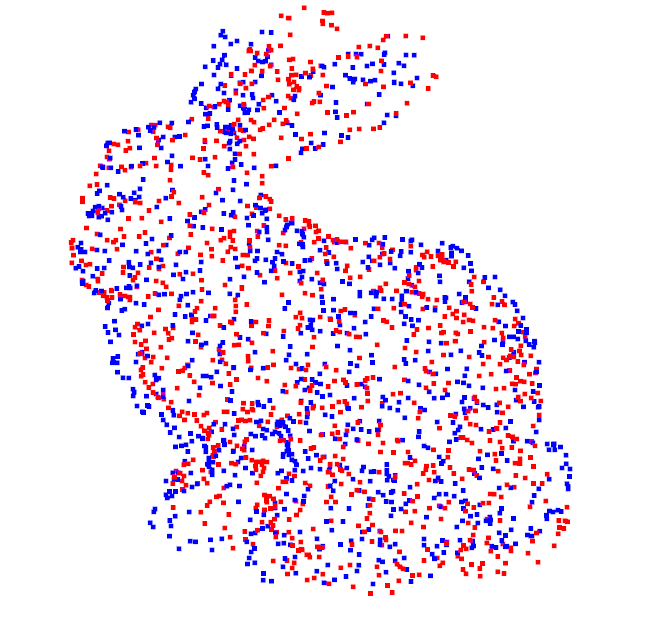}
        \end{minipage}
    }
  \caption{Qualitative results of U-BITR on complete matching. 
  The blue and red PCs are the reference and the transformed source PCs respectively.
  Note that U-BITR is only random initialized (not trained). }
  \label{test-examples-complete-matching}
  \end{figure}

\subsection{More results of Sec.~\ref{Sec-PC-regist}}
\label{Sec-PC-regist-app}
We report the training process of BITR in Fig.~\ref{fig-registration-process},
where we can see that the loss value, $\Delta r$ and $\Delta t$ gradually decrease during training as expected.

Some qualitative results of BITR are presented in Fig.~\ref{fig-registration-app}.
We represent the input PCs using light colors,
and represent the $32$ learned key points using dark colors and large points.
As we explained in Sec.~\ref{Sec-PC-merge},
the key points are in the convex hull of the input PCs,
and they are NOT a subset of the input PC.
In addition,
as can be seen,
the key points of the inputs do not overlap.

\begin{figure}[ht!]
  \centering
      \includegraphics[width=0.3\linewidth]{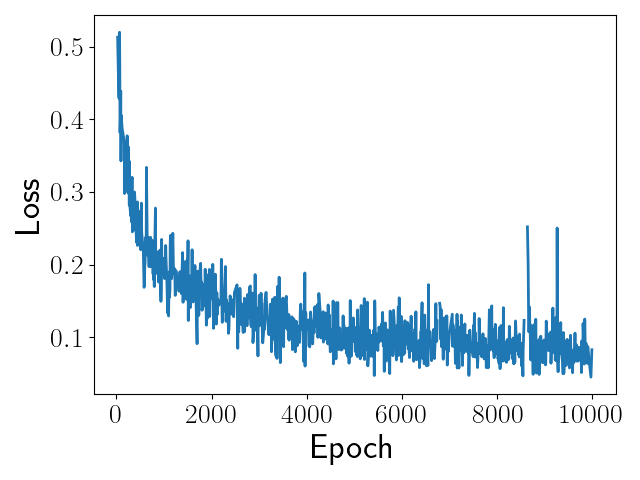}
      \includegraphics[width=0.3\linewidth]{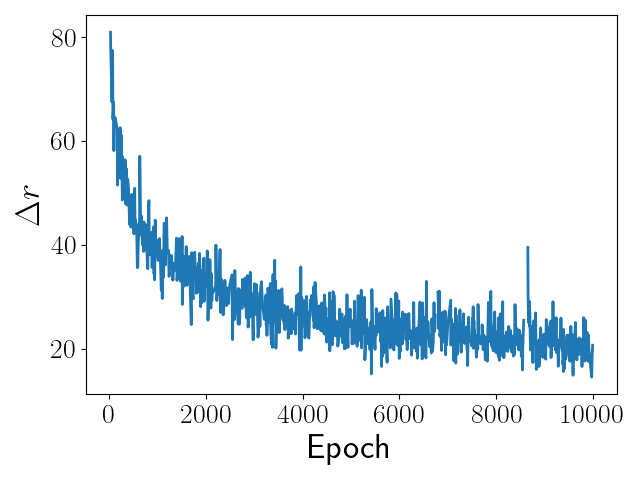}
      \includegraphics[width=0.3\linewidth]{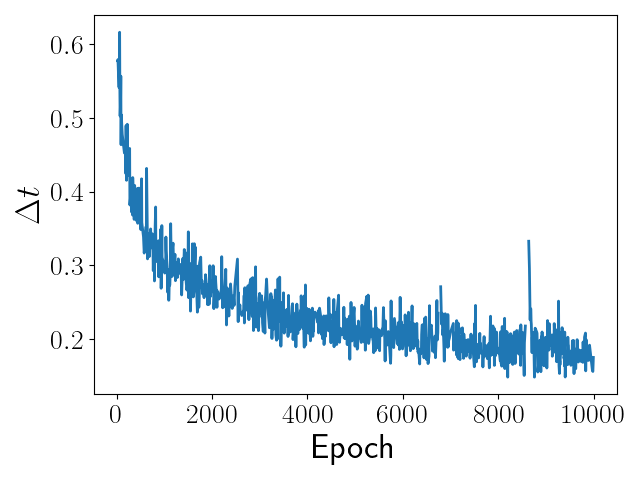}
  \caption{The training process of BITR on the airplane dataset with $s=0.4$. All metrics are measured on the validation set.}
  \label{fig-registration-process}
  \end{figure}

\begin{figure}[ht!]
    \centering
    \subfigure[$s=0.7$]{
        \begin{minipage}[b]{0.225\linewidth}
            \includegraphics[width=0.9\linewidth]{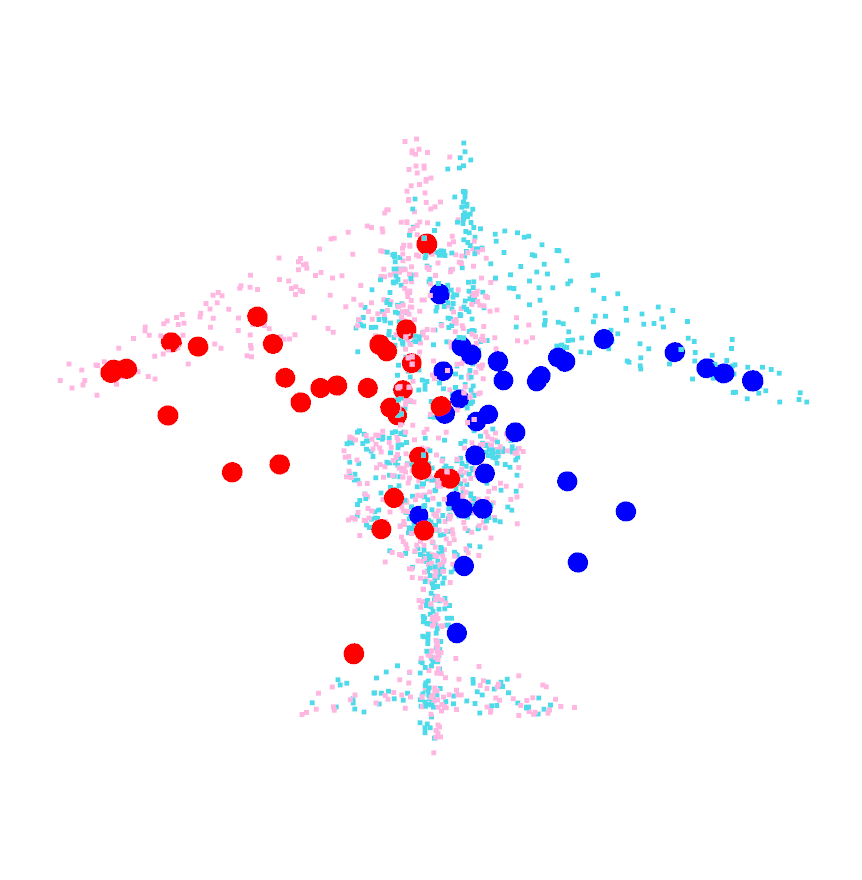}
        \end{minipage}
    }
    \subfigure[$s=0.5$]{
        \begin{minipage}[b]{0.225\linewidth}
            \includegraphics[width=0.9\linewidth]{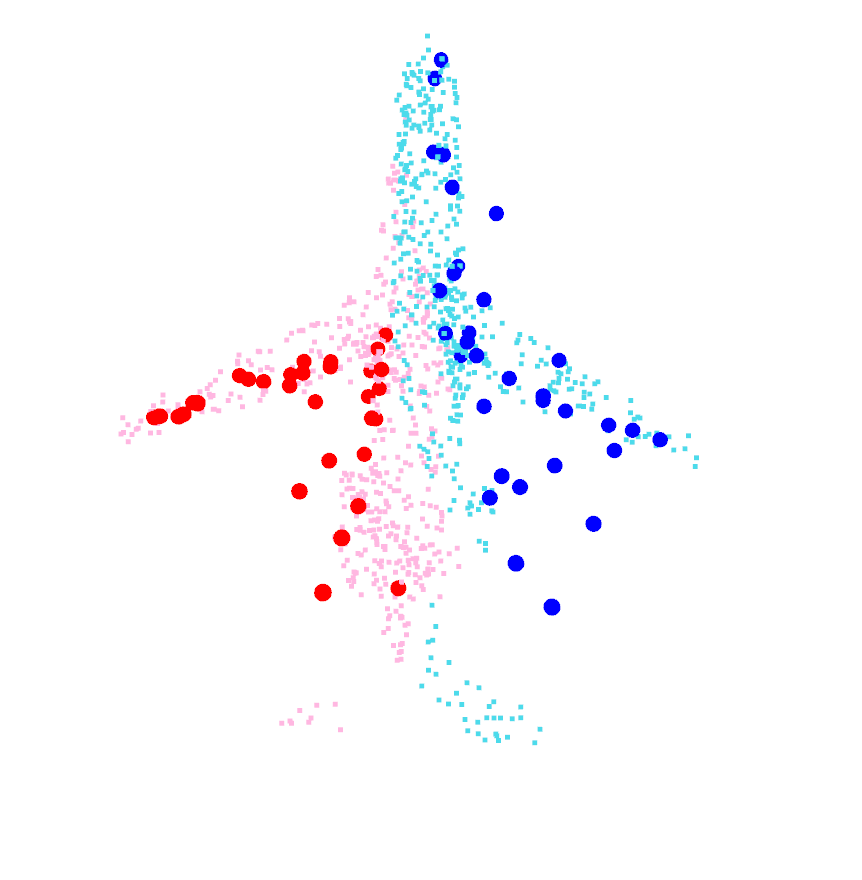}
        \end{minipage}
    }
    \subfigure[$s=0.4$]{
        \begin{minipage}[b]{0.225\linewidth}
            \includegraphics[width=0.9\linewidth]{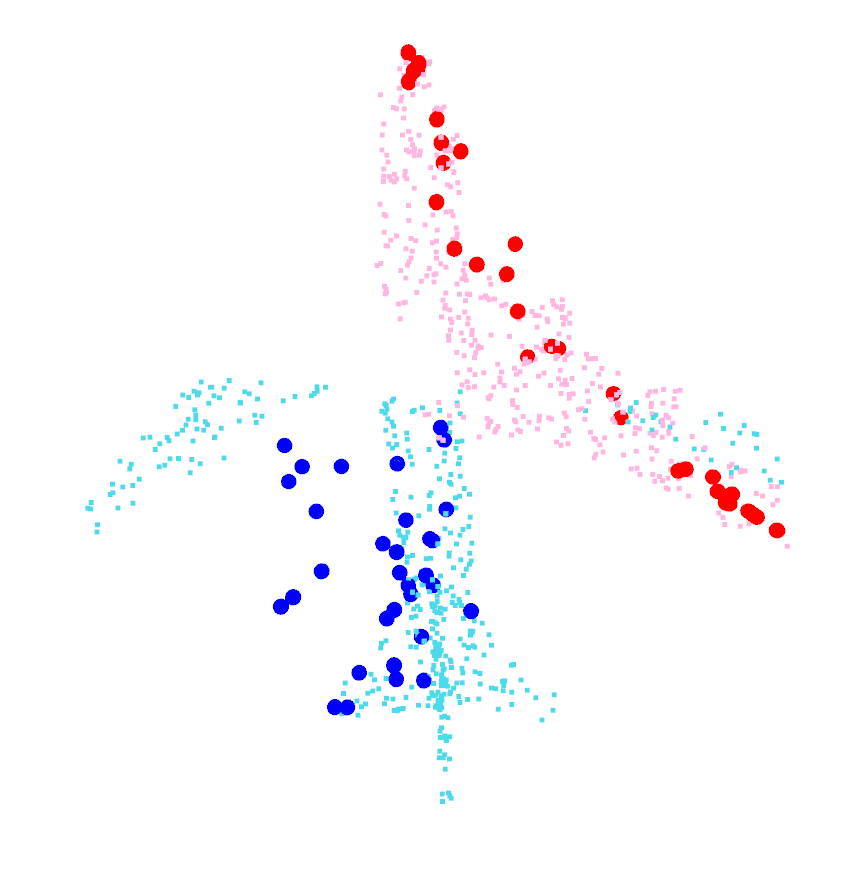}
        \end{minipage}
    }
    \subfigure[$s=0.3$]{
        \begin{minipage}[b]{0.225\linewidth}
            \includegraphics[width=0.9\linewidth]{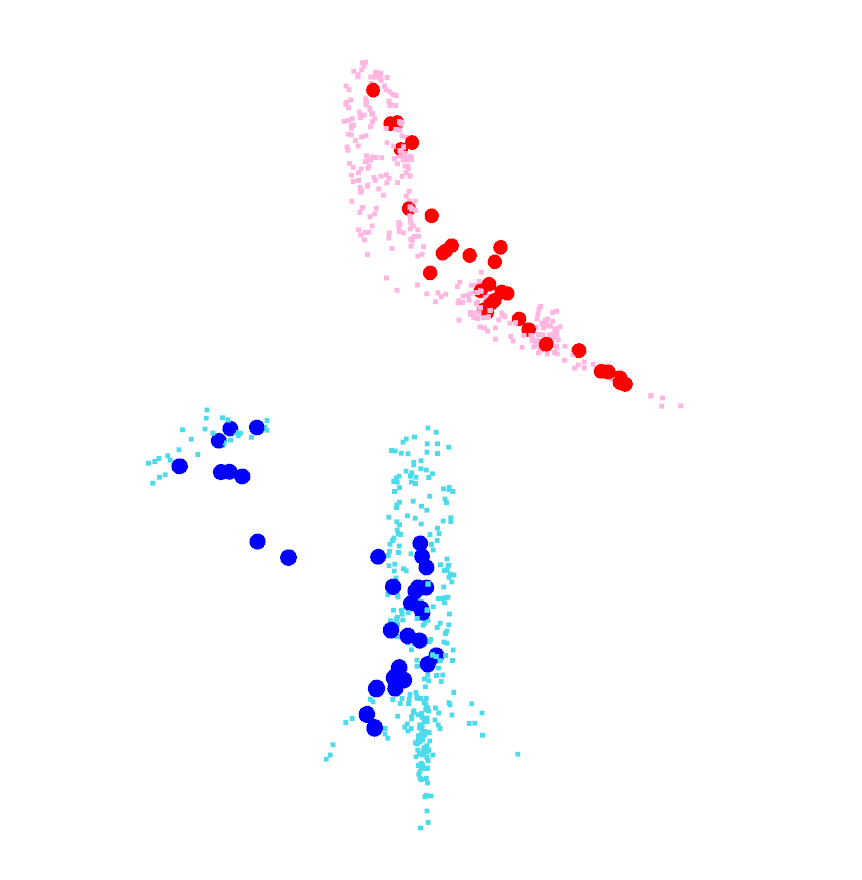}
        \end{minipage}
    }
  \caption{The PC registration results of BITR on the airplane dataset.
  The input PCs are represented using light colors,
    and the learned key points are represented using dark and large points.
  }
  \label{fig-registration-app}
  \end{figure}

\subsection{More results of Sec.~\ref{Sec-PC-regist-inter}}
\label{Sec-PC-regist-inter-app}
We generate the raw training samples by sampling motorbike and car shapes from the training set of ShapeNet.
Then we 
centralize them,
and move the car by $[0,0,1]$.
Note the shapes in ShapeNet are already pre-aligned.
The test samples are generated from the test set of ShapeNet in the same way.

\begin{figure}[ht!]
  \centering
      \includegraphics[width=0.3\linewidth]{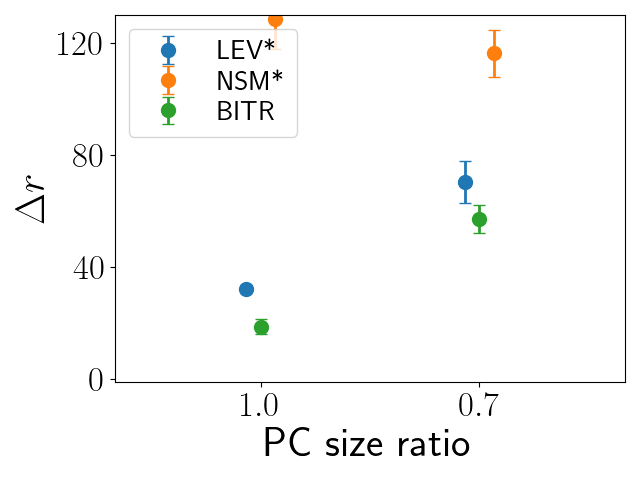}
      \includegraphics[width=0.3\linewidth]{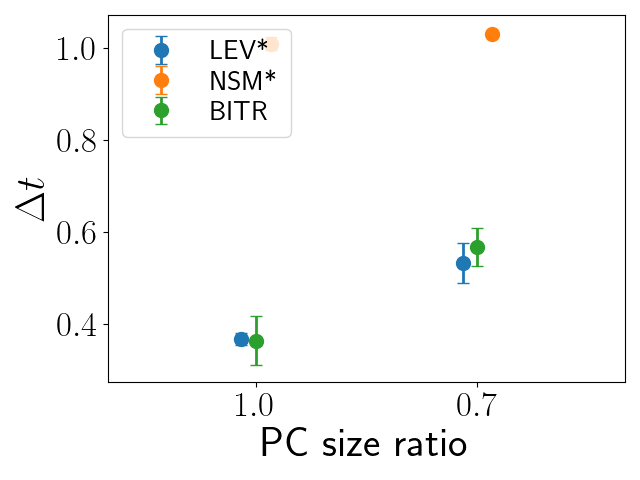}
  \caption{Assembly results of car and motorbike.
  $*$ denotes methods which require the true canonical poses of the input PCs.}
  \label{fig-inter-result-app}
  \end{figure}

We repeat the test process $3$ times, 
and report the results in Fig.~\ref{fig-inter-result-app}.
We observe that BITR achieves lower rotation error than LEV,
and their translation errors are comparable.
Meanwhile, 
NSM fails in this experiment.
Note that we do not report the results of registration methods,
because their loss functions are undefined due to the lack of correspondence.

\subsection{More results of Sec.~\ref{Sec-Frag-Re}}
\label{Sec-Frag-Re-appx}

\begin{figure*}[ht!]
    \centering
    \subfigure[DGL]{
        \begin{minipage}[b]{0.18\linewidth}
            \centering
            \includegraphics[width=0.9\linewidth]{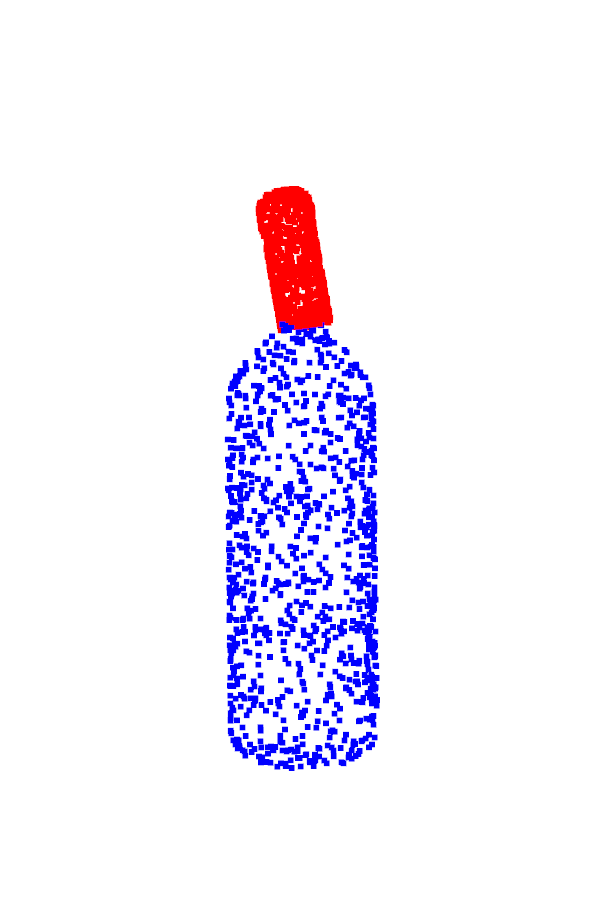}
            \includegraphics[width=0.9\linewidth]{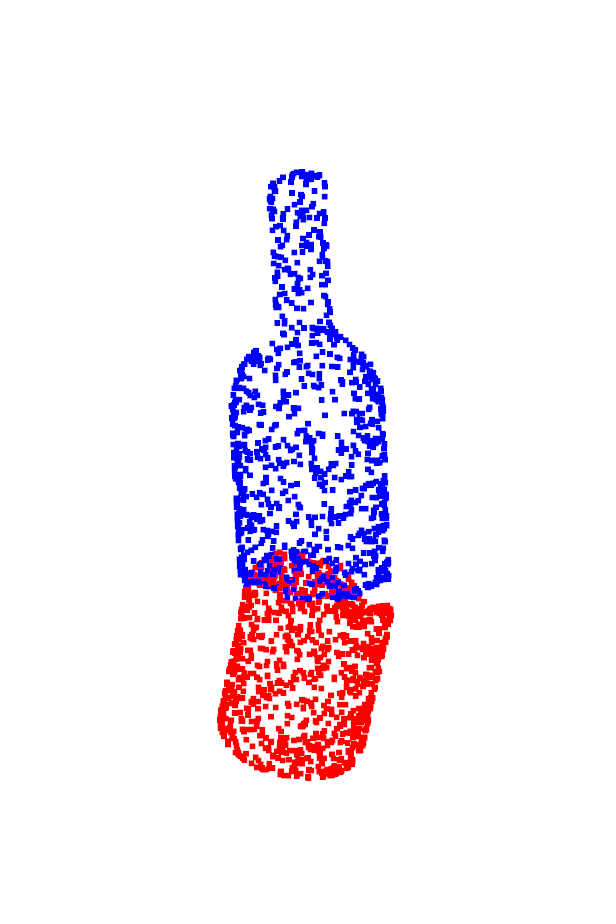}
            \includegraphics[width=0.9\linewidth]{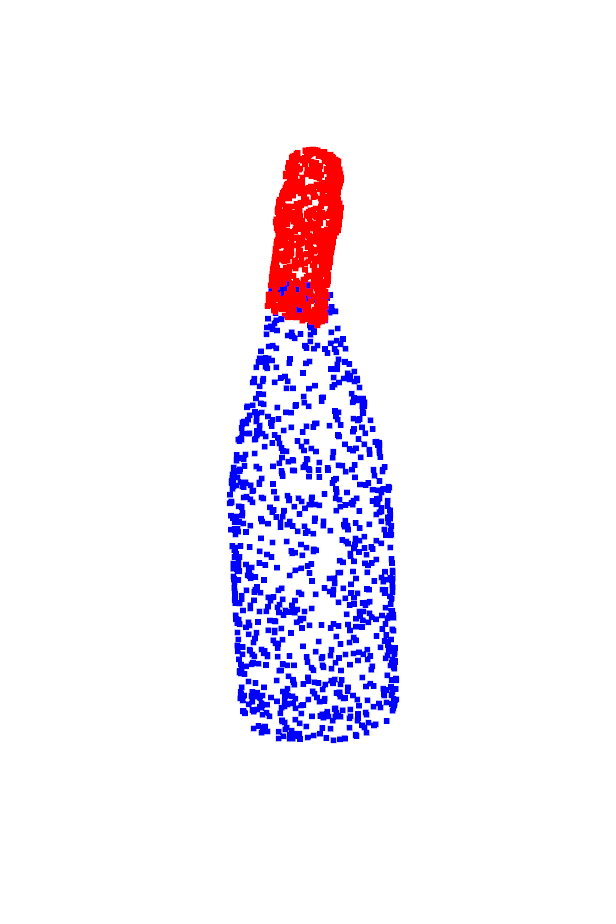}
            \includegraphics[width=0.9\linewidth]{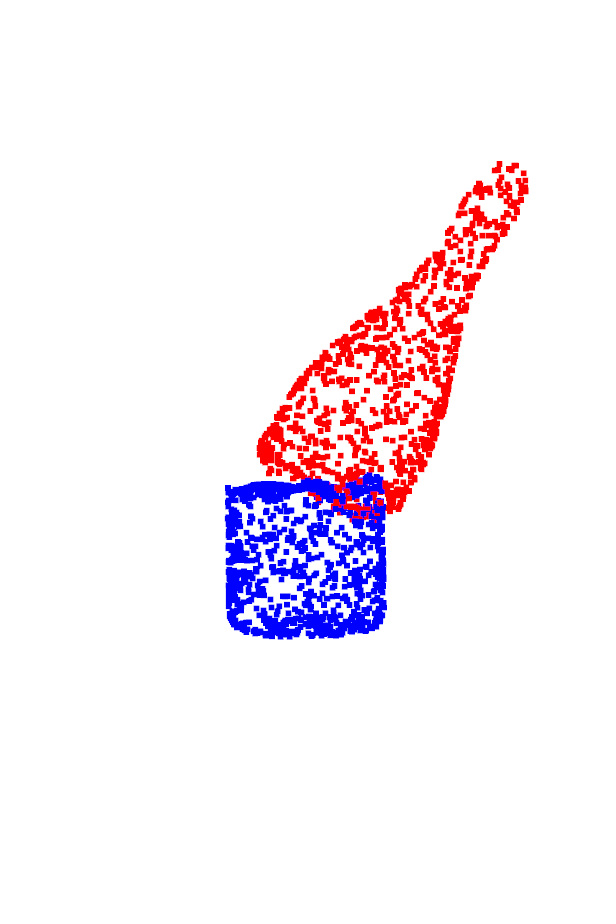}
            \includegraphics[width=0.9\linewidth]{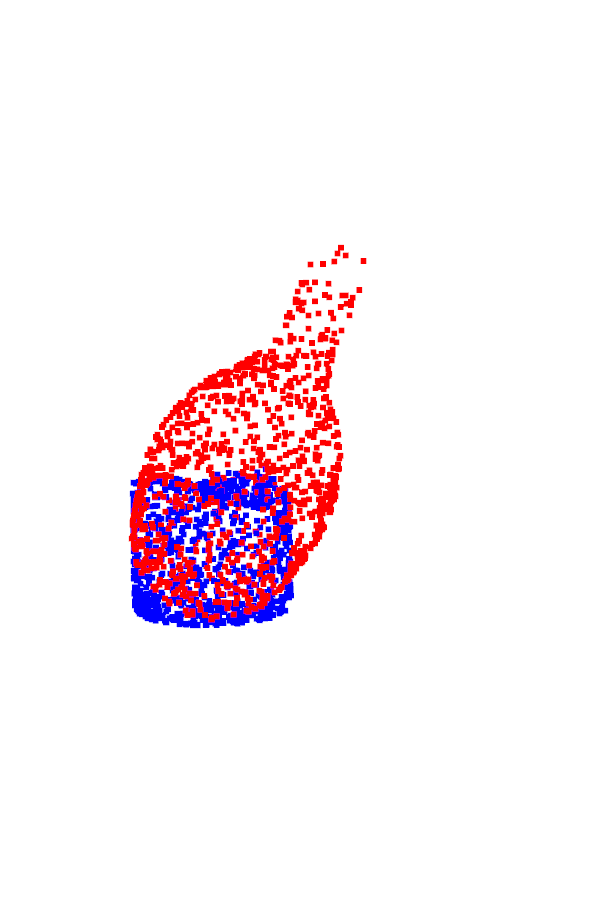}
        \end{minipage}
        \label{fig-bb-dgl}
    }
    \subfigure[NSM]{
        \begin{minipage}[b]{0.18\linewidth}
            \centering
            \includegraphics[width=0.9\linewidth]{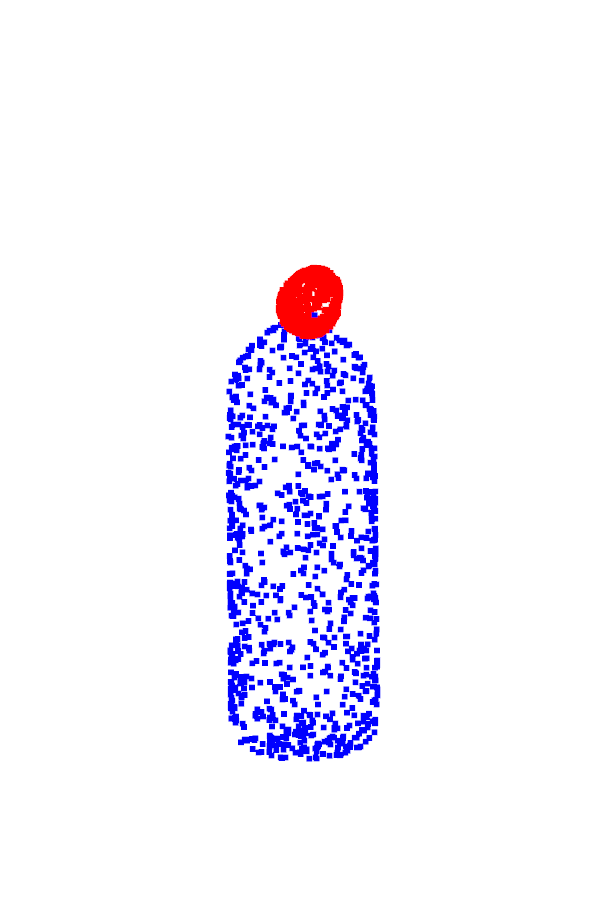}
            \includegraphics[width=0.9\linewidth]{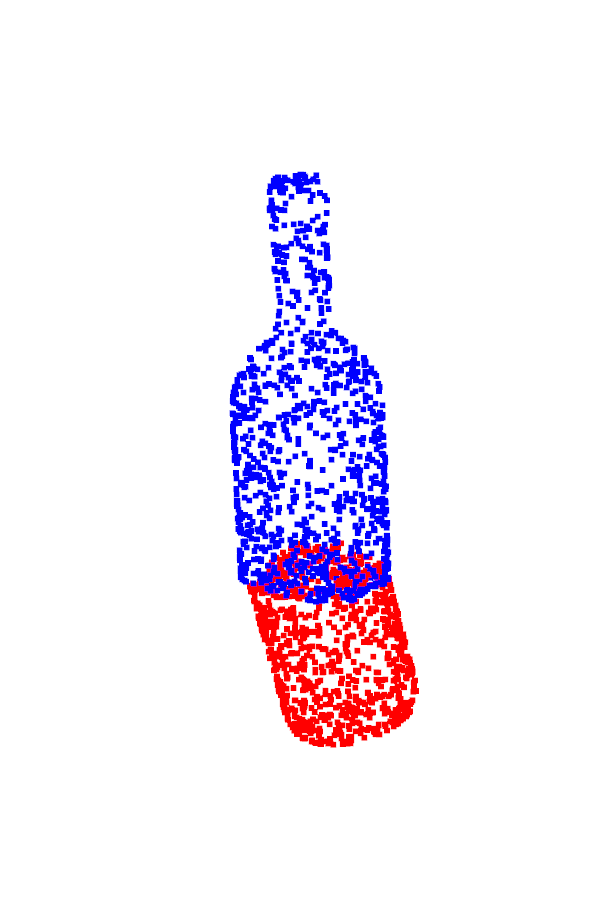}
            \includegraphics[width=0.9\linewidth]{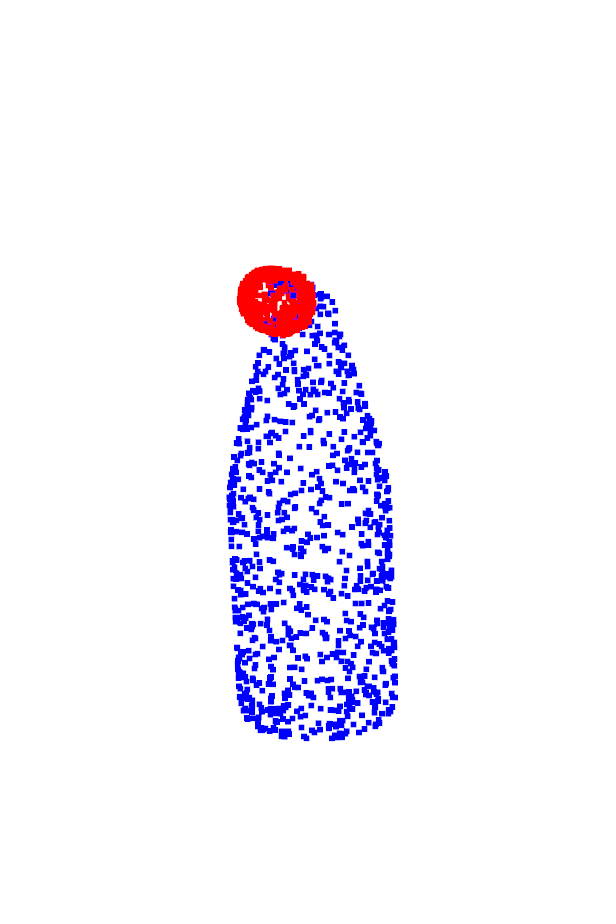}
            \includegraphics[width=0.9\linewidth]{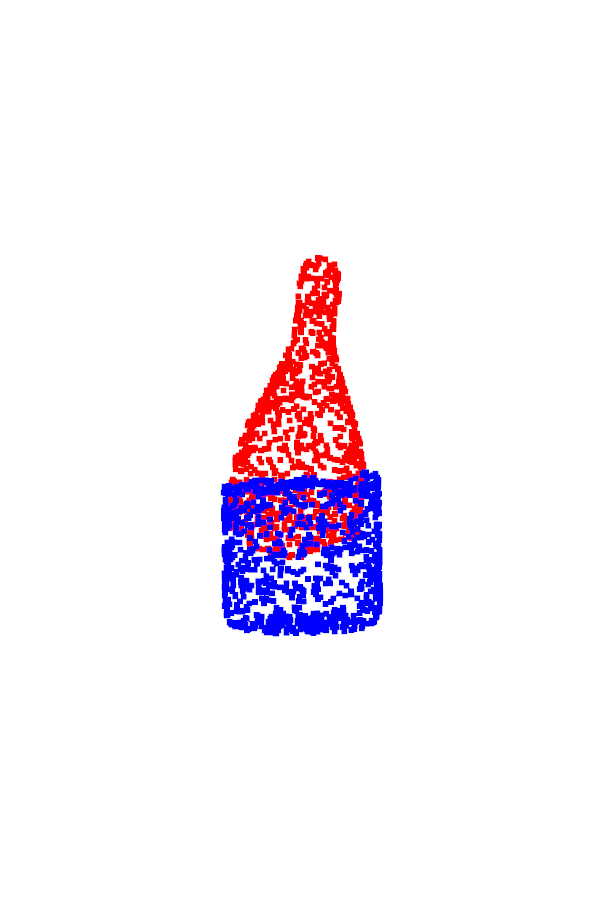}
            \includegraphics[width=0.9\linewidth]{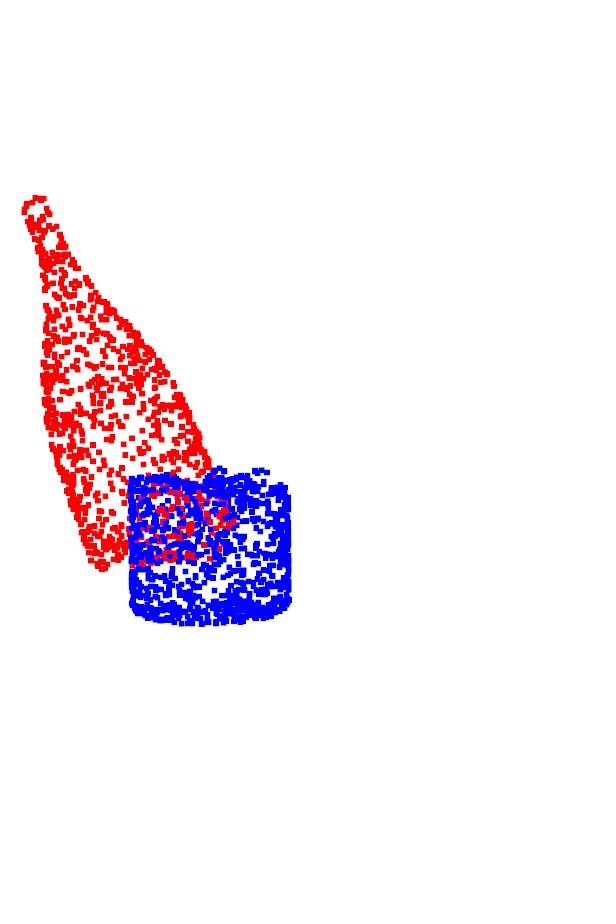}
        \end{minipage}
        \label{fig-bb-nsm}
    }
    \subfigure[LEV]{
        \begin{minipage}[b]{0.18\linewidth}
            \centering
            \includegraphics[width=0.9\linewidth]{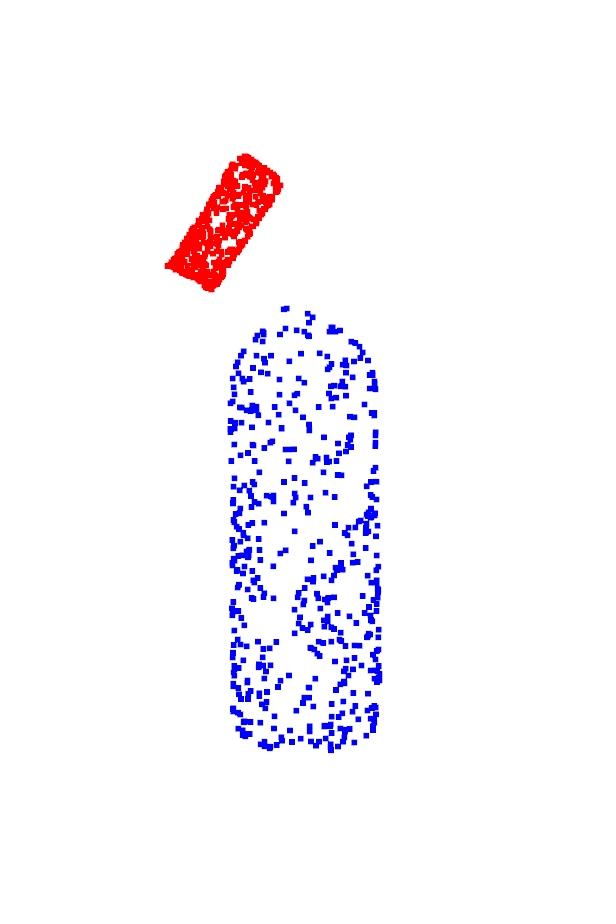}
            \includegraphics[width=0.9\linewidth]{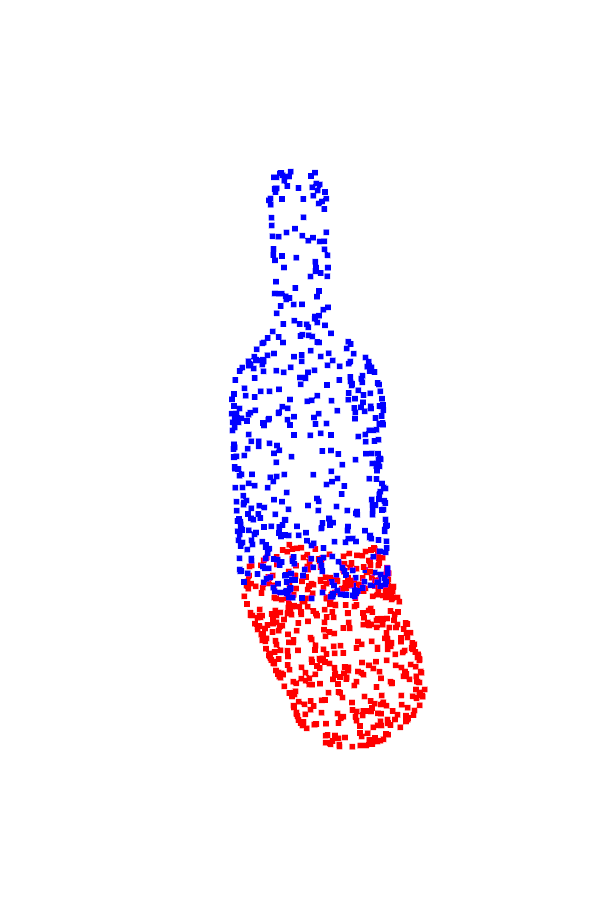}
            \includegraphics[width=0.9\linewidth]{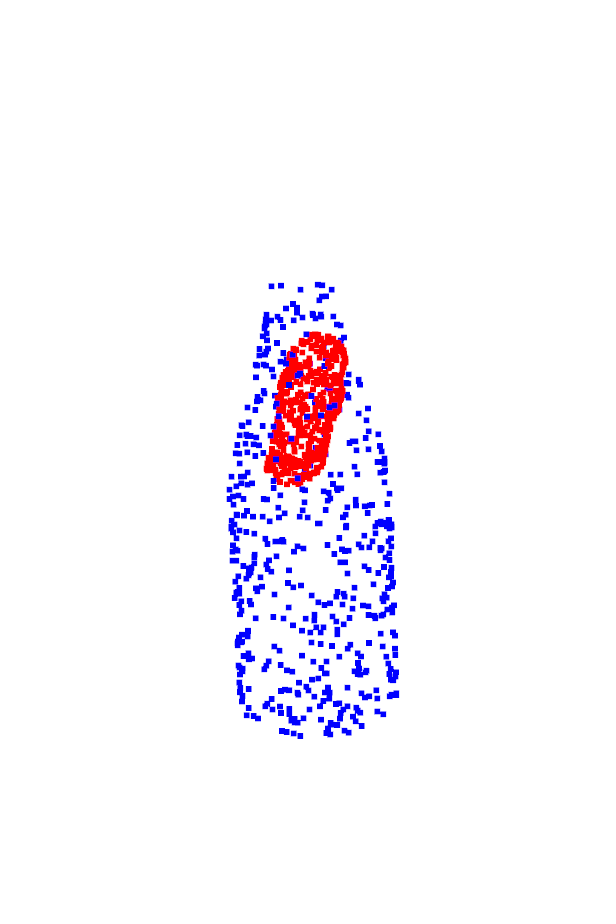}
            \includegraphics[width=0.9\linewidth]{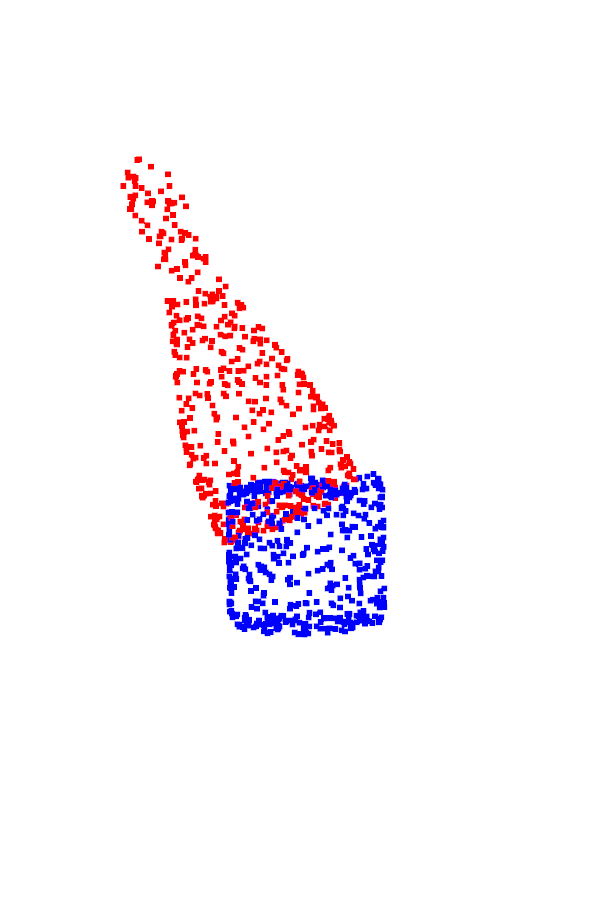}
            \includegraphics[width=0.9\linewidth]{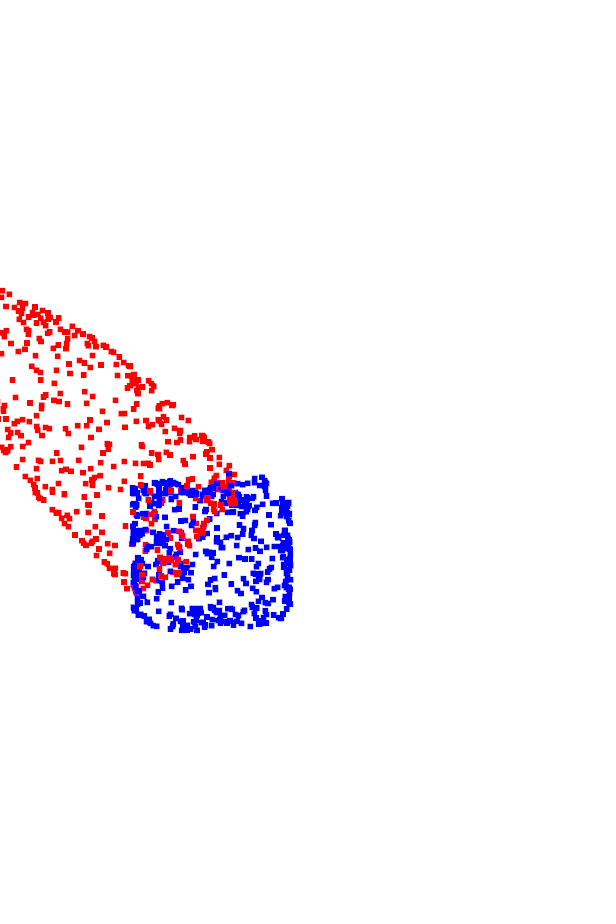}
        \end{minipage}
        \label{fig-bb-lev}
    }
    \subfigure[BITR (Ours)]{
        \begin{minipage}[b]{0.18\linewidth}
            \centering
            \includegraphics[width=0.9\linewidth]{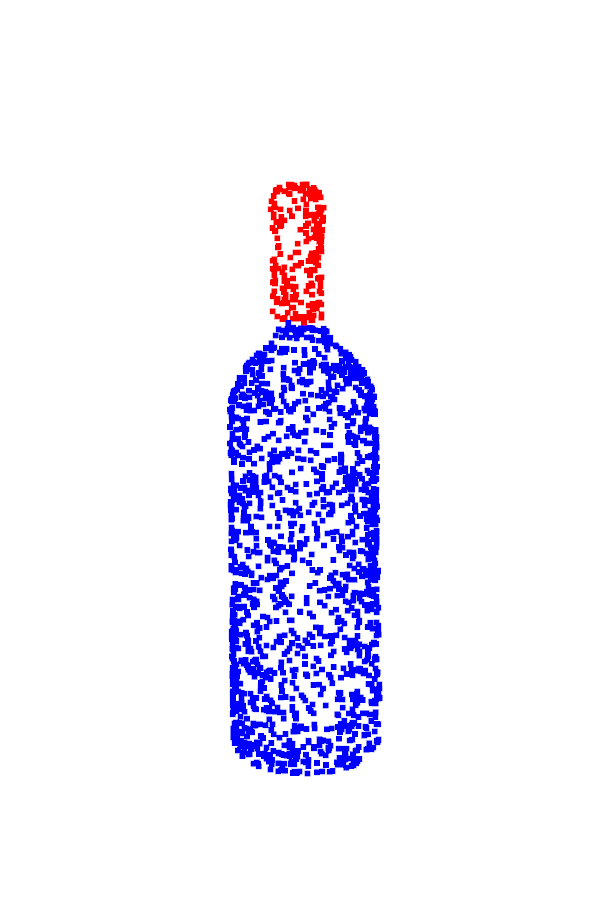}
            \includegraphics[width=0.9\linewidth]{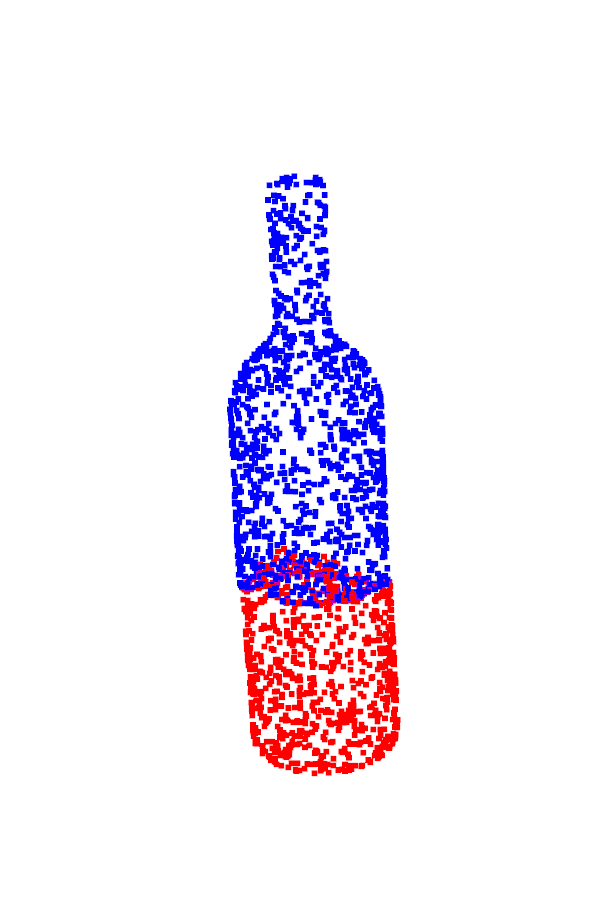}
            \includegraphics[width=0.9\linewidth]{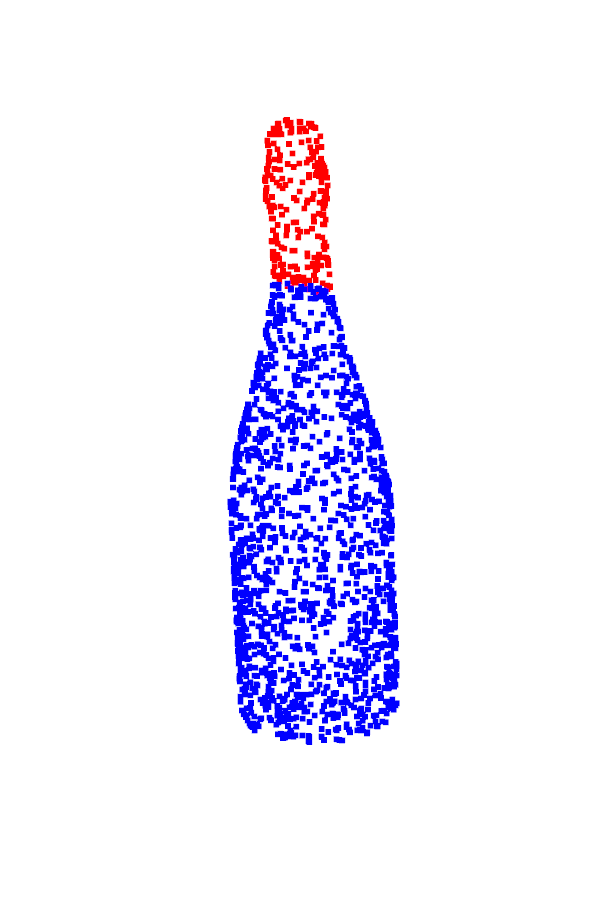}
            \includegraphics[width=0.9\linewidth]{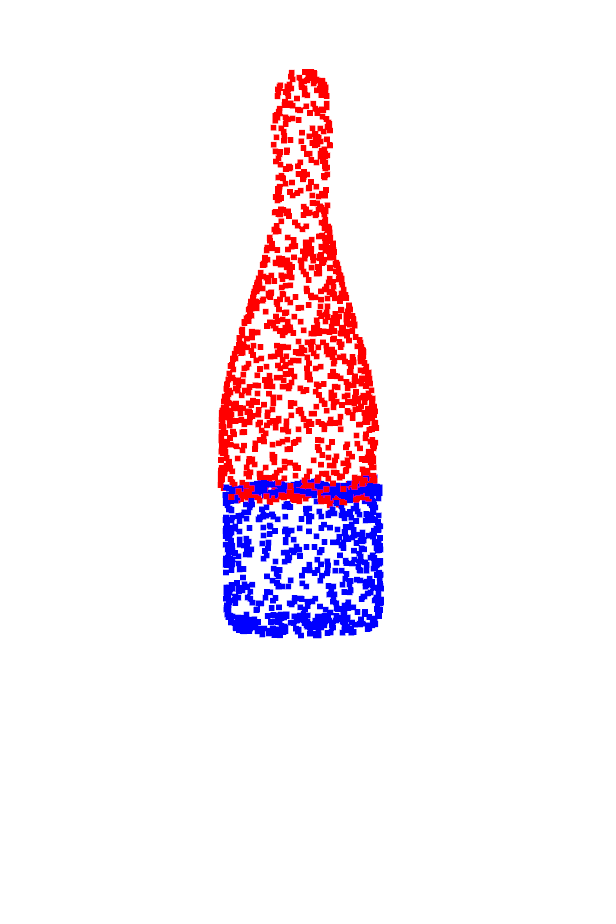}
            \includegraphics[width=0.9\linewidth]{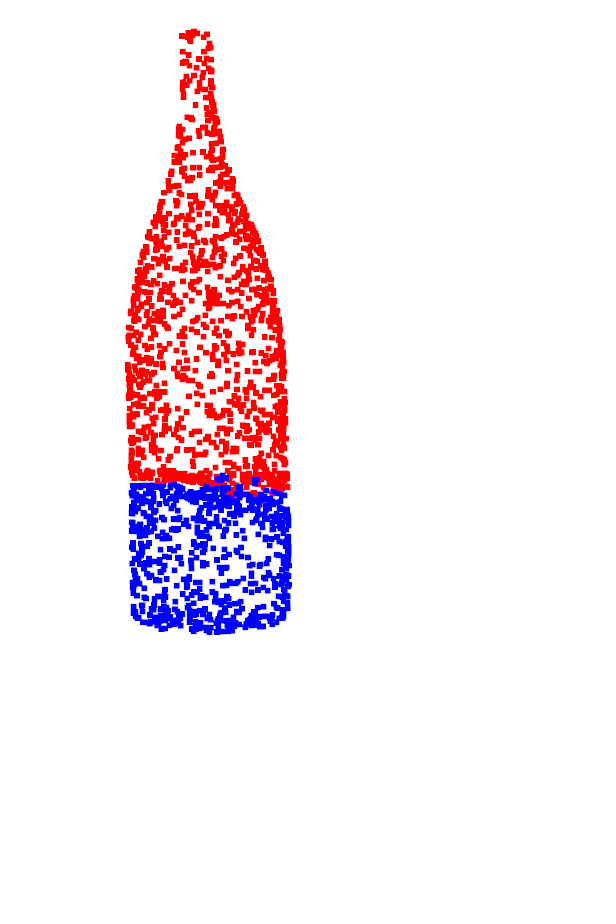}
        \end{minipage}
        \label{fig-bb-bitr}
    }
  \caption{Results of reassembling wine bottle fragments.
  We compare the proposed BITR with DGL~\cite{zhan2020generative},
  NSM~\cite{chen2022neural} and LEV~\cite{wu2023leveraging}.
  Zoom in to see the details.
  }
  \label{fig-bb-appx}
  \end{figure*}

For BITR,
we first obtain raw PCs by applying grid sampling with grid size $0.005$ to the shape,
and then randomly sample $5\%$ points from the raw PCs as the training and test samples.
The sizes of the resulting PCs are around $1000$, 
which is close to the data used in the baseline methods.
The data is pre-processed following~\cite{wu2023leveraging} for the baseline methods.

The random sampling process in our method causes the randomness of test error.
We quantify the randomness by evaluating on the test set $3$ times, 
and report the mean and std of the errors in Tab.~\ref{BB-table}.

Fig.~\ref{fig-bb-appx} presents $5$ examples of reassembling wine bottle fragments,
where we observe that the proposed BITR can reassemble most of the shapes correctly,
while the baseline methods generally have difficulty predicting the correct rotations.

\subsection{More results of Sec.~\ref{Sec-7Scenes}}
\label{Sec-7Scenes-app}

We preprocess the 7Scenes dataset by applying grid sampling with grid size $0.1$.
The training and test sets contain $278$ and $59$ samples.

We consider the $5$ outdoor scenes in the ASL dataset: mountain, winter, summer, wood autumn, wood summer.
We preprocess all data by applying grid sampling with grid size $0.9$.
We arbitrarily rotate and translate all data,
and train BITR to align the $i$-th frame to the $(i+2)$-th frame.
We use the first $20$ frames as the training set,
and the other frames as the test set.
This leads to a training set of size $105$,
and a test set of size $48$

We report the results in Tab.~\ref{tab-ASL}.
Our observation is consistent of that in the 7Scenes dataset:
the result BITR is close to the optimum,
thus a refinement can lead to improved results.
Note that BITR+ICP outperforms all baselines in this task.
In addition,
GEO causes the out-of-memory error on our $16G$ GPU.
An assembly result (wood summer) is presented in Fig.~\ref{Assembly-examples-7scenes}.

\begin{table}[ht!]
	\begin{center}
	  \caption{Results on the outdoor scenes of ASL. We report mean and std of  $\Delta r$ and $\Delta t$.}
    \label{tab-ASL}
      \begin{tabular}{c c c c c} 
		  \hline
           & $\Delta r$ & $\Delta t$   \\
		\hline
	  \centering
        ICP & 73.2 (8.0) & 9.4 (1.0) \\
        OMN & 110.0 (5.0) & 2.0 (1.0) \\
        GEO & $-$ &$-$ \\
        ROI & 16.7 (0.0) & 1.5 (0.0) \\
        BITR (Ours) & 10.6 (0.0)  & 1.4 (0.0)        \\
        BITR+ICP (Ours) & 0.7 (0.0) & 0.8 (0.0) \\
      \hline
	  \end{tabular}
	\end{center}

\end{table}

\begin{figure}[ht!]
    \centering
    \subfigure[Input]{
        \begin{minipage}[b]{0.3\linewidth}
            \centering
            \includegraphics[width=0.9\linewidth]{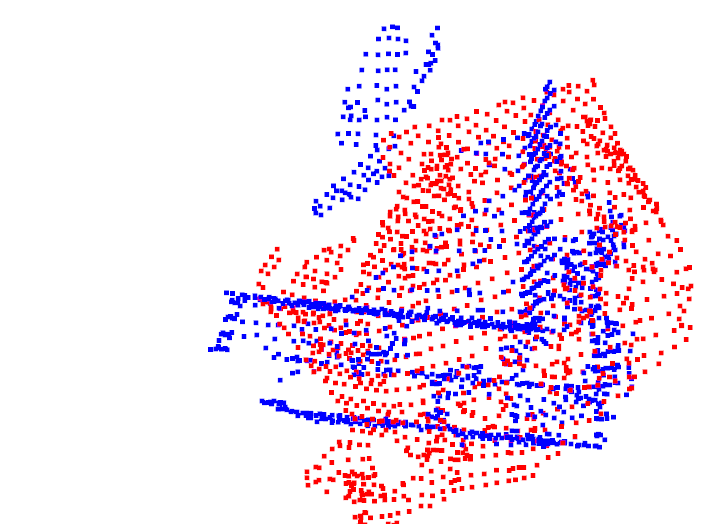}
            \includegraphics[width=0.9\linewidth]{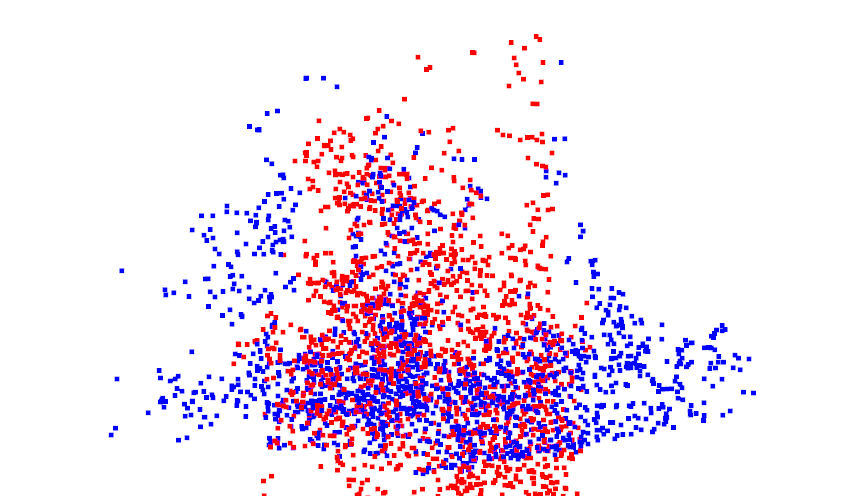}
        \end{minipage}
    }
    \subfigure[BITR]{
        \begin{minipage}[b]{0.3\linewidth}
            \centering
            \includegraphics[width=0.9\linewidth]{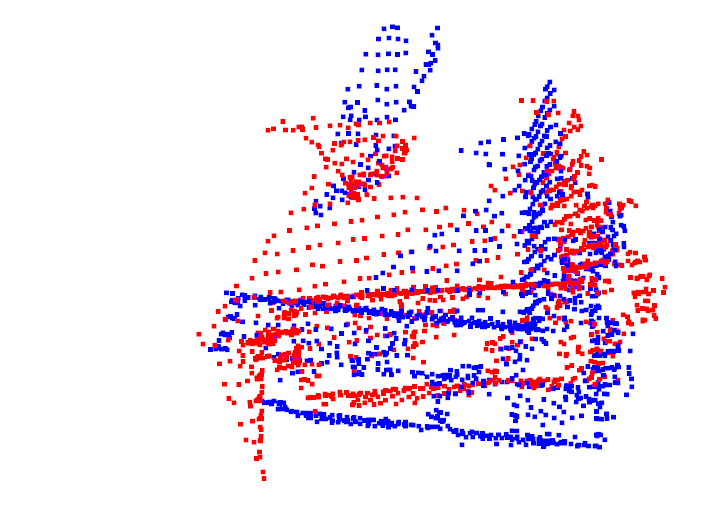}
            \includegraphics[width=0.9\linewidth]{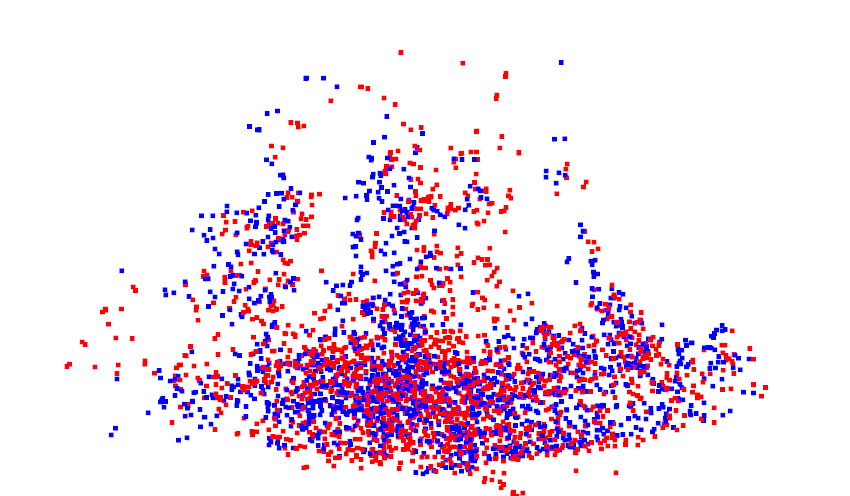}
        \end{minipage}
    }
    \subfigure[BITR+ICP]{
        \begin{minipage}[b]{0.3\linewidth}
            \centering
            \includegraphics[width=0.9\linewidth]{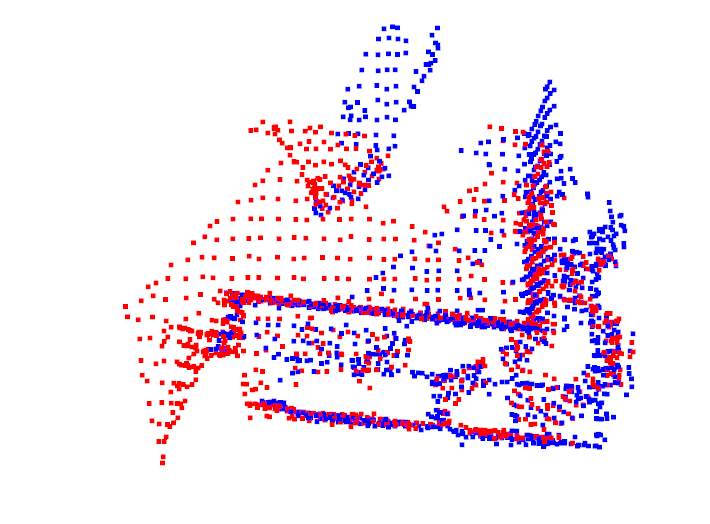} 
            \includegraphics[width=0.9\linewidth]{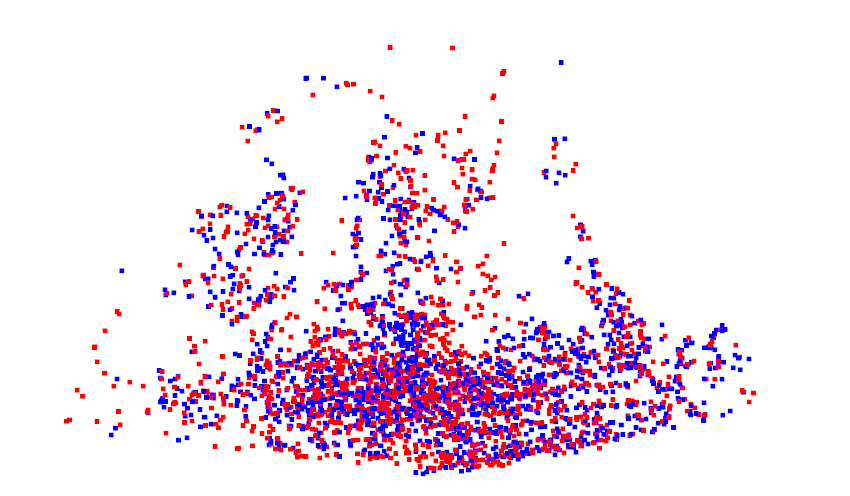}
        \end{minipage}
    }
  \caption{
    An assembly result of BITR on 7Scenes (1-st row) and ASL (2-nd row).
    BITR can produce results that are close to the optimum,
    and a ICP refinement leads to the improved results.
  }
  \label{Assembly-examples-7scenes}
  \end{figure}

\subsection{More results of Sec.~\ref{Sec-vis-mani}}
\label{Sec-vis-mani-app}

All data in this experiment is generated using PyBullet~\cite{coumans2016pybullet},
where the objects in training and test sets are different in shape and position.

Note that we do not report the quantitative results for this experiment because the metric $(\Delta r, \Delta t)$ is ambiguous due to the non-uniqueness of the correct solution.
For example, 
for a correct bowl-placing result, 
the bowl is allowed to rotate horizontally in the plate, 
so the metric $\Delta r$ can be very large even for this correct assembly. 
A suitable metric is the success rate of the manipulation in physical hardware experiments as in~\cite{ryu2024diffusion}, 
but measuring this metric is beyond the scope of this work.
In addition,
the assembly methods such as NSM~\cite{chen2022neural} and LEV~\cite{wu2023leveraging} are not applicable because the canonical pose is not known,
and we do not report the result of any registration method because the correspondence does not exist.
We present a result of BITR on mug-hanging in Fig.~\ref{visual-manipulation-app}.

  \begin{figure}[t!]
    \centering
    \vspace{-0mm}
    \subfigure[Mug-hanging]{
        \begin{minipage}[b]{0.45\linewidth}
            \centering
            \includegraphics[width=0.45\linewidth]{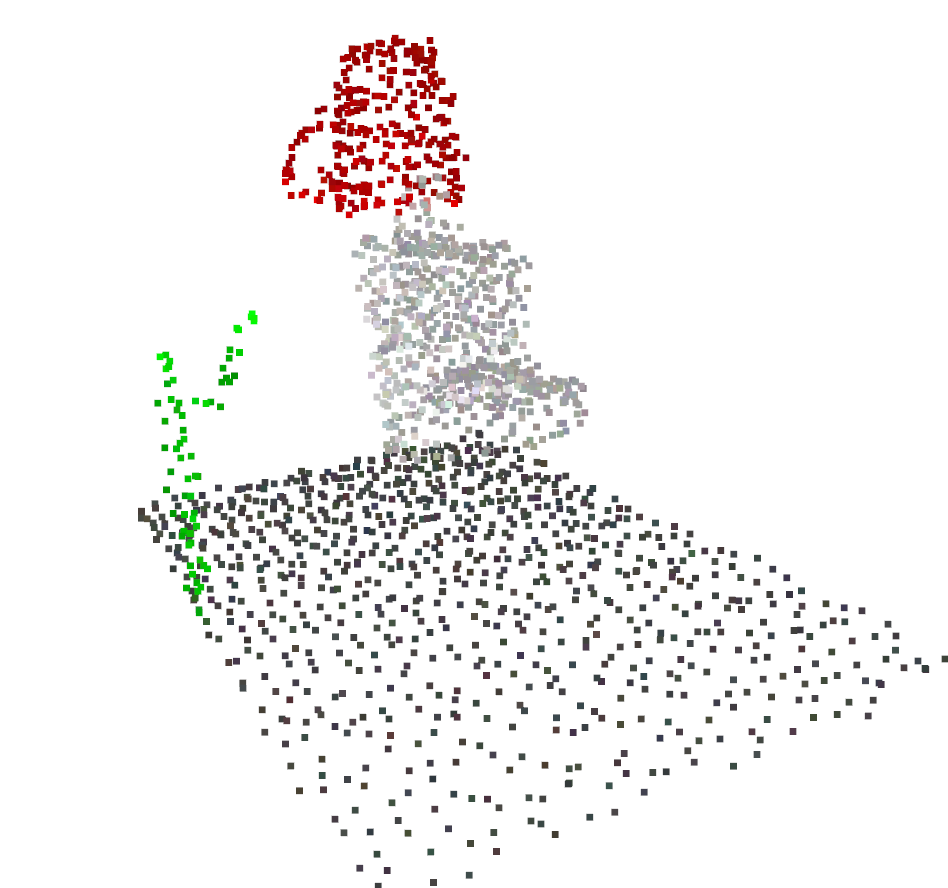}
            \includegraphics[width=0.45\linewidth]{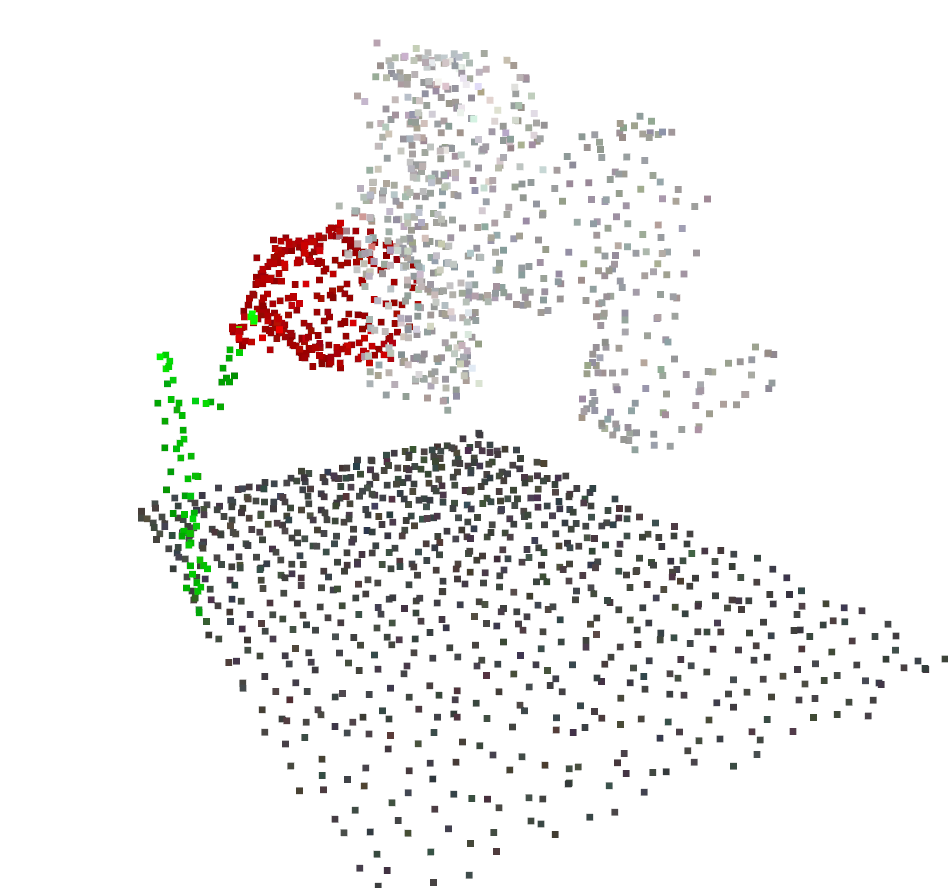}
        \end{minipage}
        \label{fig-mug-hanging}
    }
  \caption{The result of BITR on  mug-hanging.
  }
  \label{visual-manipulation-app}
  \end{figure}

\section{Limitations and future research directions}
\label{appx-discussion}

In our current implement,
we have accelerated most of the layers of BITR using the ``scatter'' function~\citep{scatter}.
However,
BITR is still relatively slow due to the independent computation of convolutional kernels,
\ie, the harmonic function and the independent multiplication of radial and angular component in each degree.
This is reflected by a low GPU utility ratio (about $20\%$).
We expect to see a large speed gain (about $\times 5$) if the above-mentioned computation is implemented using CUDA kernel,
\ie, GPU utility ratio can be close to $100\%$.
On the other hand,
a rotation-based technique was recently introduced to reduce the computation cost of SE(3)-equivariant networks~\cite{passaro2023reducing}.
A promising future direction is to extend this technique to BITR.

To explain the limitation of BITR in handling symmetric PCs,
we consider the following example.
For a pair of PCs $X$ and $Y$,
if there exists a non-identity rigid transformation $g_2$ such that $g_2 Y=Y$,
and $g\in SE(3)$ is a proper transformation that assembles $gX$ to $Y$,
then $g_2g$ is an equally good transformation that gives the same assembly result.
In other words,
the optimal transformation $g$ is non-unique,
which cannot be modelled by BITR.
Actually,
we notice that the result of BITR is undefined in this case. 
Specifically,
let $\tilde{g} = \Phi_B(X, Y)$,
we have
\begin{equation}
    \tilde{g} = \Phi_B(X, Y) = \Phi_B(X, g_2Y) = g_2\Phi_B(X, Y) = g_2\tilde{g}
\end{equation}
due to $SE(3)$-bi-equivariance.
Then we have $g_2=1$,
which contradicts the assumption.

To address this issue,
we plan to extend BITR to a generative model in future research,
\ie, it should assign a likelihood value to each prediction.
For the example considered above,
it should assign equal probability to $g$ and $g_2g$.

Apart from the above-mentioned two limitations,
there are several directions for future investigation.
First,
it is important to generalize BITR to multi-PC assembly tasks where more than $2$ PCs are considered~\citep{gojcic2020learning, wu2023leveraging}.
Second,
we expect the U-BITR model to be useful in self-supervised 3D shape retrieval or detection models,
such as those in image object detection/retrieval~\cite{midtvedt2022single}.


\end{document}